\newcommand{\cmark}{\ding{51}}%
\newcommand{\xmark}{\ding{55}}%
\newcommand{\meth}{"Methods"}
\newcommand{\eulerA}{\ensuremath{\Bar{A}_\mathrm{EF}}}
\newcommand{\eulerB}{\ensuremath{\Bar{B}_\mathrm{EF}}}
\newcommand{\seA}{\ensuremath{\Bar{A}_\mathrm{SE}}}
\newcommand{\seB}{\ensuremath{\Bar{B}_\mathrm{SE}}}
\newcommand{\revision}[1]{#1}
\newtheorem{theorem}{Theorem}[section]
\newtheorem{corollary}{Corollary}[theorem]
\newtheorem{lemma}[theorem]{Lemma}
\newtheorem{prop}{Proposition}
\newcommand{\enameref}[1]{"\nameref{#1}"}
\begin{document}

\title{Advancing Spatio-Temporal Processing in Spiking Neural Networks through Adaptation}
\author[1,2,*]{Maximilian Baronig}
\author[1,2,*]{Romain Ferrand}
\author[3]{Silvester Sabathiel}
\author[1]{Robert Legenstein}
\affil[1]{Institute of Machine Learning and Neural Computation, Graz University of Technology, Graz,
Austria}
\affil[2]{TU Graz - SAL Dependable Embedded Systems Lab, Silicon Austria Labs, Graz, Austria}
\affil[3]{Silicon Austria Labs GmbH, Graz, Austria}
\affil[*]{Equal contribution}
\date{\vspace{-5ex}}

\maketitle
\thispagestyle{fancy}

\begin{abstract}
    \revision{Implementations of spiking neural networks on neuromorphic hardware promise orders of magnitude less power consumption than their non-spiking counterparts.
    The standard neuron model for spike-based computation on such systems has long been the leaky integrate-and-fire (LIF) neuron.
    A computationally light augmentation of the LIF neuron model with an adaptation mechanism has recently been shown to exhibit superior performance on spatio-temporal processing tasks. The root of the superiority of these so-called adaptive LIF neurons however is not well understood.     
    In this article, we thoroughly analyze the dynamical, computational, and learning properties of adaptive LIF neurons and networks thereof. Our investigation reveals significant challenges related to stability and parameterization when employing the conventional Euler-Forward discretization for this class of models. We report a rigorous theoretical and empirical demonstration that these challenges can be effectively addressed by adopting an alternative discretization approach -- the Symplectic Euler method, allowing to improve over state-of-the-art performances on common event-based benchmark datasets.
    Our further analysis of the computational properties of networks of adaptive LIF neurons shows that they are particularly well suited to exploit the spatio-temporal structure of input sequences without any normalization techniques.} 
\end{abstract}

\section{Introduction}\label{sec:intro}

Spiking neural networks (SNNs) have emerged as a viable biologically inspired alternative to artificial neural network (ANN) models \cite{maass1997networks}. In contrast to ANNs, where neurons communicate analog numbers, neurons in SNNs communicate via digital pulses, so-called spikes. This event-based communication resembles the communication of neurons in the brain and enables highly energy efficient implementations in neuromorphic hardware \cite{schemmel2010wafer,furber2014spinnaker, merolla2014million,davies2018loihi,davies2021advancing}. Recent advances in SNN research have shown that SNNs can be trained in a similar manner as ANNs using backpropagation through time (BPTT), leading to highly accurate models \cite{leeTrainingDeepSpiking2016,slayer}.

The dominant spiking neuron model used in SNNs is the leaky integrate and fire (LIF) neuron \cite{gerstner2014neuronal}. The LIF neuron has a single state variable $u(t)$ that represents the membrane potential of a biological neuron. Incoming synaptic currents are integrated over time in a leaky manner on the time scale of tens of milliseconds. Once the membrane potential reaches a threshold $\vartheta$, the membrane potential is reset and the neuron emits a spike (i.e., its output is set to $1$). 
The leaky integration property of the LIF neuron model reproduces the sub-threshold behavior of so-called excitability class 1 neurons in the brain (Fig.~\ref{fig:fig1}a, left) \cite{hodgkinLocalElectricChanges1948,izhikevichNeuralExcitabilitySpiking2000a}.
\begin{figure}[!t]
    \centering
    \includegraphics[width=\textwidth]{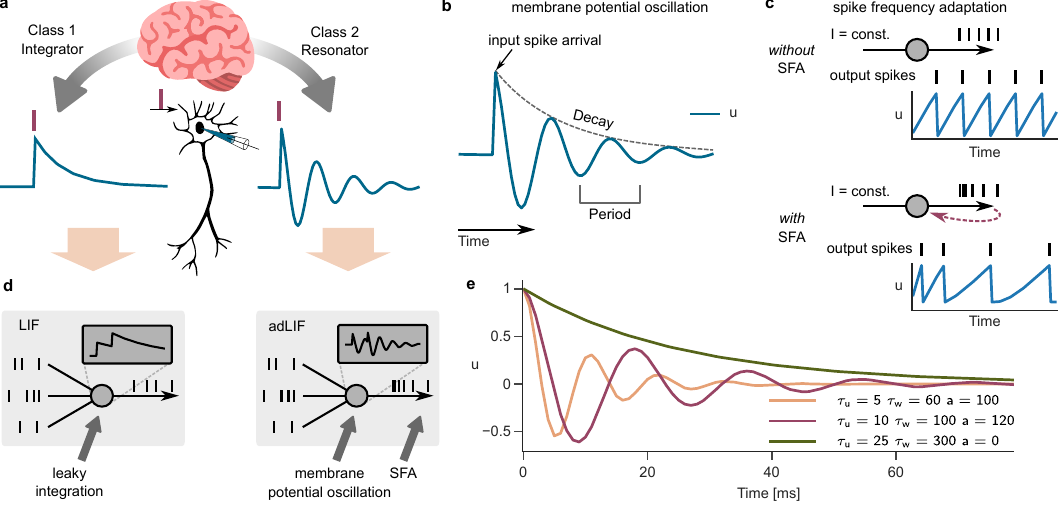}
    \caption{\textbf{The adLIF neuron with membrane potential oscillation and spike-frequency adaptation.} \textbf{a)} Neurons in the brain have been classified into two excitability classes, integrators (class 1) and resonators (class 2). While resonators show membrane potential oscillations, integrators do not. \textbf{b)} Membrane potential oscillation in response to an input pulse. The period $P$ and decay $r$ are sufficient to fully characterize the oscillating spike response. \textbf{c)} Example of a neuron without (top) and with (bottom) spike frequency adaptation (SFA). Dotted arrow illustrates the feed-back of output spikes to the neuron membrane state in the adLIF model. \textbf{d)} Adaptive LIF (adLIF) neurons (see Eq.~\eqref{eq:continuous1}) differ in 2 features from vanilla LIF neurons: membrane potential oscillations and SFA. \textbf{e)} Impulse response functions for different parameterizations of adLIF neurons. For $a=0$ (green), no oscillations occur and the spike response reduces to leaky integration.}
    \label{fig:fig1}
\end{figure}
A second class of neurons called excitability class 2 neurons (Fig.~\ref{fig:fig1}a, right) exhibit more complex dynamics with sub-threshold membrane potential oscillations (Fig.~\ref{fig:fig1}b) and spike frequency adaptation (SFA, Fig.~\ref{fig:fig1}c). 
Such complex dynamics cannot be modelled with the single state variable $u(t)$ of the LIF neuron. Pioneering work has shown that these behaviors can be reproduced by a simple extension of the LIF neuron model that adds a second state variable to the neuron dynamics which interacts with the membrane potential in a --- typically negative --- feedback loop \cite{izhikevich2001resonate}. Neuron models of this type are called adaptive LIF neurons.

With the growing interest in SNNs for neuromorphic systems, researchers have started to train recurrent SNNs (RSNNs) consisting of adaptive LIF neurons with BPTT on spatio-temporal processing tasks. First results were based on neuron models that implement a threshold adaptation mechanism, where the second state variable is a dynamic threshold $\vartheta(t)$ \cite{bellec2018long,salaj2021spike,gangulySpikeFrequencyAdaptation2024a}. Each spike of the neuron leads to an increase of this threshold, which implements the negative feedback loop mentioned above and leads to SFA (Fig.~\ref{fig:fig1}c). 
The performance of these models clearly surpassed those achieved with networks of LIF neurons while being highly efficient on neuromorphic hardware with orders of magnitudes energy savings when compared to implementations on CPUs or graphical processing units (GPUs) \cite{davies2021advancing}.

While threshold adaptation implements SFA, the resulting neuron model still performs a leaky integration of input currents and does not exhibit the typical sub-threshold membrane potential oscillations of class 2 neurons. 
Hence, more recent work considered networks of neurons with a form of adaptation often referred to as sub-threshold or current-based adaptation. The second state variable $w(t)$ is interpreted as negative adaptation current that is increased not only by neuron spikes but also by the sub-threshold membrane potential itself. This sub-threshold feedback leads to complex oscillatory membrane potential dynamics (Fig.~\ref{fig:fig1}b).
Interestingly, simulation studies have shown that SNNs equipped with sub-threshold adaptation achieve significantly better performances than SNNs with threshold adaptation \cite{bittar2022surrogate,gangulySpikeFrequencyAdaptation2024a,higuchi2024balanced}.  

Although the converging evidence suggests that networks of adaptive LIF neurons are superior to LIF networks for neuromorphic applications, there are still many questions open. First, to achieve top performance, usually all neuron parameters are trained together with the synaptic weights. Changes of the neuron parameters however can quickly lead to unstable models which disrupts training. To avoid instabilities, parameter bounds have to be defined and fine-tuned. If these bounds are too wide, the network can become unstable, if they are too narrow, one cannot utilize the full computational expressivity. 

\revision{In this work, we} show that this problem is not inherent to the neuron model but rather caused by the standard discrete-time formulation of the continuous neuron dynamics which is based on the Euler-Forward discretization method. \revision{Despite mere stability issues, we identified a plethora of drawbacks arising from the application of the widely used Euler-Forward discretization. These include, inter alia, unintended interdependencies between neuron parameters, deviations of the discrete model dynamics from its continuous counterpart, limitations in neuron expressibility, strong dependence between the discretization time step and the neuron dynamics, non-trivial divergence boundaries. Our thorough theoretical analysis reveals that the alternative, evenly simple Symplectic-Euler discretization method remarkably alleviates the drawbacks of the Euler-Forward method almost entirely, without additional computational cost or implementation complexity. While this discretization method could in principle be applied to an entire family of multi-dimensional neuron models, we mainly focus our theoretical and empirical analyses on a specific adaptive neuron model that recently gained traction.} Using this insight, we demonstrate the power of adaptation by showing that our improved adaptive RSNNs outperform the state-of-the-art on spiking speech recognition datasets as well as an ECG dataset. We then show that the superiority of adaptive RSNNs is not limited to classification tasks but extends to the prediction and generation of complex time series.
Second, there is a lack of understanding why sub-threshold adaptation is so powerful in RSNNs. We thoroughly analyze the computational dynamics in single adaptive LIF neurons, as well as in networks of such neurons. Our analysis suggests that adaptive LIF neurons are especially capable of detecting temporal changes in input spike density, while being robust to shifts of total spike rate. Hence, adaptive RSNNs are well suited to analyze the temporal properties of input sequences. 
Third, high-performance SNNs are usually trained using normalization techniques such as batch normalization or batch normalization through time \cite{guoMembranePotentialBatch2023,jiangTABTemporalAccumulated2023,zhengGoingDeeperDirectlyTrained2021,vicente-solaSpikingNeuralNetworks2023}.
These methods however complicate the training process and the implementation of networks on neuromorphic hardware. 
We show that adaptation has a previously unrecognized benefit on network optimization. Since adaptation inherently stabilizes network activity, we hypothesized that explicit normalization is not necessary in adaptive RSNNs. In fact, all our results were obtained without explicit normalization techniques. We test this hypothesis and show that in contrast to LIF networks, networks of adaptive LIF neurons can tolerate substantial shifts in the mean input strength as well as substantial levels of background noise even when these perturbations were not observed during training.

\section*{Results}\label{sec:results}
\subsection*{Adaptive LIF neurons}
\label{subsec:adlif_neurons}
The leaky integrate-and-fire (LIF) neuron model \cite{gerstner2014neuronal} evolved as the gold standard for spiking neural networks due to its simplicity and suitability for low-power neuromorphic implementation \cite{rathi2023exploring}. The continuous-time equation for the membrane potential of the LIF neuron at time $t$ is given by 
\begin{equation}\label{eq:LIF}
    \tau_u \dot{u}  =-u(t)+I(t),
\end{equation}
where $\tau_u$ is the membrane time constant, and $I(t)=\sum_{i}\theta_i x_i(t)$ the input current composed of the sum of neuron inputs $x_i(t)$ scaled by the corresponding synaptic weights $\theta_i$. A dot above a variable denotes its derivative with respect to time. 
If the membrane potential $u(t)$ crosses the spike threshold $\vartheta$ from below, a spike is emitted and $u$ is reset to the reset potential. 
In the absence of input, $u(t)$ decays exponentially to zero. The LIF neuron equation models so-called integrating class 1 neurons in the brain (Fig.~\ref{fig:fig1}a), which are integrating incoming currents in a leaky manner. 
The simple first-order dynamics however does not allow the LIF model to account for another class of neurons frequently occurring in the brain: resonating/oscillating class 2 neurons (Fig.~\ref{fig:fig1}a,b). In contrast to integrators, such neurons exhibit oscillatory behavior in response to stimulation, giving rise to interesting properties entirely neglected by LIF neurons. Such oscillatory behavior is often modelled by adding a second time-varying variable --- the adaptation current $w(t)$ --- to the neuron state \cite{izhikevich2001resonate,brunelFiringrateResonanceGeneralized2003,bittar2022surrogate,deckers2024co}. 
The resulting neuron model, which we refer to as adaptive leaky integrate-and-fire (adLIF) neuron, has significant advantages over LIF neurons in terms of feature detection capabilities and gradient propagation properties, as we show in the next few sections. 
The adLIF model is described in terms of two coupled differential equations 
\begin{align}
    \tau_u \dot{u} & =-u(t)+I(t)-w(t)  \label{eq:continuous1} \\
    \tau_w \dot{w} &= -w(t)+au(t) + bz(t), \label{eq:continuous2}
\end{align}
where $\tau_w$ is the adaptation time constant and $a$ and $b$ are adaptation parameters, defining the behavior of the neuron. When comparing the LIF equation \eqref{eq:LIF} with equation \eqref{eq:continuous1}, we see that the latter resembles the LIF dynamics where the adaptation current $w(t)$ is subtracted, with its dynamics defined in equation \eqref{eq:continuous2}. 

The parameter $a \in  \mathbb{R}$ scales the coupling of the membrane potential $u(t)$ with the adaptation current $w(t)$. The negative feedback loop between $u(t)$ and $w(t)$ defined by Equations \eqref{eq:continuous1} and \eqref{eq:continuous2} leads to oscillations of the membrane potential for large enough $a$, see Fig.~\ref{fig:fig1}b. The oscillation can be characterized by the decay rate $r$ and the period $P=\frac{1}{f}$ given by the inverse of the intrinsic frequency $f$. As we discuss later in the manuscript, $f$ characterizes the frequency tuning of the neuron, whereas the decay rate $r$ is an indicator of its stability and time scale. 
\begin{figure}[!t]
    \centering
    \includegraphics[width=\textwidth]{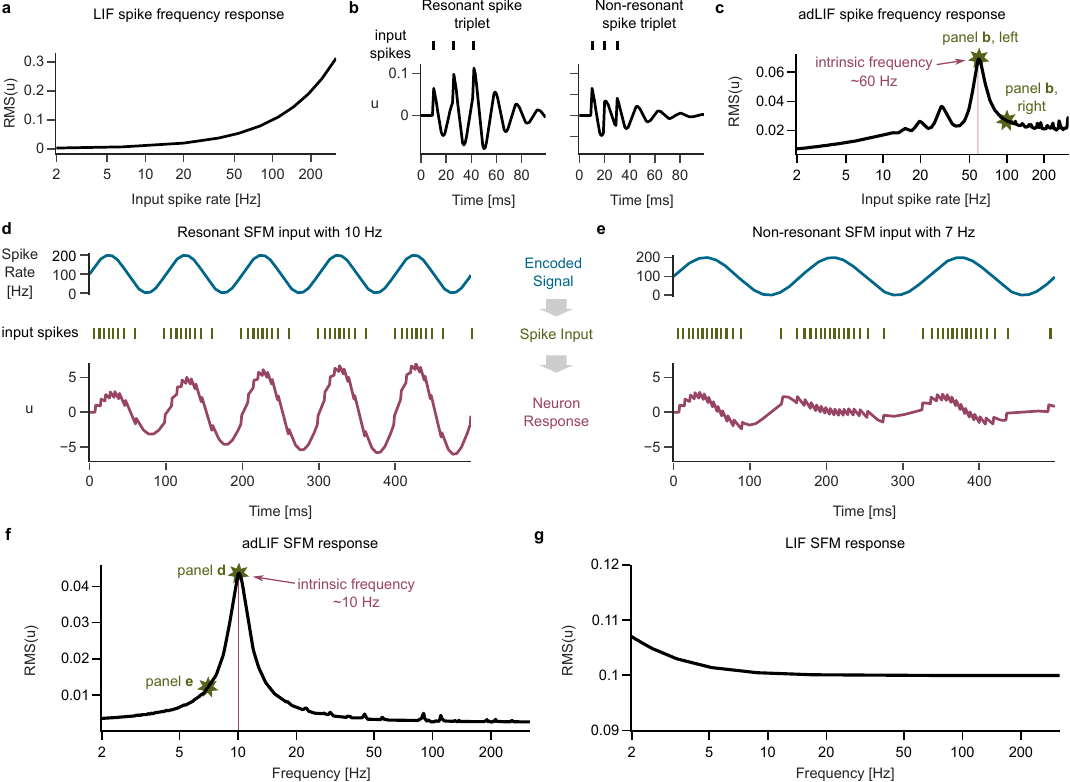}
    \caption{\textbf{The adLIF neuron model shows frequency-selective resonating behavior.} \textbf{a)} Voltage response (root mean squared membrane potential over $10$ seconds) of a LIF neuron in response to tonic spike input of different rates. \textbf{b)} Membrane potential response of an adLIF neuron with intrinsic frequency of $f\approx60$ Hz to an input spike triplet at $60$ Hz (left) and $100$ Hz (right). \textbf{c)} Voltage response of the same adLIF neuron as in panels b and c for tonic spiking input at different rates. \textbf{d)}  A $10$ Hz sinusoidal input signal (top) is encoded as an input spike train (middle) through spike frequency modulation (SFM), see main text for details. Membrane potential response of an adLIF neuron with intrinsic frequency $f\approx10$ Hz (bottom). \textbf{e)} Same as panel e, but for a sinusoidal input at $7$ Hz. \textbf{f)} Voltage response of an adLIF neuron to SFM-encoded 
    sinusoidal input at various frequencies.  \textbf{g)} Same as panel f, but for a LIF neuron. See \meth~for parameters and input generation.}
    \label{fig:fig2}
\end{figure}

The parameter $b \ge 0$ weights the feed-back from the neuron's output spike $z(t)$ onto the adaptation variable $w(t)$. Hence, each spike has an inhibitory effect on the membrane potential, which leads to spike frequency adaptation (SFA) \citep{gangulySpikeFrequencyAdaptation2024a}, see Fig.~\ref{fig:fig1}c. We refer to this auto-feed-back governed by parameter $b$ as spike-triggered adaptation in the following.
SFA has also been implemented directly using an adaptive firing threshold that is increased with every output spike \cite{yinAccurateEfficientTimedomain2021,lsnn_igi}. In contrast to the adLIF model, these models do not exhibit membrane potential oscillations. 

The adLIF model combines both membrane potential oscillations and SFA in one single neuron model (Fig.~\ref{fig:fig1}d).
Depending on the parameters, adLIF neurons can exhibit oscillations of diverse frequencies and decay rates (Fig.~\ref{fig:fig1}e), and are equivalent to LIF neurons for $a,b=0$, where neither oscillations nor spike-triggered adaptation occur. A reduced variant of the adLIF neuron is given by the resonate-and-fire  neuron \cite{izhikevich2001resonate}.

Originally developed to efficiently replicate firing patterns of biological neurons, the adLIF model recently gained attention due to significant performance gains over vanilla LIF neurons in several benchmark tasks, despite its little computational overhead \citep{bittar2022surrogate, gangulySpikeFrequencyAdaptation2024a,deckers2024co}. In particular, gradient-based training of networks of adLIF neurons on spatio-temporal processing tasks appears to synergize well with oscillatory dynamics. However, these empirical findings are so far not accompanied by a good understanding of the reasons for this superiority.

When comparing the responses of the LIF and adLIF neuron, an important computational consequence of membrane potential oscillations has been noted: In contrast to the LIF neuron, which responds with higher amplitude of $u$ to higher input spike frequency (Fig.~\ref{fig:fig2}a), the adLIF neuron is most strongly excited if the frequency of input spikes matches the intrinsic frequency $f$ of the neuron (Fig.~\ref{fig:fig2}b-c, see also \cite{izhikevich2001resonate,brunelFiringrateResonanceGeneralized2003,higuchi2024balanced}). To demonstrate this resonance phenomenon, we show the membrane potential of an adLIF neuron with intrinsic frequency $f=60$ Hz for an input spike triplet exactly at this intrinsic frequency $f$ (Fig.~\ref{fig:fig2}b, left), compared to a spike triplet of higher rate (Fig.~\ref{fig:fig2}b, right). The resulting amplitude of the membrane potential $u$ is higher in the former case, indicating resonance. 
Fig.~\ref{fig:fig2}c shows that the neuron exhibits a frequency selectivity specifically for its intrinsic frequency $f$. 

We took this analysis a step further and asked whether this resonance could account for frequencies in the input spike train that are not directly encoded by spike rate, but rather by slow changes of the spike rate over time, a coding scheme previously termed spike frequency modulation (SFM) \cite{sfmwang}. As a guiding example, we encoded a slow-varying sinusoidal signal as a spike train, where the magnitude of the signal at a certain time is given by the local spike rate, shown in Fig.~\ref{fig:fig2}d. The spike rate thereby varied between $0$ Hz and $200$ Hz, whereas the underlying, encoded sinus signal oscillated with a constant frequency of $10$ Hz. Again, we see increased response of the membrane potential $u$ over time in the case of the $10$ Hz input compared to a slower $7$ Hz sinus signal (Fig.~\ref{fig:fig2}d,e), due to resonance with the adLIF neuron, see also Fig.~\ref{fig:fig2}f. In contrast, the corresponding membrane voltage response amplitude of a LIF neuron is almost indifferent to the intrinsic frequency of the underlying sinus input, see Fig.~\ref{fig:fig2}g.
This shows that in contrast to the LIF neuron, the adLIF neuron model is sensitive to the longer-term temporal structure, i.e. variation of the input signal. In Section \enameref{subsec:inductive_bias}, we highlight the importance of this frequency-dependence of neuron responses as a key ingredient for the powerful feature detection capabilities of networks of adLIF neurons.
\subsection*{The Symplectic-Euler discretized adLIF neuron}
In the previous section, we defined the LIF and adLIF neuron models via continuous-time, ordinary differential equations.
In practice, it is however standard to discretize the continuous-time dynamics of the spiking neuron model. This allows not only to use powerful auto-differentiation capabilities of machine learning software packages such as TensorFlow \citep{tensorflow2015-whitepaper} or PyTorch\citep{anselPyTorchFasterMachine2024}, but also for implementation of such neuron models in discrete-time operating neuromorphic hardware \cite{orchardEfficientNeuromorphicSignal2021}. Discretization of the LIF neuron model Eq.~\eqref{eq:LIF} is straight-forward.
In contrast, for the adLIF model, the interdependency of the two state variables during a discrete time step $\Delta t$ cannot be taken into account exactly in a simple manner (the exact solution involves a matrix exponential). 
Nevertheless, for efficient simulation and hardware implementations, simple update equations are needed. Therefore, approximate discrete update equations for the membrane potential $u$ and the adaptation current $w$ are usually obtained by the Euler-Forward method \cite{bittar2022surrogate}. In the following we analyze discretization methods for adLIF neurons through the lens of dynamical systems analysis. We find that the Euler-Forward method is problematic, and propose the utilization of a more stable alternative discretization method. 

A common approach to study dynamical systems is through the state-space representation, which recently gained popularity in the field of deep learning \cite{guCombiningRecurrentConvolutional2021,guMambaLinearTimeSequence2023a}. Re-formulation of spiking neuron models in a canonical state-space representation provides a convenient unified way to study their dynamical properties. The continuous-time equations of the adLIF neuron Eq.~\eqref{eq:continuous1}, \eqref{eq:continuous2} can be re-written in such a state-space representation as a 2-dimensional linear time-invariant (LTI) system with state vector $\bm{s}$ as 
\begin{align}
\dot {\bm{s}}(t) = 
    \begin{pmatrix}
    \dot{u}(t) \\
    \dot{w}(t)
    \end{pmatrix} &= A \bm{s}(t) + B  \bm{x}(t) \nonumber \\
    &= \underbrace{\begin{pmatrix}
     -\frac{1}{\tau_u} & -\frac{1}{\tau_u}\\
    \frac{a}{\tau_w} & -\frac{1}{\tau_w}
    \end{pmatrix}}_{A}
    \begin{pmatrix}
    u(t) \\
    w(t)
    \end{pmatrix} + \underbrace{
    \begin{pmatrix}
    \frac{1}{\tau_u} & 0\\
    0 & \frac{b}{\tau_w} \\
    \end{pmatrix}}_{B} \begin{pmatrix}
        I(t) \\
        z(t)
    \end{pmatrix} ,
    \label{eq:continuous_lti}
\end{align}
with system matrix $A$ and input matrix $B$. This equation only describes the sub-threshold dynamics of the neuron (i.e., it holds as long as the threshold is not reached). The reset can be accounted for by the threshold condition: When the voltage crosses the firing threshold $\vartheta$, an output spike is elicited and the neuron is reset. 
The goal of discretization is to obtain discrete-time update equations of the form 
\begin{equation}
\label{eq:discrete_lds}
    \bm{s}[k] = 
    \begin{pmatrix}
    u[k] \\
    w[k]
    \end{pmatrix} = \Bar{A} \bm{s}[k-1] + \Bar{B}  \bm{x}[k],
\end{equation}
where $f[k]$ denotes the value of state variable $f$ at discrete time step $k$, i.e., $f[k] \equiv f(\Delta t k)$ for discrete time increment $\Delta t$ and integer-valued $k>0$. Here, $\Bar{A}$ and $\Bar{B}$ denote the state and input matrix of the discrete time system respectively. In the SNN literature, the most commonly used approach to obtain the discrete approximation to the continuous system from Eq.~\eqref{eq:continuous_lti} is the Euler-Forward method \cite{bittar2022surrogate,deckers2024co,higuchi2024balanced}, which results in update equations
\begin{subequations}
\begin{align}
\label{eq:euler_u}
    \hat{u}[k] &= \alpha u[k-1] + (1-\alpha) \left(-w[k-1] + I[k]\right) \\
    w[k] &= \beta w[k-1] + (1-\beta) \left(a u[k-1] + b S[k]\right),
\end{align}
\end{subequations}
where $\hat{u}$ denotes the membrane potential before the reset is applied, $\alpha = 1-\frac{\Delta t}{\tau_u}$, and $\beta = 1-\frac{\Delta t}{\tau_w}$.  The spike output of the neuron is given by
\begin{equation}
    S[k] = \begin{cases}
        1 & \text{if } \hat{u}[k] > u_\mathrm{th} \\
        0 & \text{otherwise.}
    \end{cases}
\end{equation}
Finally, $u[k]$ is obtained by applying the reset to $\hat{u}$ via
\begin{equation}
\label{eq:reset}
    u[k] = \hat{u}[k] \cdot (1-S[k]).
\end{equation}
This Euler-Forward discretization yields the discrete state-space matrices
\begin{equation}
    \eulerA =  \begin{pmatrix}
     \alpha & -(1-\alpha)\\
    a(1-\beta) & \beta
    \end{pmatrix} \hspace{30pt}
    \eulerB =  \begin{pmatrix}
    (1-\alpha) & 0\\
    0 & b(1-\beta) \\
    \end{pmatrix}
    \label{eq:euler_lti}
\end{equation}
 for $\Bar{A}$ and $\Bar{B}$ in Eq.~\eqref{eq:discrete_lds}. In practice (see for example \cite{bittar2022surrogate}), the coefficients $\alpha$ and $\beta$ are often replaced by exponential decay terms $\alpha = \exp\left(-\frac{\Delta t}{\tau_{u}}\right)$ and $\beta = \exp\left(-\frac{\Delta t}{\tau_{w}}\right)$ akin to the LIF discretization, as it is exact in the latter case. However, for adLIF neurons, the Euler-Forward approximation is quite imprecise which can quickly result in unstable and diverging behavior of the system, as we will show below. A better approximation is given by the bilinear discretization method, a standard method also used in state space models \cite{guCombiningRecurrentConvolutional2021}, which is however computationally more demanding. An alternative is the Symplectic-Euler (SE) method \cite{geometric_numerical_integration} that has previously been used in non-spiking oscillatory systems \cite{effenbergerFunctionalRoleOscillatory2023b}.
 We found that the Symplectic-Euler (SE) discretization provides major benefits in terms of stability, expressivity, and trainability of the adLIF neuron, while being computationally as efficient as Euler-Forward. The SE method has been shown to preserve the energy in Hamiltonian systems, a desirable property of a discretization of such systems \cite{geometric_numerical_integration}. As we show below, the improved stability of the SE method still applies to the adLIF neuron model, even though it is non-Hamiltonian.
 The SE method is similar to the Euler-Forward method, with the only difference that one computes the state variable $w[k]$ from $u[k]$ instead of $u[k-1]$, resulting in the discrete dynamics 
\begin{subequations}
\begin{align}
    \hat{u}[k] &= \alpha u[k-1] + (1-\alpha) \left(-w[k-1] + I[k]\right) \label{eq:u_improved_update} \\
    w[k] &= \beta w[k-1] + (1-\beta) (a u[k] + b S[k]).
    \label{eq:w_improved_update}
\end{align}
\end{subequations}
We refer to this neuron model as the SE-adLIF model in order to distinguish it from the Euler-Forward discretized model.
 Note that the reset mechanism from Eq.~\eqref{eq:reset} is applied to obtain $u[k]$ from $\hat{u}[k]$ before computing $w[k]$. While it is also possible to apply the reset after computing $w[k]$, we found that the above described way yields the best performance. 
For the sub-threshold dynamics, this leads to update matrices (see Section \enameref{subsec:meth_update_matrices_for_se} in \meth)
\begin{equation}
    \seA =  \begin{pmatrix}
     \alpha & -(1-\alpha)\\
    a(1-\beta)\alpha & \quad \beta - a(1 - \beta)(1 - \alpha)\\
    \end{pmatrix} \hspace{30pt}
    \seB =  \begin{pmatrix}
    (1-\alpha) & 0\\
    0 & b(1-\beta) \\
    \end{pmatrix}
    \label{eq:se_lti}
\end{equation}
for the discrete state-space formulation given by Eq.~\eqref{eq:discrete_lds}. 
\subsection*{Stability analysis of discretized adLIF models}
\label{subsec:stability}
\revision{A desirable characteristic of a discretization method is its ability to maintain a close alignment between the discretized system and the continuous ground-truth. In contrast to SE, the EF discretization exhibits a pronounced dependence of this alignment on the discretization time step $\Delta t$, thereby reducing its robustness.
This is visualized in Fig.~\ref{fig:stability}a, where we discretized an adLIF neuron with the EF method (left) and the SE method (right) using $3$ different discretization time steps $\Delta t$. We observed that the EF-discretized neuron clearly diverges for larger values of $\Delta t$, in this example even for $\Delta t = 1$ ms. In contrast, the same neuron discretized with the Symplectic Euler method is robust to the choice of $\Delta t$. The divergence of the neuron can be quantified by its decay rate $r$, which gives the exponential decay of the envelope of a neuron's membrane potential $u(t)$ (see also Fig.~\ref{fig:fig1}b). For $r<1$, the neuron stably decays to a resting-state equilibrium. However, if this decay rate exceeds $1$, the neuron becomes unstable and its membrane potential grows indefinitely, as observable for the EF-adLIF neuron with $\Delta t = 1$ in Fig.~\ref{fig:stability}a. When comparing the relationship of the decay rate $r$ with respect to discretization time step $\Delta t$, as visualized in Fig.~\ref{fig:stability}b, the favorable adherence of the SE-discretized adLIF to the continuous model is evident. The SE-adLIF decay rate is independent of $\Delta t$ and evaluates to $r\approx 0.972$, which is the decay rate of the continuous model. For the EF discretization in contrast, $r$ grows along with increasing $\Delta t$, resulting in discretized neurons exceeding the stability boundary at $r=1$. As the computational cost of training SNNs via BPTT increases with smaller discretization time steps due to longer sequence lengths, the SE discretization is clearly favorable over EF, since it ensures stability and adherence to the continuous ground truth when $\Delta t$ is large.} 
\revision{The adherence of SE-adLIF to the continuous adLIF model is not limited to its robustness to the choice of $\Delta t$. For a given $\Delta t = 1$ ms, SE-adLIF follows the characteristics of the continuous model with respect to its parameters $\tau_u$, $\tau_w$, and $a$ much closer. While time constants $\tau_u$ and $\tau_w$ affect the decay rate $r$ of the adLIF neuron, parameter $a$ determines the frequency of oscillation. }
Since the continuous-time adLIF neuron model Eq.~\eqref{eq:continuous1} is inherently stable for $a\geq -1$ (see Section \enameref{subsubsec:continuous_stability_bounds} in \meth~for proof), it is a desirable property of a discretization method to preserve this stability for all possible parameterizations. 
This is shown empirically in Fig.~\ref{fig:stability}c. We instantiated $1{,}000$ different neurons for both discretization methods, Euler-Forward and SE, as well as the continuous model, in a grid-like manner over a reasonable parameter range and plot their calculated frequencies and decay rates in Fig.~\ref{fig:stability}c. While the continuous system  is stable for all considered parameter combinations (middle panel), the Euler-Forward approximation (left panel) is unstable for many parameterizations (decay rate $r>1$). In contrast, for the SE discretization (right panel), all parameter combinations resulted in stable neuron dynamics ($r<1$). These empirical results show that the SE discretization more closely follows the stability properties of the continuous model, whereas the Euler-Forward method deviates drastically from both. How can this discrepancy be explained? 

When analyzing the discretized neurons, one has to calculate $f$ and $r$ directly from the discrete system by calculating the eigenvalues $\lambda_{1,2}$ of the state transition matrix $\Bar{A}$. This allows to study the behavior and the stability of the adLIF neuron model for different discretizations. Two cases have to be differentiated: If the eigenvalues are complex, the membrane potential exhibits oscillations, whereas if they are real, no oscillations occur and the neuron behaves similar to a LIF neuron. In the complex case, we can write the eigenvalues in polar form as $\lambda_{1,2}=r e^{\pm j\phi}$, 
where $j$ denotes the imaginary unit.
Hence, the eigenvalues are complex conjugates, the decay rate $r$ is given by their modulus (see Fig.~\ref{fig:stability}d) and $\phi$ is obtained as the argument of $\lambda_1$ ($\text{arg}(\lambda_2) = -\phi$). Intuitively, the angle $\phi$ is the rotation of the neuron state with each time step $\Delta t$ in radians, and hence determines the frequency of the oscillation. One thus obtains the intrinsic frequency $f$ in Hertz as $f=\frac{\phi}{2\pi\Delta t}$.
In the case of real eigenvalues, $r$ is given by the magnitude of the largest eigenvalue. AdLIF neurons can thereby represent underdamped (complex eigenvalues), critically damped (equal real eigenvalues), and overdamped (non-equal real eigenvalues) systems via different parameterizations. Note, that only in the underdamped case the neuron can oscillate.

\begin{figure}[!t]
    \centering
    \includegraphics[width=\textwidth]{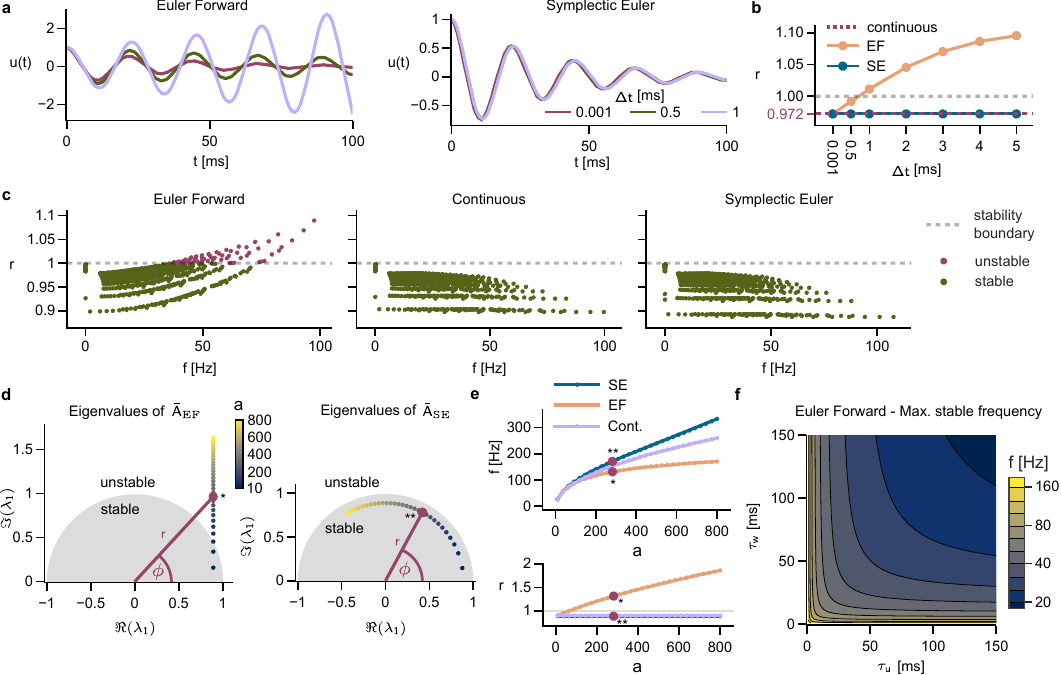}
    \caption{\textbf{Stability of adLIF discretizations.}  \revision{\textbf{a)} Membrane potential $u(t)$ over time for an EF-adLIF (left) and a SE-adLIF (right) neuron for different discretization time steps $\Delta t \in \{0.001,0.5,1\}$. Both neurons have the same parameters ($\tau_u=25$ ms, $\tau_w=60$ ms, $a=120$).} 
    \revision{\textbf{b)} Relationship between the decay rate $r$ and discretization time step $\Delta t$ for adLIF models with different discretizations, EF and SE. All decay rates are calculated with respect to $1$ ms, a decay rate of $r=0.9$ hence represents a decrease in magnitude of $10\%$ every $1$ ms. The decay rate ($r=0.972$) of the equivalently parameterized continuous model is highlighted. Same neuron parameters as in panel a.}
    \textbf{c)} Intrinsic frequency $f$ and per-timestep decay rate $r$ of $1{,}000$ different parameterizations of adLIF neurons for Euler-Forward discretization (left), SE (right),  and the continuous model (middle). The horizontal dotted line at $r=1.0$ marks the stability bound. Instances above this line diverge due to exponential growth.  Parameter ranges are uniformly distributed over the intervals $a \in [0,120]$, $\tau_u \in [5,25]$ ms and $\tau_w \in [60,300]$ ms. 
    \textbf{d)} Eigenvalues of $\eulerA$ (left) and $\seA$ (right) plotted in the complex plane for fixed $\tau_u=25$ ms, $\tau_w=60$ ms and varying $a\in[10,800]$. Decay rate $r$ as modulus of the eigenvalue $\lambda_1$ and angle $\phi$ as argument of $\lambda_1$ are shown for $a=282$, marked with * and ** for EF and SE respectively. The grey half-circle denotes the stable region of $r \le 1$. %Eigenvalues inside or on this half-circle belong to stable neuron parameterizations ($r\leq1$) and all others to unstable ($r>1$). 
    \textbf{e)} Relationship of parameter $a$  to intrinsic frequency $f$ (top) and decay rate $r$ (bottom) for the same $\tau_u$ and $\tau_w$ as in panel d. Points with * and ** denote the corresponding eigenvalues from panel b. Recall the linear relationship $f=\frac{\phi}{2\pi\Delta t}$ between angle $\phi$ and $f$ of the discrete models. Horizontal grey line in bottom panel denotes stability boundary of $r=1$. Values for $r$ of continuous model ($r=0.897$) and SE-adLIF ($r=0.887$) are constant w.r.t.~$a$ (SE-adLIF values for $r$ not visible due to near-perfect fit to the continuous model).
    \textbf{f)} Maximum admissible frequency for stable dynamics for Euler-Forward discretization with $\Delta t = 1$ ms over different values of $\tau_u$ and $\tau_w$, where $a$ is set to the maximum stable value $a_\mathrm{max}$ (see main text and Section \enameref{subsubsec:meth_frequency_ranges} in \meth).}
    \label{fig:stability}
\end{figure}

As described above, the eigenvalues $\lambda_{1,2}$ of state transition matrices $\eulerA$ and $\seA$ are determining the stability of the discrete neurons. We can directly observe the origin of instability for the EF-adLIF by plotting eigenvalues for different neuron parameters in the complex plane.
In Fig.~\ref{fig:stability}d, we show some eigenvalues of $\eulerA$ and $\seA$ in the case of fixed time constants $\tau_u$ and $\tau_w$ and varying parameter $a$. In the complex plane, the stability boundary appears as circle, separating the stable ($r<1$, grey area) from the unstable region ($r>1$).
Our analysis in Section \enameref{meth:euler_forward_stab} in \meth~shows that for the Euler-Forward method, for fixed time constants $\tau_u$ and $\tau_w$, the real part $\Re(\lambda_{1,2})$ of these eigenvalues is constant and strictly positive with respect to $a$, such that the eigenvalues are aligned along a vertical line in the right half-plane of the complex plane (see Fig.~\ref{fig:stability}d, left panel). As $a$ increases, the imaginary part increases and so does the decay rate $r$.
\revision{As already mentioned, in the continuous adLIF model, parameter $a$ only affects the frequency of oscillation, but not the decay rate. The SE-adLIF model adheres to this property, since the modulus of the eigenvalues does not change with respect to $a$. For EF-adLIF however, parameter $a$ exhibits an undesired side-effect on the modulus. }
Hence, for the Euler-Forward discretized neuron, the eigenvalues overshoot the stability boundary $r=1$ for increasing $a$. This leads to a drastically reduced range of the angle $\phi$ and therefore a reduced range of the intrinsic frequency $f$ where the neuron is stable.
In contrast, for the SE discretized model, the parameter $a$ controls only the angle $\phi$ of the eigenvalues (see Section \enameref{meth:se_stab} in \meth) and hence the intrinsic frequency $f$, but not the decay rate $r$. The decay rate is given by $r=\sqrt{\alpha \beta}$ (see Eq.~\eqref{eq:se_r_bound_complex}) and is hence guaranteed to stay within the stability bound $r<1$ for all $\tau_w>0$ and $\tau_u>0$, see Fig.~\ref{fig:stability}d, right panel and Fig.~\ref{fig:stability}e.

We analytically calculated stability bounds for both the Euler-Forward and SE discretization, see \meth~Sections \enameref{meth:euler_forward_stab} and \ref{meth:se_stab}. For each given tuple of time constants $\tau_u$ and $\tau_w$ we calculated a corresponding $a_\mathrm{max}$, that is, the maximum value of the parameter $a$ for which the model is still stable.  
This analysis shows that the SE discretization allows the neuron to utilize the full frequency bandwidth up to the Nyquist frequency at $\frac{1}{2 \Delta t}$, at which aliasing occurs. Since we used a discretization time step of $\Delta t = 1$ ms for Fig.~\ref{fig:stability}, the Nyquist frequency is $500$ Hz. Theorem \ref{theo:se_stab} below summarizes the full frequency coverage of SE-adLIF and the stability within this frequency range (see  \meth, Section \enameref{meth:proof_theo_se_stab} for a proof).
\begin{theorem}
\label{theo:se_stab}
Let $(\tau_u,\tau_w,a)$ be the parameters of an SE-adLIF neuron according to Eq.~\eqref{eq:se_lti}. For any frequency $f \in (0, f_N]$ where $f_N = \frac{1}{2 \Delta t}$ is the Nyquist frequency, and for any $\tau_u, \tau_w > 0$, there exists a unique parameter $a$ such that the neuron has intrinsic frequency $f$. For any such parameter combination, the neuron in the sub-threshold regime is asymptotically stable with decay rate $r = \sqrt{\alpha\beta} < 1$ where $\alpha = e^{-\frac{\Delta t}{\tau_u}}$ and $\beta = e^{-\frac{\Delta t}{\tau_w}}$.
\end{theorem}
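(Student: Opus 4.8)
The plan is to read the intrinsic frequency and decay rate directly off the eigenvalues of the discrete state matrix $\seA$ in Eq.~\eqref{eq:se_lti}, exploiting one structural fact that makes the Symplectic-Euler scheme special. First I would form the characteristic polynomial $\lambda^2 - \mathrm{tr}(\seA)\,\lambda + \det(\seA)$. A short calculation gives the trace $\mathrm{tr}(\seA) = \alpha + \beta - a(1-\alpha)(1-\beta)$, and --- this is the crucial point --- when expanding the determinant the two $a$-dependent terms $-\alpha\,a(1-\alpha)(1-\beta)$ and $+\alpha\,a(1-\alpha)(1-\beta)$ cancel exactly, leaving $\det(\seA) = \alpha\beta$ independent of $a$. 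By contrast, the same computation for $\eulerA$ yields $\det = \alpha\beta + a(1-\alpha)(1-\beta)$, which is the root cause of the Euler-Forward instability.

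With the determinant in hand the stability claim is almost immediate. Writing the eigenvalues in polar form $\lambda_{1,2} = r e^{\pm j\phi}$ in the underdamped regime, the product of the eigenvalues equals the determinant, so $r^2 = \lambda_1\lambda_2 = \det(\seA) = \alpha\beta$, i.e.\ $r = \sqrt{\alpha\beta}$ as stated in Eq.~\eqref{eq:se_r_bound_complex}. Since $\alpha = e^{-\Delta t/\tau_u}$ and $\beta = e^{-\Delta t/\tau_w}$ both lie in $(0,1)$ for any $\tau_u,\tau_w>0$ and $\Delta t>0$, we obtain $r<1$ unconditionally, establishing asymptotic stability for every admissible parameterization.

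For the frequency-coverage and uniqueness part I would use the trace. The sum of eigenvalues gives $2r\cos\phi = \mathrm{tr}(\seA)$, and substituting $r=\sqrt{\alpha\beta}$ yields
\[
\cos\phi = \frac{\alpha+\beta - a(1-\alpha)(1-\beta)}{2\sqrt{\alpha\beta}}.
\]
Because $(1-\alpha)(1-\beta)>0$, the right-hand side is a strictly decreasing affine function of $a$, while $\phi\mapsto\cos\phi$ is a strictly decreasing bijection from $(0,\pi]$ onto $[-1,1)$. Composing these, $a\mapsto\phi$ is a strictly increasing bijection from $a\in\bigl(\tfrac{(\sqrt\alpha-\sqrt\beta)^2}{(1-\alpha)(1-\beta)},\,\tfrac{(\sqrt\alpha+\sqrt\beta)^2}{(1-\alpha)(1-\beta)}\bigr]$ onto $\phi\in(0,\pi]$. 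Since $f = \frac{\phi}{2\pi\Delta t}$ maps $(0,\pi]$ bijectively onto $(0,f_N]$ with $f_N=\frac{1}{2\Delta t}$, solving the displayed equation for $a$ gives exactly one value realizing each target $f\in(0,f_N]$, which is the existence-and-uniqueness statement.

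The step requiring the most care --- and the likely main obstacle --- is self-consistency of the polar-form assumption together with the boundary behaviour. I must verify that the $a$ obtained above actually places the eigenvalues in the complex-conjugate regime, i.e.\ that the discriminant $\mathrm{tr}(\seA)^2 - 4\det(\seA)$ is negative: substituting $\mathrm{tr}(\seA) = 2\sqrt{\alpha\beta}\cos\phi$ turns it into $4\alpha\beta(\cos^2\phi - 1)\le 0$, strictly negative precisely for $\phi\in(0,\pi)$ and vanishing at $\phi=\pi$. Hence for $f<f_N$ the neuron genuinely oscillates with complex eigenvalues of modulus $\sqrt{\alpha\beta}$, while at the endpoint $f=f_N$ the eigenvalues collapse to the repeated real value $-\sqrt{\alpha\beta}$ (critically damped, sign-alternating each step, i.e.\ the Nyquist oscillation); in both cases $r=\sqrt{\alpha\beta}<1$, so the decay-rate conclusion extends continuously to the closed endpoint. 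Tidying the monotonicity/bijection bookkeeping and handling this $\phi=\pi$ boundary cleanly is where the argument must be made watertight.
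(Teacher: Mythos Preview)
Your proof is correct and follows essentially the same route as the paper: both arguments reduce to showing that $\det(\seA)=\alpha\beta$ is independent of $a$ (giving the constant modulus $r=\sqrt{\alpha\beta}$) and that the real part of the eigenvalues is affine in $a$ (giving the monotone bijection $a\mapsto\phi$ onto $[0,\pi]$, hence onto $(0,f_N]$). Your packaging via Vieta's relations (trace and determinant) is slightly cleaner than the paper's explicit eigenvalue computation in Lemma~\ref{lem:se_complex_stability_range}, and your remark that the $a$-terms cancel in $\det(\seA)$ but \emph{not} in $\det(\eulerA)$ nicely pinpoints the structural reason for the SE/EF divergence---a point the paper establishes but does not highlight as directly; your treatment of the $\phi=\pi$ boundary (repeated eigenvalue $-\sqrt{\alpha\beta}$) is also more careful than the paper's.
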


The upper frequency bound of adLIF neurons using the Euler-Forward discretization is illustrated in Fig.~\ref{fig:stability}f.  We can observe that for the Euler-Forward method, the maximum admissible frequency for stable dynamics converges toward zero as $\tau_u$ and $\tau_w$ increase (see Section \enameref{subsubsec:meth_frequency_ranges} in \meth~for proof). 

\revision{
An immediate advantage of using the SE-adLIF over EF-adLIF is the guaranteed stability over the entire range of possible oscillation frequencies. This property in particular comes into play when the neuron parameters $\tau_u$, $\tau_w$, $a$, and $b$  are trained. While for most tasks only a sub-range of this viable frequency range might be required, it is guaranteed that the SE-adLIF neuron is stable for any such frequency. \revision{In other words, for SE-adLIF neurons with frequencies below the Nyquist frequency, no unstable parameter configurations exists.} This is not the case for the EF-adLIF neuron: even in instances of very low oscillation frequencies the stability boundary might be overshot (see Fig. \ref{fig:stability}c,d). Later in the manuscript (see Section \enameref{sec:spring-mass} and Fig.~\ref{fig:oscillator_task}g), we discuss the relationship between the neuron frequency range and the performance in an oscillator regression task. 
In our simulations, we clip $a$ to fixed constant upper and lower bounds, given by task-dependent hyperparameters, independent of parameters $\tau_u$ and $\tau_w$.  While this constraint suffices for most tasks, it introduces a trade-off between the decay of the neuron and the oscillation frequency. This can be observed in Fig.~\ref{fig:stability}c (right), where neurons with a high frequency are restricted to a fast decay. 
A trivial extension to the SE-adLIF model would be to dynamically adjust the upper bound for parameter $a$ with respect to parameters $\tau_u$ and $\tau_w$ by computing $a_{\mathrm{max}}^\text{SE}$ (as defined by Eq.~\eqref{eq:a_se_max} in \meth) after each training step and clipping $a$ to the interval $[0, a_{\mathrm{max}}^\text{SE}]$. This would allow the neuron model to utilize the entire frequency range for any combination of $\tau_u$ and $\tau_w$. This extension is not possible for the EF-adLIF neuron, since using $a_{\mathrm{max}}^\text{EF}$ (as defined by Eq.~\eqref{eq:ef_complex_upper_bound} in \meth) as an upper bound instead of a constant value would still not allow the neuron to use the entire frequency range. This exact case, where $a$ is set to $a_{\mathrm{max}}^\text{EF}$, is shown in Fig.~\ref{fig:stability}f.
}

\revision{The favorable stability properties induced by the SE discretization should generalize well to other neuron models with two bi-directionally coupled neuron states. Two examples for such neuron models are the adaptive exponential integrate-and-fire (AdEx) \cite{bretteAdaptiveExponentialIntegrateandFire2005} model  and the Balanced Harmonic Resonate-and-Fire (BHRF) \cite{higuchi2024balanced} model. For the latter, we observed that applying the SE-discretization not only alleviates the necessity of the frequency-dependent divergence boundary, which was introduced by the authors to ensure stability of the model, but also recovers the direct relationship between neuronal parameters $\omega$ and $b$ and the effective oscillation frequency $\omega_\text{eff}$ and effective damping coefficient $b_\text{eff}$ of the discretized neuron. Details can be found in \nameref{supnotes1} and \nameref{supfig1}. }
\subsection*{Improved performance of SE-discretized adaptive RSNNs}
\revision{
We first evaluated how recurrent networks of the described adLIF neurons perform in comparison to classical vanilla LIF networks.
We compared LIF and SE-adLIF networks} on two commonly used audio benchmark datasets: Spiking Heidelberg Digits (SHD) \cite{cramerHeidelbergSpikingData2022} and Spiking Speech Commands (SSC) \cite{cramerHeidelbergSpikingData2022}
, as well as an ECG dataset previously used to test ALIF neurons \cite{yinAccurateEfficientTimedomain2021}. \revision{For LIF baselines, we used results from previously reported studies as well as additional simulations to ensure identical setups and comparable parameter counts. }

We obtained our results by constructing a recurrently connected SNN composed of one or two layers (depending on the task) of adLIF (resp. LIF) neurons, followed by a layer of leaky integrator (LI) neurons to provide a real-valued network output. We trained both the adLIF and LIF SNNs using BPTT with surrogate gradients \cite{leeTrainingDeepSpiking2016,bellec2018long,neftci2019surrogate,slayer}. We used a dropout rate of $15\%$ except noted otherwise, but otherwise no regularization, normalization, or data augmentation methods. The trained parameters included the synaptic weights $\theta$, as well as all neuron parameters $a$, $b$, $\tau_u$ and $\tau_w$ in the SE-adLIF case, and membrane time constants $\tau$ in the LIF case. The parameters were not shared across neurons such that each neuron could have individual parameter values. Neurons were initialized heterogeneously, such that for each neuron the initial values of these parameters were chosen randomly from a uniform distribution over a pre-defined range. Heterogeneity has previously been shown to improve the performance of SNNs \cite{perez-nievesNeuralHeterogeneityPromotes2021}. We applied a reparametrization technique for the training of time constants $\tau_u$ and $\tau_w$ and parameters $a$ and $b$ for the SE-adLIF model, as well as the membrane time constants $\tau$ for the LIF models, see \meth~for details.

\revision{Table \ref{tab:lif_results} summarizes both the baselines from prior studies and our own results. While LIF networks in our simulations performed better than all previously reported LIF baselines, they still performed significantly worse than the SE-adLIF networks with the same or lesser parameter counts across all tasks. This result provides clear empirical support for the superiority of adLIF networks over LIF networks, both in terms of parameter efficiency and overall performance.} 
\begin{table}[t]
\centering
\caption{\revision{\textbf{Comparison of recurrent LIF and adLIF networks on spike-encoded speech recognition datasets.} The 'N.A.' indicates lack of information from the authors. 'This work' indicates our experimental results. The reported results correspond to the accuracy on the corresponding test set.}
}
\begin{tabular}{cllll}
\text{} & \text{Model} & \text{Publication} & \text{\#Params} & \text{Test Acc. [\%]} \\  \hline
\multirow{6}{*}{\rotatebox[origin=c]{90}{SHD}} 
& \text{LIF} & Cramer et al. 2022 \cite{cramerHeidelbergSpikingData2022} & N.A. & $83.2 \pm 1.3$ \\ 
& \text{LIF} & Deckers et al. 2024 \cite{deckers2024co} & $37.9$k & $84.49$ \\ 
& \text{LIF} & Bittar \& Garner  2022 \cite{bittar2022surrogate} & $141$k & $87.04$ \\ 
& \text{LIF} & Bittar \& Garner  2022 \cite{bittar2022surrogate} & $3.8$M & $89.29$ \\ 
& \text{LIF} & This work & $450$k & $90.27 \pm 0.73$  \\ 
& \textbf{SE-adLIF} & This work & $450$k & $\mathbf{95.81 \pm 0.56}$ \\ \hline

\multirow{6}{*}{\rotatebox[origin=c]{90}{SSC}} 
& \text{LIF} & Cramer et al. 2022 \cite{cramerHeidelbergSpikingData2022} & N.A. & $50.9 \pm 1.1$ \\ 
& \text{LIF} & Deckers et al. 2024 \cite{deckers2024co} & $0.34$M & $71.76$ \\ 
& \text{LIF} & Bittar \& Garner  2022 \cite{bittar2022surrogate} & $141$k & $66.67$ \\ 
& \text{LIF} & Bittar \& Garner  2022 \cite{bittar2022surrogate} & $1.1$M & $68.14$ \\ 
& \text{LIF} &  This work  & $1.7$M  & $75.23$ \\ 
& \textbf{SE-adLIF} & This work &  $1.6$M & $\mathbf{80.44 \pm 0.26}$ \\ \hline

\multirow{2}{*}{\rotatebox[origin=c]{90}{ECG}} 
& \text{LIF} & This work & $1.8$k & $77.8 \pm 1.85$ \\ 
& \textbf{SE-adLIF} & This work & $1.8$k & $\mathbf{86.88 \pm 0.40}$ \\ \hline
\end{tabular}
\label{tab:lif_results}
\end{table}

In the previous sections, we discussed the theoretical advantages of SE discretization over the more commonly used Euler-Forward discretization of the adLIF neuron model. \revision{The discussed theoretical advantages of SE, for example near-independence of the neuron dynamics from the discretization time step $\Delta t$ or the closer adherence to the continuous model, give indirect practical benefits when dealing with such neurons. It is not clear however, whether the utilization of the SE discretization can provide an improvement in performance over the commonly used EF method. To answer this question, we not only compared both the EF-adLIF and the SE-adLIF model to each other, but also to state-of-the-art spiking neural networks:} a model with threshold adaptation (ALIF) \cite{yinAccurateEfficientTimedomain2021}, a constrained variant of the adLIF neuron model, similar to the model in this study, but with  differences in discretization and neuron formulation (cAdLIF) \cite{deckers2024co}, another adLIF network but with batch normalization (RadLIF) \cite{bittar2022surrogate}, a feed-forward model with delays implemented as temporal convolutions (DCLS-Delays) \cite{hammouamri2023learning}, and the balanced resonate-and-fire neuron model (BHRF) \cite{higuchi2024balanced}, a variant of the resonate-and-fire neuron \cite{izhikevich2001resonate} where output spikes do not disrupt the phase of the membrane potential oscillation. 

In Table \ref{tab:results} we report the test accuracy \revision{of the various models} on the corresponding test sets. SHD does not define a dedicated validation set and previous work reported performances for networks validated on the test set, which is methodologically not clean. We therefore report results for two validation variants for SHD: with validation on the test set (to ensure comparability) and with validation on a fraction of the training set.
\begin{table}[t]
\caption{\textbf{Comparison of accuracies of different models on the test sets of SHD, SSC and ECG.}  The asterisk (*) of SHD* denotes that a fraction of the training data was used as a validation set. BHRF used $5\%$, whereas we used $20\%$ of the training data for validation. We report the test accuracy for the training epoch with the highest validation accuracy. For our model, SE-adLIF, we report the mean and standard deviation on $20$ different random initializations. Rec.: model contains recurrent synaptic connections; \#Runs: number of runs with different random seeds to obtain the results. Test accuracy is provided as mean $\pm$ SD.  
}
\centering
\begin{tabular}{clllll}
\text{} & \text{Model} & \text{Rec.} & \text{\#Params} & \text{\#Runs} & \text{Test Acc. [\%]} \\ \hline
\multirow{7}{*}{\rotatebox[origin=c]{90}{SHD}} 
& \text{LIF} & \cmark & $0.45$M & $10$ & $90.27 \pm 0.73$ \\ 
& \text{ALIF \cite{yinAccurateEfficientTimedomain2021}} & \cmark & $0.14$M & $1$ & $90.4$ \\ 
& \text{cAdLIF \cite{deckers2024co}} & \xmark & $38.7$k & $10$ & $94.19$ \\ 
& \text{RadLIF \cite{bittar2022surrogate}} & \cmark & $3.9$M & - & $94.62$ \\ 
& \text{DCLS-Delays \cite{hammouamri2023learning}} & \xmark & $0.2$M & $10$ & $95.07 \pm 0.24$ \\ 
& \text{EF-adLIF (2 layers)} & \cmark & $0.45$M & $20$ & $94.68 \pm 0.57$ \\
& \text{SE-adLIF} (1 layer) & \cmark & $37.5$k & 20 & $94.59 \pm 0.27$ \\
& \text{SE-adLIF} (2 layers) & \cmark & $0.45$M & 20 & $95.81 \pm 0.56$ \\  \hline

\multirow{3}{*}{\rotatebox[origin=c]{90}{SHD*}} 
& \text{BHRF \cite{higuchi2024balanced}} & \cmark & $0.1$M & $5$ & $92.7 \pm 0.7$ \\ 
& \text{SE-adLIF} (1 layer) & \cmark & $37.5$k & 20 & $93.18 \pm 0.74$ \\ 
& \text{SE-adLIF} (2 layers) & \cmark & $0.45$M & 20 & $93.79 \pm 0.76$ \\ \hline

\multirow{7}{*}{\rotatebox[origin=c]{90}{SSC}} 
& \text{LIF} & \cmark & $1.7$M & $1$ & $75.23$ \\ 
& \text{ALIF \cite{yinAccurateEfficientTimedomain2021}} & \cmark & $0.78$M & $1$ & $74.2$ \\ 
& \text{RadLIF \cite{bittar2022surrogate}} & \cmark & $3.9$M & - & $77.4$ \\ 
& \text{cAdLIF \cite{deckers2024co}} & \xmark & $0.35$M & $1$ & $77.5$ \\ 
& \text{DCLS-Delays \cite{hammouamri2023learning}} & \xmark & $2.5$M & $5$ & $80.69 \pm 0.21$ \\ 
& \text{EF-adLIF (2 layers)} & \cmark & $1.6$M & $5$ & $79.80 \pm 0.18$ \\ 
& \text{SE-adLIF (2 layers)} & \cmark & $1.6$M & $5$ & $80.44 \pm 0.26$ \\ \hline

\multirow{7}{*}{\rotatebox[origin=c]{90}{ECG}} 
& \text{LIF (1 layer)} & \cmark & $1.8$k & 5 & $77.8 \pm 1.85$ \\ 
& \text{ALIF \cite{yinAccurateEfficientTimedomain2021}} & \cmark & $1.8$k & $1$ & $85.9$ \\ 
& \text{BHRF \cite{higuchi2024balanced}} & \cmark & $1.8$k & $5$ & $87.0 \pm 0.4$ \\ 
& \text{EF-adLIF (1 layer)} & \cmark & $1.8$k & $20$ & $87.10 \pm 0.46$ \\ 
& \text{SE-adLIF (1 layer)} & \cmark & $1.8$k & $20$ & $86.88 \pm 0.40$ \\ 
& \text{EF-adLIF (2 layers)} & \cmark & $4.6$k & $20$ & $87.83 \pm 0.51$ \\ 
& \text{SE-adLIF (2 layers)} & \cmark & $4.6$k & $20$ & $88.18 \pm 0.36$ \\ \hline
\end{tabular}
\label{tab:results}
\end{table}

For all considered datasets, recurrent SE-discretized adLIF networks performed better than previously considered recurrent SNNs. For SSC, their performance was slightly below that of the DCLS model \cite{hammouamri2023learning}, a feed-forward network using extensive delays trained via dilated convolutions. \revision{Unlike our model, however, DCLS employs temporal convolutions to implement delays and incorporates batch normalization. These properties make the DCLS model less suitable for neuromorphic use cases. Nevertheless, we included it in our results table for comparison, as the delays in neural connectivity provide an interesting orthogonal complement to the enhanced somatic dynamics of the models studied in our work (see also \enameref{sec:discussion}). }
Networks composed of SE-discretized adLIF neurons (SE-adLIF networks) performed significantly better than those based on Euler-forward discretization (EF-adLIF networks) on SHD and SSC (significance values for a two-tailed t-test were $p<0.000001$ for SHD and $p<0.005$ for SSC). Small networks with a single recurrent layer on ECG performed on-par ($p=0.115$), while SE-adLIF networks significantly improved over EF-adLIF networks when larger networks with two recurrent layers were used ($p<0.02$).
We found that EF-adLIF networks suffered from severe instabilities if neuron parameters were not constrained to values in which the decay rate exceeds the critical boundary of $r=1$, resulting in instabilities for example in the ECG task, see \nameref{supnotes2} and \nameref{suptab1}. The SE method is hence the preferred choice when adLIF neurons are used in a discretized form. 

AdLIF neurons could, depending on their parameters, exhibit many different experimentally observed neuronal dynamics \cite{gerstner2014neuronal}. We wondered whether networks trained on spatio-temporal classification tasks utilized the diverse dynamical behaviors of adLIF neurons. To that end, we investigated the resulting parameterizations of adLIF neurons in networks trained on SHD, and indeed found a heterogeneous landscape of neuron parameterizations, see \nameref{supnotes3} and \nameref{supfig2}.
\subsection*{\revision{Accurate prediction of dynamical system trajectories}}
\label{sec:spring-mass}
The benchmark tasks considered above were restricted to classification problems where the network was required to predict a class label. 
We next asked whether the rich neuron dynamics of adaptive neurons could be utilized in a generative mode where the network has to produce complex time-varying dynamical patterns. To that end, we considered a task in which networks had to generate the dynamics
of a system of $4$ masses, interconnected by springs with different spring constants, see Fig.~\ref{fig:oscillator_task}a. 
Each training sequence consisted of the masses' trajectory over time for \qty{500}{\milli\second} (Fig.~\ref{fig:oscillator_task}b) from a randomly sampled initial condition of this $4$-degree-of-freedom dynamical system, where the displacement $x_i$ of each mass $i$ was encoded via a real-valued input current. During the first half of the sequence, the model was trained to produce "single-step predictions", that is, it received the mass displacements $\bm{x}[k] \in \mathbb{R}^4$ as input at each time step $k$ and had to predict the displacements $\bm{x}[k+1]$. In the second half, the model auto-regressed, i.e. it used its own prediction $\hat{x}[k]$ to predict the next state $\bm{x}[k+1]$ (see Fig.~\ref{fig:oscillator_task}c).
Through this second phase, we tested if the network was able to accurately maintain a stable representation of the evolving system by measuring the deviation from the ground truth over time. 
\begin{figure}[!t]
    \centering
    \includegraphics[width=\textwidth]{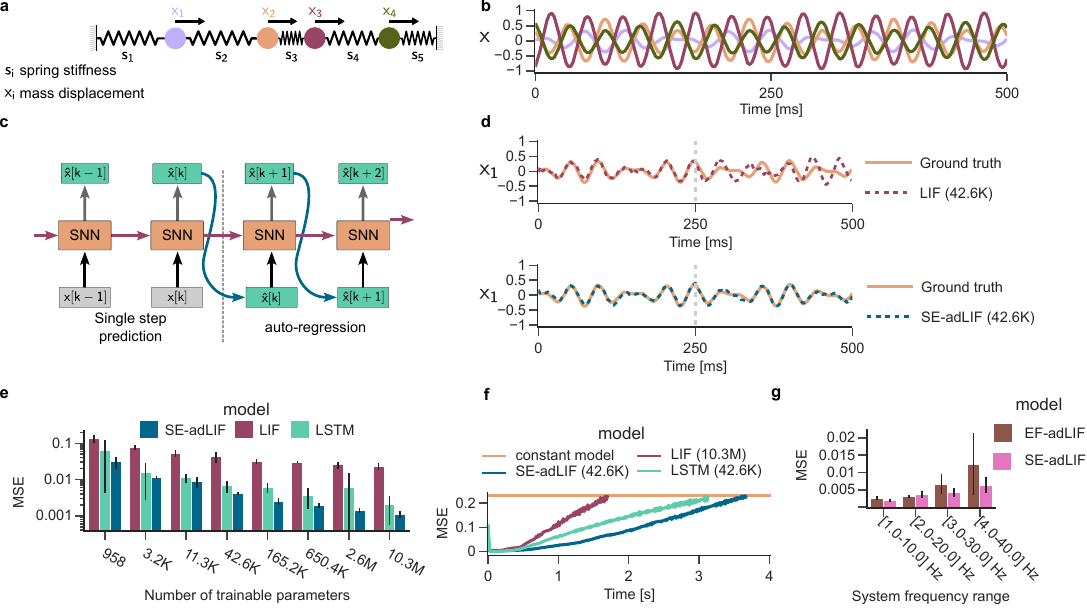}
    \caption{\textbf{Prediction and generation of complex oscillatory dynamics.} \textbf{a)} Schematic of a 4-degree-of-freedom spring-mass system. $x_1$ to $x_4$ represent the displacements of the four masses. \textbf{a)} Example displacement dynamics generated over a period of \qty{500}{\milli\second}.  
    \textbf{c)} Illustration of the auto-regression task. For the first  \qty{250}{\milli\second}, the network received the true displacements $\bm{x}[k]$ and predict the next displacement $\hat{\bm{x}}[k+1]$. After \qty{250}{\milli\second}, the model generates the displacements by using its own predictions from the previous time step in an autoregressive manner. \textbf{d)} Displacement predictions for mass $x_1$, by a LIF (top) and adLIF (bottom) network with $42.6$K trainable parameters.
    \textbf{e)} Mean squared error (MSE) in logarithmic scale during the auto-regression period for LIF, adLIF, and LSTM networks of various sizes (mean and STD over 5 unique randomly generated spring-mass systems).
    \textbf{f)} Divergence of generated dynamics in the auto-regressive phase (starting after \qty{250}{\milli\second}). We report the MSE over time averaged over a \qty{25}{\milli\second} time-window. The constant model corresponds to the average MSE over time for a model that constantly predicts zero as displacement. 
    \textbf{g)} Mean squared error (MSE) during the auto-regression period for adLIF networks discretized with the Euler-Forward (brown) and Symplectic-Euler (pink) method on spring-mass systems with different frequency ranges.} 
    \label{fig:oscillator_task}
\end{figure}
Note that in the spring-mass system the states are described by the displacement and velocity of the masses but only displacement information $\bm{x}[k]$ was available to the network. Hence, it is impossible to accurately predict the displacements of the masses at time $k+1$ from the displacements at time $k$ alone.
The network must therefore learn to keep track of the longer-time dynamics of the system.

In Fig.~\ref{fig:oscillator_task}d, we show the ground truth of displacement $x_1$ for mass $1$, as well as the prediction of the displacement by an SE-adLIF network with a single hidden layer of $200$ neurons and a single-layer LIF network with the same number of parameters. 
After time step $t=\qty{250}{\milli\second}$, the auto-regression phase starts. These plots exemplify how the LIF network roughly followed the dynamics during the one-step prediction phase, but gradually diverged from the target in the auto-regression phase. In contrast, the generated trajectory of the SE-adLIF network stayed close to the ground truth system throughout the auto-regression phase. 
Fig.~\ref{fig:oscillator_task}e shows the mean squared error (MSE) of several models and model sizes during this autoregressive phase. SE-adLIF networks consistently outperformed LIF networks as well as non-spiking long-short-term memory (LSTM) networks (note the log-scale of the y-axis). 
Moreover, we observed that their performance scaled better with network size ($1.7$ and $1.3$ MSE improvement factor per doubling of the network size for SE-adLIF and LIF networks respectively). 
Fig.~\ref{fig:oscillator_task}f shows how fast the models degrade towards the baseline of a model that constantly outputs zero. We observe that small SE-adLIF networks with $200$ neurons approximated the trajectory of the dynamical system in the auto-regression phase for a much longer duration than the best LIF network with $3200$ neurons. 
Interestingly, when we trained SE-adLIF networks without recurrent connections, their dynamics degraded clearly slower than LIF networks with recurrent connections (\nameref{supfig3}), which underlines the utility of the inductive bias of oscillatory neurons for such generative tasks.

Additionally, we used this setup to compare the Symplectic-Euler discretization (SE-adLIF networks) with the Euler-Forward discretization (EF-adLIF networks). Since in this task the frequency bandwidth can be controlled directly via the spring coefficient, we generated spring-mass systems of increasing maximal frequency.
We trained EF-adLIF networks and SE-adLIF networks under the same range of time-constants ($\tau_u$ and $\tau_w$) and a restricted range for the adaption parameter $a$.
For Euler-Forward, $a$ was restricted between $0$ and $a_\mathrm{max}$, where $a_\mathrm{max}$ is the maximal parameter value for $a$ that is stable under this discretization, resulting in a $[0, 30]$ Hz range of frequencies that can be represented by the neurons for the chosen range of time-constants.
For Symplectic-Euler, all frequencies below the Nyquist frequency are stable, so we simply choose $a_{\mathrm{max}}$ to achieve a frequency range of $[0, 60]$ Hz. 
The results are shown in Fig.~\ref{fig:oscillator_task}g. As expected, the two methods have similar performance at low frequencies. For dynamics with a larger frequency bandwidth however, EF-adLIF networks performed significantly worse. Additionally, the increased variance of the error indicates stability problems.
These experimental results support our claim that the wider stability region of the SE-adLIF network allows the model to converge over a wide range of data frequencies.

\subsection*{\revision{High-fidelity neuromorphic audio compression}}
\revision{
In all experiments so far we observed a significant superiority of the adLIF neuron over the LIF neuron, both in terms of parameter efficiency and overall performance. Yet, it is unclear how these observed improvements transfer from benchmarks and toy tasks to real-world neuromorphic applications. To take a step towards answering this question, we compared the performance of adLIF and LIF neurons in the task of raw audio compression. The goal of this task is to first compress and then transmit a raw audio signal as energy-efficient as possible, while sacrificing as little signal quality as possible. Similar setups with neuromorphic processing and spike-based transmission have been discussed in various studies \cite{rothSpikeBasedSensingCommunication2022,wuNeuromorphicWirelessSplit2024,ke2024,chenNeuromorphicWirelessCognition2023a,savazziMutualInformationAnalysis2024} as promising research direction for low-power IoT applications with smart wireless sensors. Our study is, to the best of our knowledge, the first to address audio compression using plain RSNNs that deliberately exclude batch normalization, temporal convolutions, and transformer-based architectures to maintain compatibility with standard neuromorphic processors. 
}

\begin{figure}[!t]
    \centering
    \includegraphics[width=0.5\textwidth]{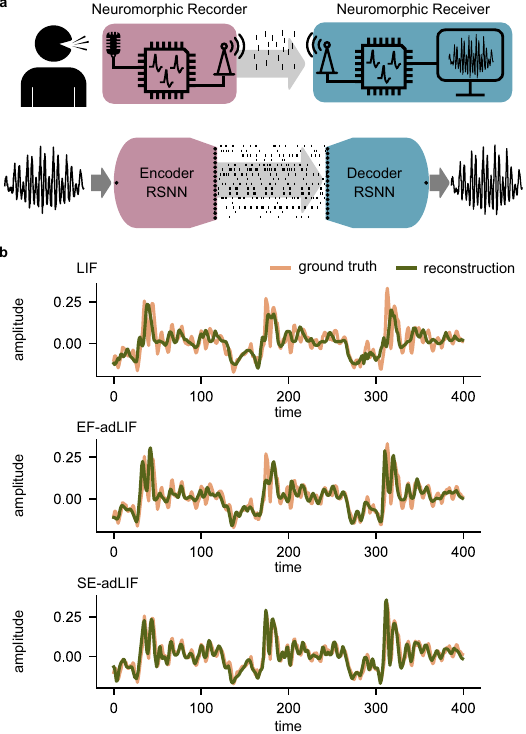}
    \caption{\revision{\textbf{Audio compression task setup and results.} \textbf{a)} (top) The conceptual hardware setup considered in the audio compression task. See main text and \meth~ for details. (bottom) Architecture of our simulated SNNs, closely resembling the conceptual neuromorphic setup. The raw waveform input to the encoder RSNN, the $16$-dimensional spike train between encoder and decoder, as well as the reconstructed waveform were taken from our simulations with SE-adLIF networks. \textbf{b)} Comparison between the ground truth and the reconstructed waveform for the LIF (top), EF-adLIF (middle), and SE-adLIF network (bottom). }}
    \label{fig:audio_compression_task}
\end{figure}
\revision{
The setup is illustrated in Fig.~\ref{fig:audio_compression_task}a: We conceptually consider a small device containing a digital neuromorphic chip that receives raw, unprocessed audio from a microphone. For our study, we used SNN simulations and did not implement this setup in real hardware, but simulated the SNNs that would run on such chips, see Fig.~\ref{fig:audio_compression_task}a, bottom. In the conceptual setup, the chip processes the waveform by implementing a small SNN with a bottleneck output layer consisting of very few neurons. It sends the spike-encoded audio data through a sparse wireless communication channel \cite{savazziMutualInformationAnalysis2024} to a receiving device. This receiving device, a second neuromorphic processor, could in principle perform arbitrary post-processing on the spike-encoded data. In our simulations we considered the most general case, which is the reconstruction of the ground truth waveform from the sparse spikes. For the spike-based communication we assume low-latency pulse-driven radio transmission, for example IR-UWB \cite{fernandesRecentAdvancesIRUWB2010}, that features adaptive energy consumption, depending on the presence of input signal. In the absence of an input signal (i.e. silence), almost no energy is consumed by the transmitting device, hence this technology is a promising candidate for ultra low-power neuromorphic sensing devices \cite{chenNeuromorphicWirelessCognition2023a}. In contrast, if conventional frame-based digital transmission is used, the transmission rate is constant and power is consumed at a constant rate. In pulse-driven spike encoding, the timing of spikes is implicitly encoded by the timing of the emitted radio pulse, such that the spike timing does not need to be explicitly transmitted as payload. }

\revision{Audio compression requires balancing the quality of the reconstructed signal against the data transmission rate at the bottleneck. In our simulations, we constrained the encoder SNN to very sparse spiking activity at the output layer to achieve low-bandwidth transmission. This aligns with our goal of ultra-low-energy processing, as energy consumption in neuromorphic systems is directly tied to spike rate. 
We considered as few as $16$ output neurons for the encoder, regularized to not exceed a total maximum spike rate of $6$k spikes per second. Assuming that a single spike is equivalent to $1$ bit, the upper regularization bound for the average data transmission rate between encoder and decoder is $6$ kbps. Under this constraint, we compared the quality of the reconstructed audio signal of LIF and adLIF neurons with common audio compression codecs \cite{valin2012definition,dietz2015overview} and the state-of-the-art Residual Vector Quantization (RVQ) method \cite{defossez2022high} evaluated at the same bandwidth of $6$ kbps. The results are shown in Table \ref{tab:speech_task_model_comparison}.
Details on the task setup and the simulations can be found in Section \enameref{subsec:audio_compression_details} in \meth. 
We provide uncompressed audio samples (Supplementary Audio 1) and reconstructions using the LIF (Supplementary Audio 2), EF-adLIF (Supplementary Audio 3), and SE-adLIF networks (Supplementary Audio 4) as supplements to this article.
Our findings indicate that adaptive LIF neurons achieve a significantly higher reconstruction quality than vanilla LIF neurons (see Fig.~\ref{fig:audio_compression_task}b for an example). Moreover, SE-discretized adLIF neurons yield significantly higher reconstruction quality, as demonstrated by both quantitative metrics (Table \ref{tab:speech_task_model_comparison}) and visual waveform comparisons (Fig.~\ref{fig:audio_compression_task}b). Given that waveform data spans a broad frequency spectrum, the stability of the SE-adLIF neurons over the entire frequency range --- as discussed in Section \enameref{subsec:stability} --- offers a clear advantage over EF-adLIF. Since LIF neurons lack intrinsic membrane potential oscillations, they depend heavily on recurrent network dynamics to detect, encode and decode oscillatory patterns in their input. We observed the same phenomenon in the oscillatory task in Fig.~\ref{fig:oscillator_task}, where LIF neurons performed poorly. 
While the SE-adLIF model achieved the best performance in terms of scale-invariant signal-to-noise ratio \cite{luo2018tasnet},
the state-of-the-art neural network model RVQ exhibits better performance on the VISQOL measure \cite{hines2015visqol}, but at the cost of a much larger model with a $>27 \times$ increase in the number of parameters. The compactness of the SE-adLIF model allowed audio compression and decoding in $1.3 \times$ real-time on a consumer single tread CPU (AMD Ryzen 7 5800H).
In Table \ref{tab:speech_task_model_comparison}, we also report performances of two standard audio codecs (OPUS and EVS) on our test set, showing that the SE-adLIF networks achieve competitive performance.
In the next few sections, we further explore the inductive bias introduced by oscillatory membrane potentials. 
}
\begin{table}[t]
\centering
\caption{\revision{\textbf{Performance comparison between LIF, EF-adLIF, and SE-adLIF networks with state-of-the-art audio codecs in the audio compression task.} We used the same network layout for all neuron models and report mean and standard devivation on the LibriTTS clean test set \cite{zen2019libritts} across $4$ runs except for LIF models, for which only 2 runs converged and were taken into account. 
See Section \enameref{subsec:audio_compression_details} in \meth~ for details. SI-SNR denotes the scale-invariant signal-to-noise ratio \cite{luo2018tasnet}, VISQOL the value obtained from the Virtual Speech Quality Objective Listener \cite{hines2015visqol}. OPUS and EVS are audio codecs, hence no parameter counts are provided. 
Sam.~Rate refers to the respective sampling rate of the audio signal. Avg.~Bandw.~refers to the average number of spikes per second (Sp./s) at the bottleneck layer for the spiking networks, and the data transmission rate for the other, frame-based methods.}}
\label{tab:speech_task_model_comparison}
\begin{tabular}{clllll}
\text{Model} & Sam. Rate & \text{\#Params} & Avg. Bandw. & \text{SI-SNR \cite{luo2018tasnet}} & \text{VISQOL \cite{hines2015visqol}} \\  \hline
LIF &  \qty{24}{\kHz} & 543k & $4.6$ kSp./s & $6.90 \pm 0.05$ & $3.46 \pm 0.01$ \\

EF-adLIF & \qty{24}{\kHz}& 543k & $5.0$ kSp./s & $7.80 \pm 0.04$ & $3.82 \pm 0.01$ \\
SE-adLIF & \qty{24}{\kHz} & 543k & $5.2$ kSp./s & $8.39 \pm 0.05$ & $3.91 \pm 0.004$ \\

RVQ  \cite{defossez2022high} & \qty{24}{\kHz} & 14.8M & $6$ kbps & $4.14$   & $4.38$  \\ 
OPUS  \cite{valin2012definition} & \qty{24}{\kHz}& - &  $6$ kbps & $1.35$ & $3.69$  \\
EVS  \cite{dietz2015overview} & \qty{16}{\kHz} & - &  $6$ kbps & $5.51$& $4.11$\\
\hline
\end{tabular}
\end{table}

\subsection*{Computational properties of adLIF networks}
\label{subsec:inductive_bias}

Our empirical results above demonstrate the superiority of oscillatory neuron dynamics over pure leaky integration in spiking neural networks, which is in line with prior studies \cite{bellec2018long,salaj2021spike,bittar2022surrogate,gangulySpikeFrequencyAdaptation2024a,higuchi2024balanced}. In the following, we analyze the reasons behind this superiority.

\paragraph{Adaptation provides an inductive bias for temporal feature detection.}
%
% Micro-examples
In gradient-based training of neural networks, the gradient determines to which features of the input a network 'tunes' to. Hence, understanding how gradients depend on certain input features contributes to the understanding of network learning dynamics. When a recurrent SNN is trained with BPTT, the gradient propagates through the network via two different pathways: the recurrent synaptic connections and the implicit neuron-internal recurrence of the neuron state $s[k]$.
\begin{figure}[!t]
    \centering
    \includegraphics[width=\textwidth]{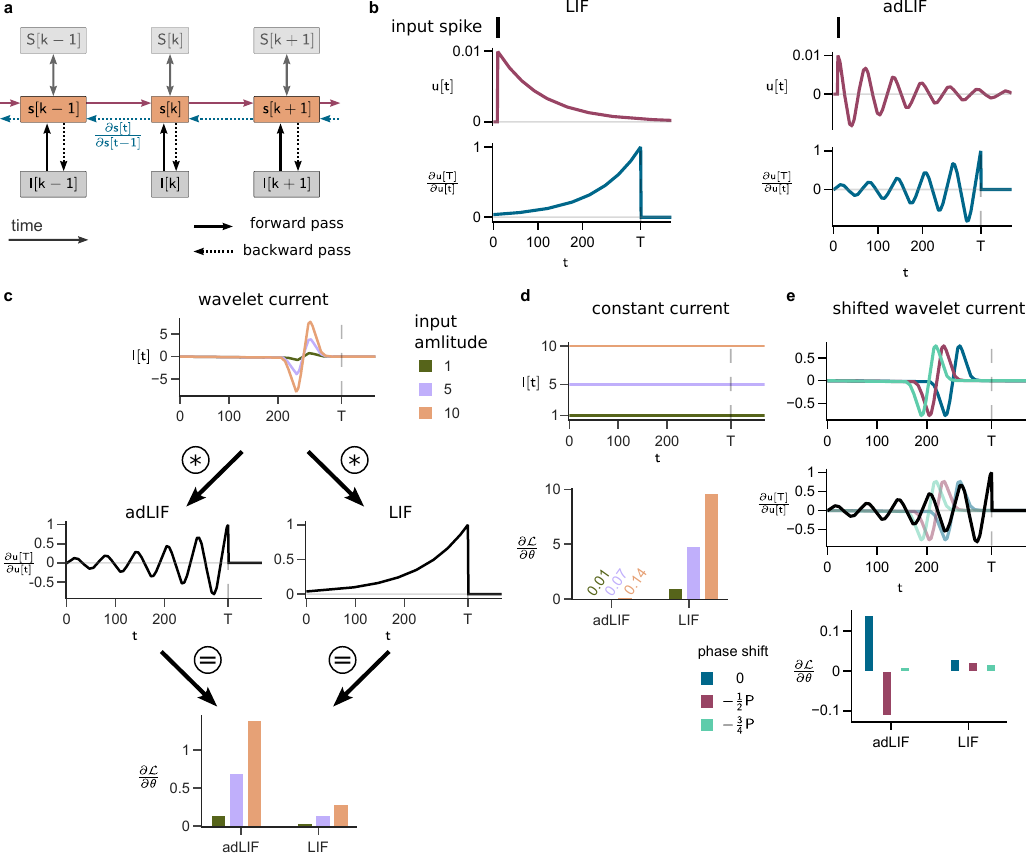}
    \caption{\textbf{Inductive bias of adLIF gradients.}  \textbf{a)} Computational graph showing the state-to-state derivative $\frac{\partial s[t]}{\partial s[t-1]}$ back-propagating through time. $\mathbf{s}[k]$ denotes the state vector Eq.~\eqref{eq:discrete_lds} \textbf{b)} Response of the membrane potential of a LIF neuron (left) and an adLIF neuron (right) to a single input spike. The shape of the derivative $\frac{\partial u[T]}{\partial u[t]}$ (bottom) matches the reversed impulse response function. \textbf{c)} Comparison of the derivative $\frac{\partial \mathcal{L}}{\partial \theta}$ for a wavelet input current. The multiplication from Eq.~\eqref{eq:gradient} of the input current with the state derivative is schematically illustrated for both, the adLIF and the LIF case. The frequency of the wavelet approximately matches the intrinsic frequency of the oscillation of the membrane potential oscillation of the adLIF neuron. The bar plot on the bottom shows the derivative $\frac{\partial \mathcal{L}}{\partial \theta}$ for both neurons, where color indicates input amplitude. \textbf{d)} Same as panel c but for a constant input current. \textbf{e)} Same as panel c but for different positions of the wavelet current. Middle plot shows the alignment between input and back-propagating derivative $\frac{\partial u[T]}{\partial u[t]}$ for the adLIF neuron. Input wavelet is given with a phase shift of $0$,$-\frac{1}{2}P$ and $-\frac{3}{4}P$ with respect to the period $P$ of the adLIF neuron oscillation.}
    \label{fig:gradient_magnitude}
\end{figure}
For the following analysis, we ignored the explicit recurrent synaptic connections and focused on how the backward gradient of the neuron state  determines the magnitude of weight updates in different input scenarios, see Fig.~\ref{fig:gradient_magnitude}a.

Consider the derivative $\frac{\partial u[T]}{\partial u[k]}$ of the membrane potential at a time step $T$ with respect to the membrane potential at some prior time step $k$. Intuitively, this derivative indicates how small perturbations of $u[k]$ influence $u[T]$. 
Fig.~\ref{fig:gradient_magnitude}b
shows this derivative for a LIF neuron (left) and an adLIF neuron (right). Because this derivative is the reverse of the model's forward impulse response, it exhibits oscillations in the case of the adLIF neuron and reversed leaky integration for a LIF neuron. Consider a LIF or an adLIF neuron with a single synapse with weight $\theta$ and input $I[k]$. A loss signal $\frac{\partial L}{\partial u[T]}$ (set to $1$ in our illustrative example) is provided at time-step $T$. The resulting gradient $\frac{\partial L}{\partial \theta}$, used to compute the update of synaptic weight $\theta$, is given by
\begin{equation}
\label{eq:gradient}
    \frac{\partial L}{\partial \theta} \propto \frac{\partial L}{\partial u[T]} \sum_{k=1}^T \frac{\partial u[T]}{\partial u[k]} I[k].
\end{equation}
This equation makes explicit that the weight change is proportional to the correlation between the input currents $I(T), I(T-1), I(T-2), \dots$ and the internal derivatives $\frac{\partial u[T]}{\partial u[T-1]}, \frac{\partial u[T]}{\partial u[T-2]}, \frac{\partial u[T]}{\partial u[T-3]}, \dots$. 
This is illustrated in Fig.~\ref{fig:gradient_magnitude}c for 
temporal input currents --- realized as wavelets --- at different amplitudes. The magnitudes of the resulting gradients for the LIF and adLIF model are quite complementary. 
The correlation between the wavelet and the oscillations of the adLIF neuron's state-derivative results in a strongly amplitude-dependent gradient.
In contrast, the leaky integration of the LIF neuron averages the positive and negative region in the input wave. 
The situation changes drastically for a constant input current, see Fig.~\ref{fig:gradient_magnitude}d. 
For LIF neurons, the gradient $\frac{\partial L}{\partial \theta}$ strongly increases with increasing input current magnitude in this scenario. In contrast, the gradient of the adLIF neuron only weakly depends on the magnitude of the constant input current, in fact the gradient is nearly non-existent. This can be explained by the balance between positive and negative regions of the adLIF gradient (compare with Fig.~\ref{fig:gradient_magnitude}b), resulting in almost zero if multiplied with a constant and summed over time. 

The temporal sensitivity of the adLIF gradient is even more evident when we consider the gradients for different positions of a wavelet current (Fig.~\ref{fig:gradient_magnitude}e). The sign and magnitude of the gradient strongly depend on the position of the wavelet for the adLIF neuron, but not for the LIF neuron. If the wavelet input is aligned with the oscillation of the back-propagating derivative, the resulting gradient is strongly positive. For a half-period ($-\frac{1}{2}P$) phase shift, the gradient is negative and for a $-\frac{3}{4}P$ phase shift, the resulting gradient is low in magnitude due to misalignment of oscillation and input. This gradient encourages the neuron to detect temporal features in the input, that is, temporally local changes in the input, either as changes in the spike rate (e.g. Fig.~\ref{fig:fig2}d) or in the input current (e.g. Fig.~\ref{fig:gradient_magnitude}c,e) with specific timing.
This sensitivity hence provides an inductive bias for spatio-temporal sequence processing tasks. 
\paragraph{Networks of adLIF neurons tune to high-fidelity temporal features.}
Through the rich dynamics and the consequential inductive bias towards learning temporal structure in the input, adLIF neurons should be well-suited for tasks in which spatio-temporal feature extraction is necessary. In order to investigate how well temporal input structure can be exploited by networks of adLIF neurons as compared to networks of LIF neurons, we considered a conceptual task that can be viewed as prototypical temporal pattern detection. We refer to this task as the burst sequence detection (BSD) task.

% BSD task
In the BSD task, a network has to classify temporal patterns of bursts from a population of $n$ input neurons, see Fig.~\ref{fig:optimization_based_sampling}a.
\begin{figure}[!t]
    \centering
    \includegraphics[width=\textwidth]{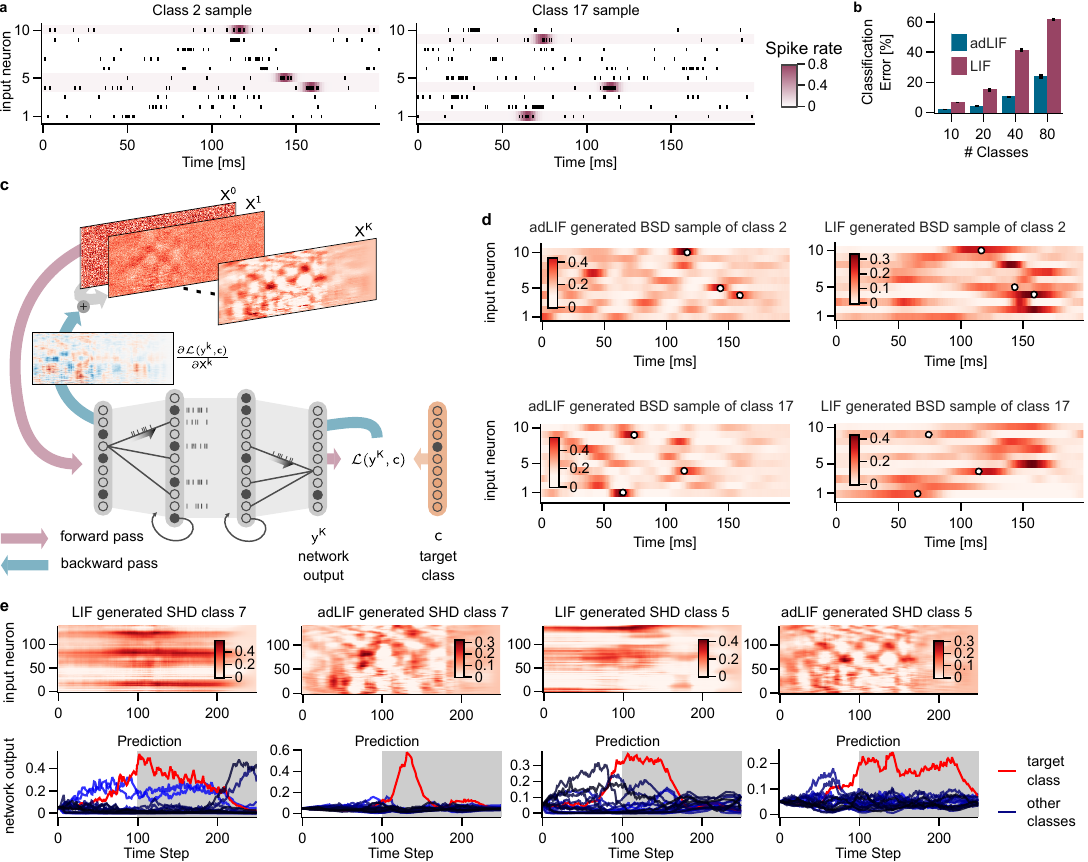}
    \caption{\textbf{Temporal feature detection in adLIF networks.}  \textbf{a)} Two samples of classes $2$ and $17$ of the burst sequence detection (BSD) task (see main text). \textbf{b)} Classification error of adLIF and LIF networks with equal parameter count for different numbers of classes in the BSD task. \textbf{c)} Schematic illustration of network feature visualization. An initial noise sample $\bm{x_0}$ is passed through a trained network with frozen network parameters. The classification loss of the network output with respect to some predefined target class $c$ is computed and back-propagated through the network to obtain the gradient $\nabla_{X} L(X,c)|_{X^0}$ of the loss with respect to input $X^0$. This gradient is applied to the sample and the procedure is repeated to obtain a final sample $X^K$ after $K=400$ iterations.  \textbf{d)} Samples generated by the feature visualization procedure from panel c from networks trained on the $20$-class BSD task. White dots denote the locations of the class-descriptive bursts. We generated samples for classes $2$ and $17$ which were the most misclassified classes of LIF and adLIF networks respectively.
    \textbf{e)} Samples generated from an adLIF network and a LIF network trained on SHD for different target classes $c$ (top) and the corresponding network output over time (bottom). The grey shaded area (at $t>100$) denotes the time span relevant for the loss, all outputs before this time span were ignored, see \meth~for details. 
    }    
    \label{fig:optimization_based_sampling}
\end{figure}
This task is motivated from neuroscientific experiments which show the importance of sequences of transient increases of spike rates in cortex \cite{harvey2012choice,LISMAN199738}.
A class in this task is defined by a specific pre-defined temporal sequence of bursts across a fixed sub-population of three of these neurons. Other neurons emit a random burst at a random time each and additionally, neurons fire with a background rate of $50$ Hz. Bursts were implemented as smooth, transient increases of firing rate  resulting in approximately $7$ spikes per burst, see Section \enameref{subsec:meth_bsd_details} in \meth~for details. We tested single-layer recurrent adLIF networks ($510$ neurons) and single-layer recurrent LIF networks with the same number of trainable parameters. AdLIF networks clearly outperformed LIF networks on this task, see Fig.~\ref{fig:optimization_based_sampling}b. For the case of $10$ classes, the adLIF network reached an average classification error of $2.31\%$ on the test set, whereas the LIF network only achieved a test error of $6.96\%$ despite a low training error ($<2\%$). 

In order to evaluate to what extent the computations of the SNNs considered relied on temporal features of the input, we used a technique commonly applied to artificial neural networks that allows to visualize the input features that cause the network to predict a certain class \cite{erhan2009visualizing}.
The idea of this optimization-based feature visualization procedure is to generate an artificial data sample $X^*$ that maximally drives the network output towards a pre-defined target class $c$:
\begin{equation}
\label{eq:obs-objective}
    X^* = \arg \min_{X} L(X,c),
\end{equation}
where $L(X,c)$ denotes the loss of the network output for input $X$ and target class $c$. In order to estimate $X^*$, one starts with a uniform noise input $X^0$ and updates the input using gradient descent to minimize the loss, i.e., the input $X^{k+1}$ after update ${k+1}$ is given by
\begin{equation}
    X^{k+1} = X^k - \frac{\eta}{\zeta^k} \nabla_{X} L(X,c)|_{X^k},
\end{equation}
where $\eta$ is the update step size, $\nabla_{X} f(X)|_{X^k}$ denotes the gradient of $f$ with respect to $X$ evaluated at $X^k$, and $\zeta_k$ is a normalization factor, see Fig.~\ref{fig:optimization_based_sampling}c and Section \enameref{subsec:meth_opt_based_feature_viz_details} in \meth~for details. After each update, we applied additional regularization to the data sample $X^{k+1}$ (see \meth~for details). We repeated this procedure for $K=400$ iterations, such that the final $X^K$ yielded a very strong prediction for class $c$.
%
% analysis of result
We performed this feature visualization for the BSD task and for the SHD task. Fig.~\ref{fig:optimization_based_sampling}d shows the resulting artificial samples $X^K$ for an adLIF network and a LIF network trained on the BSD task. The class-defining burst timings are indicated as black circles with white filling.  

The adLIF-generated samples exhibited a strong temporal structure that captures the relevant temporal structure of the class. This can be observed visually by comparing the position of strong activations with the class-defining burst timings. In contrast, the samples generated from the trained LIF network displayed less precise resemblance of class-descriptive features, and showed less temporal variation. This gap of specificity of the features in the generated samples of adLIF versus LIF networks might explain the performance gap between these: While the less precise temporal tuning of LIF networks suffices to achieve a high accuracy on the training data, it falls behind in terms of generalization on the test set, due to confusion of temporal features from different classes.
Interestingly, the same analysis for an ALIF network \cite{yinAccurateEfficientTimedomain2021} with threshold adaptation revealed that the temporal tuning of ALIF networks is comparable to that of LIF networks, indicating the importance of oscillatory dynamics for temporal feature detection, see \nameref{supnotes4} and \nameref{supfig4}.

Similar results were obtained for SHD, Fig.~\ref{fig:optimization_based_sampling}e, where we applied $K=200$ iterations. The underlying class-descriptive features in the SHD task are less clearly visible, since it is instantiated from natural speech recordings. Nevertheless, one can clearly observe richer temporal structure of the adLIF-generated samples. These samples indicate, that the two network models tune to very different features of the input, which is again in alignment with the above reported inductive bias of the gradient. While LIF networks tend to tune to certain spike rates of different input neurons over prolonged durations, adLIF neurons rather tune to local variations of the spike rates.
In summary, our analysis supports the hypothesis that the superior performance of SNNs based on adLIF neurons on datasets like SHD stems from the fact that temporal features can effectively be learned and detected. 
\paragraph{Inherent normalization properties of adLIF neurons.}
Artificial neural networks as well as spiking neural networks are often trained using normalization techniques that normalize the input to the network layers\revision{, for example along the spatial dimension of a single batch }\cite{batchnorm}. \revision{As inputs to SNNs are in general temporal sequences, normalizations over both the spatial and temporal dimensions have been introduced \cite{guoMembranePotentialBatch2023,jiangTABTemporalAccumulated2023,zhengGoingDeeperDirectlyTrained2021,vicente-solaSpikingNeuralNetworks2023}}. \revision{Such normalization techniques, in particular over the temporal dimension, are however problematic from an implementation perspective, especially when such networks are deployed on neuromorphic hardware and the entire sequence is not known upfront.}
In contrast, all results reported in this article have been achieved without an explicit normalization technique, indicating that \revision{normalization over the temporal dimension} is not necessary for networks of adLIF neurons.
 
We argue that good performance without normalization is possible due to the negative feedback loop through the adaption current in adLIF neurons (Eqs.~\eqref{eq:continuous1}, \eqref{eq:continuous2}) which inherently stabilizes neuron responses as long as neurons are in the stable regime. In addition, the oscillatory sub-threshold response (Fig.~\ref{fig:fig1}b) tends to filter out constant offsets in the forward pass. Similarly, the oscillating gradient (Fig.~\ref{fig:gradient_magnitude}b) tends to filter out constant activation offsets during training, thus stabilizing training. 
To test the stabilizing effect of adLIF neurons, we investigated how a constant offset in the input during test time effects the accuracy of RSNNs. To this end, we tested LIF and adLIF networks trained on the \revision{clean} SHD dataset (Table \ref{tab:results}) on biased SHD test examples which we obtained by adding a constant offset to each input dimension. More precisely, the biased sample $\hat{\bm{x}}[k]$ at time step $k$ was given by $\hat{\bm{x}}[k] = \bm{x}[k] + \kappa \Bar{x}$, where $\Bar{x}$ is the mean over all input dimensions and time steps and $\kappa\ge 0$ scales the bias strength.
The classification accuracy of the network on these biased test samples as a function of the bias coefficient $\kappa$ is shown in Fig.~\ref{fig:bias_offset}a. 
\begin{figure}[!t]
    \centering
    \includegraphics[width=\textwidth]{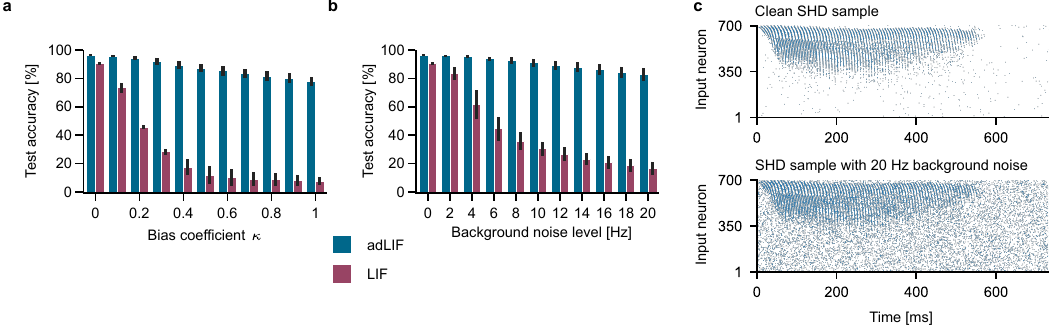}
    \caption{\textbf{Networks of adLIF neurons are robust to perturbation of the input.} \textbf{a)} Test accuracy of adLIF and LIF networks trained on SHD but tested on test examples with a constant input offset of strength $\kappa$ (see main text; bars show mean over $5$ trained network instances, black vertical lines denote STD). \textbf{b)} Same as panel a, but instead of constant offset, random spikes with a specific average rate were added to the data.  \textbf{c)} Spike pattern of a raw, unperturbed SHD sample (top) in comparison to the same SHD sample with added random background noise of a rate of \qty{20}{Hz} (rightmost bar in panel b).}
    \label{fig:bias_offset}
\end{figure}
Surprisingly, even for large bias values the adLIF network maintained a high classification accuracy. In comparison, the accuracy of a LIF network dropped rapidly with increased $\kappa$.

In a second experiment, instead of a constant bias, we added background noise via random spikes to the raw input data. Again, the adLIF network was surprisingly robust to this noise even for high background noise rates  robustness between adLIF and LIF networks, see Fig.~\ref{fig:bias_offset}b,c.

These results support the claim that normalization methods are not necessary to train noise-robust high-performance adLIF networks, which represents a substantial advantage of this model over LIF-based SNNs in neuromorphic applications.
\section*{Discussion}
\label{sec:discussion}
% summary
Spiking neural network models are the basis of many neuromorphic systems. For a long time, these networks used leaky integrate-and-fire neurons as their fundamental computational units. 
More recent work has shown that networks of adaptive spiking neurons outperform LIF networks in spatio-temporal processing tasks. However, a deep understanding of the mechanisms that underlie their superiority was lacking. In this article, we investigated the underpinnings of the computational capabilities of networks of adaptive spiking neurons. 
\revision{We first demonstrated, both analytically and empirically, why the commonly used Euler-Forward (EF) discretization is problematic for multi-state neuron models. Specifically, we showed that the EF-discretization leads to substantial deviations in the neuron model dynamics compared to its continuous counterpart (see Fig.~\ref{fig:stability}c) and that the magnitude of these deviations strongly depends on the discretization time step (Fig.~\ref{fig:stability}a,b). Moreover, parameter configurations that yield stable dynamics in the continuous model often result in divergent dynamics when applied to the EF-discretized model. Another unintended consequence of EF discretization is the introduction of interdependencies between parameters that are otherwise decoupled. We briefly examine this effect on the Balanced Harmonic Resonate-and-Fire (BHRF) model \cite{higuchi2024balanced} in \nameref{supnotes1} and \nameref{supfig1}. Finally, we demonstrated that all of these drawbacks can be eliminated, without incurring additional computational cost, by using the Symplectic Euler discretization instead of EF.}

Many digital neuromorphic chips implement time-discretized SNNs \cite{merolla2014million,furber2014spinnaker,orchardEfficientNeuromorphicSignal2021,frenkel2022reckon}. Adaptive neurons are an attractive model for such systems as they only add a single additional state variable per neuron. As synaptic connections are typically dominating implementation costs, this approximate doubling of resources needed for neuron dynamics is well justified by the improved performance. For example, Fig.~\ref{fig:oscillator_task}f shows that there exist tasks where adLIF networks can achieve superior performance to LIF networks with orders of magnitudes less parameters.
Compared to the standard Euler-discretized adLIF model, the computational operations needed to implement the SE-adLIF model are identical. Hence, the improved stability properties of this model are practically for free. 

Our analysis in Section \enameref{subsec:inductive_bias} indicates that adLIF should be well-suited to learn relevant temporal features from input sequences. Interestingly, our investigations revealed that LIF networks are surprisingly weak in that respect. This is witnessed by its low performance at the burst sequence detection task (Fig.~\ref{fig:optimization_based_sampling}) as well as by our input-feature analysis of trained LIF networks (Fig.~\ref{fig:optimization_based_sampling} panels d and f).
This is surprising, as theoretical arguments suggest that SNNs in general should be efficient in temporal computing tasks \cite{maass1997networks}. Our analysis suggests that the gradients in LIF networks fail to detect such temporal features, while for adLIF networks, these gradients are actually biased towards those. Our empirical evaluation supports this claim as adLIF networks excel at the  burst sequence detection task (Fig.~\ref{fig:optimization_based_sampling}), a task which we designed specifically to test temporal feature detection capabilities of SNNs (Fig.~\ref{fig:optimization_based_sampling}). The input-feature analysis for trained adLIF networks further supports this view (Fig.~\ref{fig:optimization_based_sampling} panels b and d). 

The auto-regressive task on a complex spring-mass system (Fig.~\ref{fig:oscillator_task}) can also be seen as a conceptual task designed to investigate the capabilities of SNNs to predict the behavior of complex oscillatory systems. Note that, although the trajectories of masses that have to be predicted are periodic, the period is very long due to the complex interactions of the four masses. This conceptual task is of high relevance as oscillations are ubiquitous in physical systems and biology, for example in limb movement patterns \cite{kilnerHumanCorticalMuscle2000,sanesOscillationsLocalField1993}. The superiority of adLIF networks in this task can be explained by the principles of physics-informed neural networks \cite{raissiPhysicsinformedNeuralNetworks2019}. In this framework, underlying physical laws of some training data are molded into the architecture of neural networks, such that the functions learned by these networks naturally follow these laws. Obviously, the oscillatory sub-threshold behavior of adLIF neurons fits well to the oscillatory dynamics of the spring-mass system. \revision{Taking one step further, we showed that the observed advantage of adaptive LIF neurons in this oscillatory toy task successfully transfers to the more complex, real-world inspired task of audio compression (see Fig.~\ref{fig:audio_compression_task}), providing a promising perspective for adaptive neurons in neuromorphic applications. Interestingly, we observed that the LIF networks were particularly sensitive to the choice of hyperparameters in this task in contrast to the SE-adLIF networks, due to the long sequence length of $2,560$ time steps. }

Finally, we empirically demonstrated the robustness of adLIF networks towards perturbations in the input, showcasing their invariance to shifts in the mean input strength. Surprisingly, the accuracy of adLIF networks on the SHD dataset maintained a high level ($>80\%$) even after doubling the mean input strength, whereas LIF networks drop to an accuracy of $10\%$ already for a much lower increase. We argue that this inheritance alleviates the need for layer normalization techniques in other types of artificial and spiking neural networks. Indeed, all our results were achieved without explicit normalization techniques. This finding is in particular relevant for neuromorphic implementations of SNNs, as explicit normalization is hard to implement in neuromorphic hardware, in particular for recurrent SNNs.

Oscillatory neural network dynamics have been studied not only in the context of SNNs. Several works \cite{rusch2021coupled,dubininFadingMemoryInductive2024b,effenbergerFunctionalRoleOscillatory2023b} identified favorable properties adopted by models utilizing some form of oscillations. Rusch et al.~\cite{rusch2021coupled} for example studied an RNN architecture in which recurrent dynamics were given by the equation of motion of a damped harmonic oscillator. The authors found, that these oscillations not only alleviate the vanishing and exploding gradient problem \citep{glorot}, but also perform well on a large variety of benchmarks. Effenberger et al.~\cite{effenbergerFunctionalRoleOscillatory2023b} proposed oscillating networks as a model for cortical columns.
In the field of artificial neural networks, the recent advent of state space models \cite{voelkerLegendreMemoryUnits2019b,guMambaLinearTimeSequence2023a,guptaDiagonalStateSpaces2022,smithSimplifiedStateSpace2023a} and linear recurrent networks \cite{orvietoResurrectingRecurrentNeural2023a} introduces a paradigm shift in sequence processing, where information is transported through constrained linear state transitions instead of being recurrently propagated between nonlinear neurons, as was previously the case in traditional recurrent neural networks \cite{elmanFindingStructureTime1990}. 
Similar to SNNs, state space models are obtained by discretizing continuous ordinary differential equations (ODEs) to form recurrent neural networks. Although spiking neuron models have always been derived from discretizing differential equations to obtain recurrent linear state transitions \cite{maass2001pulsed}, earlier neuron models, such as the LIF neuron, lack the temporal dynamics necessary to effectively propagate time-sensitive information. The relation between SNNs and state space models has recently been discussed \cite{balRethinkingSpikingNeural2024,schoneScalableEventbyeventProcessing2024a}.

\revision{The adaptation discussed in this article directly acts on the dynamics of the neuron state. To keep the model simple and in order to study the effects of oscillatory dynamics in a clean manner, we did not include other recently proposed model extensions that can improve network performance. For example, oscillatory neuron dynamics can be combined with synaptic delays \cite{hammouamri2023learning} or dendritic processing \cite{zheng2024temporal,ferrand2023context}. Deckers et al.~\cite{deckers2024co} reported promising results for a model that combines a variant of the adLIF model with synaptic delays. Further studies of such combinations, especially with the SE-adLIF model, constitute an interesting direction for future research.}

In summary, we have shown that networks of adaptive LIF neurons provide a powerful model of computation for neuromorphic systems. Stability issues during training can provably be avoided by the use of a suitable discretization method. Our results indicate that the properties of these neurons, in particular their sub-threshold oscillatory response, provide the basis for their spatio-temporal processing capabilities. 

\section*{Methods}
\label{sec11}
\subsection*{Details for simulations in Figure \ref{fig:fig2}}
\label{subsec:resonator_exp_details}For plots in Fig.~\ref{fig:fig2}b-c, we used the adLIF neuron parameters $\tau_w=60$ ms, $\tau_u=15$ ms, $a=120$, $\vartheta=\infty$. For Fig.~\ref{fig:fig2}d-f the adLIF neuron parameters were $\tau_w=200$ ms, $\tau_u=125$ ms, $a=100$, $\vartheta=\infty$. For Fig.~\ref{fig:fig2}a and g, parameters for the LIF neuron were $\tau_u=125$ ms and $\vartheta=\infty$. 

The spike trains for Fig.~\ref{fig:fig2}d and e were generated deterministically in the following way. We first computed a spike rate $s[t] \in [0,0.2]$ for each time step $t$ according to $s[t] = 0.2 (0.5 + 0.5 \sin\left(2\pi t F \Delta t\right))$, where $F$ is the oscillation frequency of the sinus signal in Hz ($F=10$ Hz for Fig.~\ref{fig:fig2}d, $F=7$ Hz for Fig.~\ref{fig:fig2}e), $\Delta t = 1$ ms the sampling time step, and $t \in [0, ... ,T]$. Then, the spike train $S[t]$ was computed by cumulatively summing over the spike rates in an integrate-and-fire manner with $v[t] = v[t-1] + s[t] - S[t]$, where $S[t] = \Theta(v[t] - 1)$ with Heaviside step function $\Theta$ and $v[0]=0$. 
\subsection*{Derivation of matrices $\seA$ and $\seB$ for the SE-adLIF neuron}
\label{subsec:meth_update_matrices_for_se}
Here, we provide the derivation of the matrices $\seA$ and $\seB$ for the SE-discretized adLIF neuron given in Eq.~\eqref{eq:se_lti}.
To rewrite the state update equations of the SE-adLIF neuron model, given by
\begin{subequations}
\begin{align}
    \hat{u}[k] &= \alpha u[k-1] + (1-\alpha) \left(-w[k-1] + I[k]\right) \label{eq:u_improved_update_meth} \\
    w[k] &= \beta w[k-1] + (1-\beta) (a u[k] + b S[k]),
    \label{eq:w_improved_update_meth}
\end{align}
\end{subequations}
into the canonical state-space representation
\begin{equation}
    \bm{s}[k] = 
    \begin{pmatrix}
    u[k] \\
    w[k]
    \end{pmatrix} = \seA \bm{s}[k-1] + \seB  \bm{x}[k],
\end{equation}
we substitute $u[k]$ in Eq.~\eqref{eq:w_improved_update_meth} by $\hat{u}[k]$ from Eq.~\eqref{eq:u_improved_update_meth}. 
%Note, that since \rromain{$u[k] = \hat{u}[k] \cdot (1-S[k])$}, we can write $\hat{u}[k]=u[k]$ in the absence of an output spike ($S[k]=0$). 
Since we study the sub-threshold dynamics of the neuron (assuming $S[k]=0$), we can substitute $\hat{u}[k]$ by $u[k]$, since $u[k] = \hat{u}[k] \cdot (1-S[k])$. This yields the update equation for $w[k]$ given by
\begin{equation}
        w[k] = \beta w[k-1] + (1-\beta) \left(a \underbrace{\left(\alpha u[k-1] + (1-\alpha) \left(-w[k-1] + I[k]\right)\right)}_{\hat{u}[k]} + b S[k]\right).
\end{equation}
Since this formulation gives the new state of adaptation variable $w[k]$ as function of the previous states $u[k-1]$ and $w[k-1]$, it can be transformed into a matrix formulation
\begin{equation}
    \begin{pmatrix}
    u[k] \\
    w[k]
    \end{pmatrix} = \underbrace{\begin{pmatrix}
     \alpha & -(1-\alpha)\\
    a(1-\beta)\alpha & \quad \beta - a(1 - \beta)(1 - \alpha)\\
    \end{pmatrix}}_{\seA} \begin{pmatrix}
    u[k-1] \\
    w[k-1]
    \end{pmatrix}  + \underbrace{\begin{pmatrix}
    (1-\alpha) & 0\\
    0 & b(1-\beta) \\
    \end{pmatrix}}_{\seB}  \begin{pmatrix}
    I[k] \\
    S[k]
    \end{pmatrix} .
\end{equation}
\subsection*{Proof of stability bounds for the continuous adLIF model}
\label{subsubsec:continuous_stability_bounds}
For all following analyses in Sections \enameref{subsubsec:continuous_stability_bounds} to \enameref{subsubsec:meth_frequency_ranges}, we consider the subthreshold regime, i.e., we assume a spike threshold $\vartheta = \infty$ such that $S[k] = 0$ for all $k$, and we assume no external inputs $I$.

In this section, we prove that the continuous-time adLIF neuron exhibits stable sub-threshold dynamics for all $a > -1$. 
\begin{lemma}
    The continuous adLIF neuron model from Eqs.~\eqref{eq:continuous1} and \eqref{eq:continuous2} is stable in the sub-threshold regime for all $\tau_u > 0, \tau_w > 0, a> -1$.
\end{lemma}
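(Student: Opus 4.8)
The plan is to reduce the question to an eigenvalue analysis of the $2\times 2$ system matrix of the associated linear time-invariant (LTI) system. In the sub-threshold regime with $S[k]=0$ and no external input, Eqs.~\eqref{eq:continuous1} and \eqref{eq:continuous2} reduce to the homogeneous system $\dot{\bm{s}} = A\bm{s}$ with
\[
A = \begin{pmatrix} -\frac{1}{\tau_u} & -\frac{1}{\tau_u} \\[2pt] \frac{a}{\tau_w} & -\frac{1}{\tau_w} \end{pmatrix}.
\]
For such a linear system, asymptotic stability is equivalent to both eigenvalues of $A$ having strictly negative real part, so the task is to show that this holds for all $\tau_u, \tau_w > 0$ and $a > -1$.

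Rather than computing the eigenvalues explicitly (which would force a case split between the oscillatory/complex and the real regimes), I would invoke the standard Routh--Hurwitz criterion for a $2\times 2$ matrix: both roots of the characteristic polynomial $\lambda^2 - \mathrm{tr}(A)\,\lambda + \det(A)$ lie in the open left half-plane if and only if $\mathrm{tr}(A) < 0$ and $\det(A) > 0$. This single criterion simultaneously covers the complex-conjugate case (where the common real part equals $\mathrm{tr}(A)/2$) and the real case (where negativity of the sum and positivity of the product of the roots forces both to be negative), so no explicit case distinction is required.

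The two remaining steps are short computations. The trace is $\mathrm{tr}(A) = -\tfrac{1}{\tau_u} - \tfrac{1}{\tau_w}$, which is strictly negative for every $\tau_u, \tau_w > 0$, so the first condition holds unconditionally. The determinant is $\det(A) = \tfrac{1}{\tau_u\tau_w} + \tfrac{a}{\tau_u\tau_w} = \tfrac{1+a}{\tau_u\tau_w}$, which is strictly positive precisely when $1+a > 0$, i.e.\ when $a > -1$. Combining the two conditions yields asymptotic stability exactly on the claimed parameter range, completing the argument.

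There is no substantial obstacle here; the result is elementary once the problem is cast as a $2\times 2$ LTI stability question. The only point deserving care is to justify the use of the trace/determinant criterion in place of the eigenvalues themselves, making explicit that it correctly handles the complex-eigenvalue (oscillatory) case. It is also worth remarking that the hypothesis $a > -1$ enters the proof solely through the sign of the determinant, which is the natural reason the stability bound takes this particular form.
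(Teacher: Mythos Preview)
Your proof is correct and takes a genuinely different route from the paper. The paper computes the eigenvalues of $A$ explicitly as
\[
\lambda_{1,2} = \frac{-\tau_u - \tau_w \pm \sqrt{(\tau_u-\tau_w)^2 - 4a\tau_u\tau_w}}{2\tau_u\tau_w},
\]
then splits into two cases according to the sign of the discriminant: in the complex case the common real part $\frac{-\tau_u-\tau_w}{2\tau_u\tau_w}$ is manifestly negative, and in the real case one argues that $\lambda_2<0$ always while $\lambda_1\le 0$ reduces, after squaring, to $a\ge -1$. Your trace/determinant (Routh--Hurwitz) argument avoids this case split entirely and reduces the whole lemma to two one-line computations. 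The paper's approach has the minor advantage that the explicit eigenvalue formula is reused later to discuss oscillation frequencies and decay rates of the continuous model, so computing them is not wasted effort in context; your approach is shorter, makes the role of the bound $a>-1$ more transparent (it is exactly the positivity of $\det A$), and yields asymptotic stability directly without the marginal $a=-1$ case that the paper's inequalities leave in.
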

\begin{proof}
In general, a continuous-time linear dynamical system $\dot{x}=Ax$ is Lyapunov-stable, if the real parts $\Re(\lambda_{1,2})$ of both eigenvalues $\lambda_{1,2}$ of matrix $A$ satisfy $\Re(\lambda_{1,2}) \leq 0$ \cite{linear-systems-theory}. For the adLIF model, the matrix $A$ is given by
\begin{equation}
    A = \begin{pmatrix}
     -\frac{1}{\tau_u} & -\frac{1}{\tau_u}\\
    \frac{a}{\tau_w} & -\frac{1}{\tau_w}
    \end{pmatrix},
\end{equation}
with eigenvalues
\begin{align}
    \lambda_1 &= \frac{ - \tau_u - \tau_w + \sqrt{-4 a \tau_u \tau_w + (\tau_u - \tau_w)^2}}{2 \tau_u \tau_w} \\
    \lambda_2 &= \frac{- \tau_u - \tau_w - \sqrt{-4 a \tau_u \tau_w + (\tau_u - \tau_w)^2}}{2 \tau_u \tau_w} \label{meth:cont_eigvals1}.
\end{align}
In the complex-valued case (where the discriminant $-4 a \tau_u \tau_w + (\tau_u - \tau_w)^2 < 0$), the real part $\Re(\lambda_{1,2})$ is given by
\begin{equation}
    \Re(\lambda_{1,2}) = \frac{- \tau_u - \tau_w}{2 \tau_u \tau_w},
\end{equation}
where  $\Re(\lambda_{1,2}) < 0$ since $\tau_u,\tau_w>0$. 
In the case of real eigenvalues, $\lambda_2<0$ is always true (since Eq.~\eqref{meth:cont_eigvals1} only contains negative terms), whereas $\lambda_1 \leq 0$ is only true if
\begin{align}
    \sqrt{-4 a \tau_u \tau_w + (\tau_u - \tau_w)^2} \leq \tau_u + \tau_w \\
    a \geq - \frac{(\tau_u + \tau_w)^2 - (\tau_u - \tau_w)^2}{4 \tau_u \tau_w}
\end{align}
Hence, the continuous-time adLIF neuron is stable for all
\begin{equation}
    a \geq - 1.
\end{equation}
This proves, that for $a \geq -1$, the continuous-time adLIF neuron is Lyapunov stable.
\end{proof}
\subsection*{Derivation of stability bounds for EF-adLIF}
\label{meth:euler_forward_stab}
In this section we derive the stability bounds of the EF-adLIF neuron with respect to parameter $a$. These bounds provide the basis for the subsequent proofs.
The state update equations of the Euler-Forward discretized adLIF neuron are given by
\begin{equation}
\dot {\bm{s}}(t) = 
    \begin{pmatrix}
    \dot{u}(t) \\
    \dot{w}(t)
    \end{pmatrix} = \eulerA \bm{s}(t) + \eulerB \bm{x}(t) \nonumber 
\end{equation}
with
\begin{equation}
    \eulerA =  \begin{pmatrix}
     \alpha & -(1-\alpha)\\
    a(1-\beta) & \beta
    \end{pmatrix} \hspace{30pt}
    \eulerB =  \begin{pmatrix}
    (1-\alpha) & 0\\
    0 & b(1-\beta) \\
    \end{pmatrix}
\end{equation}
with $\alpha = e^{-\frac{\Delta t}{\tau_u}}$ and $\beta = e^{-\frac{\Delta t}{\tau_w}}$. This system is asymptotically stable, if the spectral radius $\rho$, given by $\rho = \max\left(|\lambda_1|, |\lambda_2|\right)$ is less than $1$. We differentiate between two cases: complex-valued and real-valued eigenvalues. We assume given time constants $\tau_u, \tau_w > 0$ and calculate the bound as function of parameter $a$.
To simplify the notation, we introduce $\Bar{\alpha} = (1 - \alpha)$ and $\Bar{\beta} = (1 - \beta)$.
\begin{lemma}
\label{lem:ef_complexe_stability_range} 
The EF-adLIF model is asymptotically stable and oscillating in the sub-threshold regime, for $a \in~(a_0^\text{EF}, a_\mathrm{max}^\text{EF})$, with $a_0^\text{EF} = \frac{(\alpha - \beta)^2 }{4(1 - \alpha)(1 - \beta)}$ and $a_\mathrm{max}^\text{EF} = \frac{1 - \alpha\beta}{(1 - \alpha)(1 - \beta)}$.  The real part of the eigenvalues is strictly positive and given by $\Re(\lambda_{1,2}) = \frac{\alpha + \beta}{2} > 0$.
\end{lemma}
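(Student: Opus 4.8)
The plan is to compute the eigenvalues of the $2\times 2$ matrix $\eulerA$ directly via its characteristic polynomial, identify exactly when they form a complex-conjugate pair, and then translate the spectral-radius condition $\rho<1$ into bounds on $a$. First I would write down the characteristic polynomial $\det(\eulerA - \lambda \mathbb{I})=0$. For
\[
\eulerA = \begin{pmatrix} \alpha & -\bar\alpha \\ a\bar\beta & \beta \end{pmatrix},
\]
with $\bar\alpha = 1-\alpha$ and $\bar\beta = 1-\beta$, the trace is $\alpha+\beta$ and the determinant is $\alpha\beta + a\bar\alpha\bar\beta$. So the characteristic polynomial is $\lambda^2 - (\alpha+\beta)\lambda + (\alpha\beta + a\bar\alpha\bar\beta) = 0$, giving eigenvalues $\lambda_{1,2} = \tfrac{\alpha+\beta}{2} \pm \tfrac{1}{2}\sqrt{(\alpha-\beta)^2 - 4a\bar\alpha\bar\beta}$.

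Next I would determine the oscillatory (complex) regime. The eigenvalues are complex precisely when the discriminant $(\alpha-\beta)^2 - 4a\bar\alpha\bar\beta$ is negative, i.e.\ when $a > \tfrac{(\alpha-\beta)^2}{4\bar\alpha\bar\beta} = a_0^{\text{EF}}$; this immediately yields the lower bound $a_0^{\text{EF}}$ and establishes oscillation above it. In this complex regime the real part is simply half the trace, $\Re(\lambda_{1,2}) = \tfrac{\alpha+\beta}{2}$, which is strictly positive since $\alpha,\beta \in (0,1)$ for $\tau_u,\tau_w,\Delta t > 0$; this proves the final claim of the lemma.

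For the upper stability bound I would use the standard fact that for complex-conjugate eigenvalues the modulus satisfies $|\lambda_{1,2}|^2 = \det(\eulerA) = \alpha\beta + a\bar\alpha\bar\beta$, because the product of complex conjugates equals the squared modulus. The asymptotic-stability condition $\rho = |\lambda_{1,2}| < 1$ then becomes $\alpha\beta + a\bar\alpha\bar\beta < 1$, which rearranges directly to $a < \tfrac{1-\alpha\beta}{\bar\alpha\bar\beta} = a_\mathrm{max}^{\text{EF}}$. Combining the three pieces gives stability together with oscillation exactly for $a \in (a_0^{\text{EF}}, a_\mathrm{max}^{\text{EF}})$.

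The only subtle point — and the step I would treat most carefully rather than the algebra — is to verify that the interval is genuinely nonempty and that the claimed characterization is consistent, i.e.\ $a_0^{\text{EF}} < a_\mathrm{max}^{\text{EF}}$. This reduces to checking $(\alpha-\beta)^2 < 4(1-\alpha\beta)$, which follows since $(\alpha-\beta)^2 = (\alpha+\beta)^2 - 4\alpha\beta \le 4 - 4\alpha\beta = 4(1-\alpha\beta)$ using $\alpha+\beta < 2$. I would also note that in the complex regime the spectral radius equals the common modulus of both eigenvalues, so no separate comparison of $|\lambda_1|$ versus $|\lambda_2|$ is needed; this contrasts with the real-eigenvalue case, which lies outside the scope of this particular lemma and is presumably handled separately.
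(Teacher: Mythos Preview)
Your proof is correct and follows essentially the same route as the paper: compute the characteristic polynomial of $\eulerA$, read off the discriminant to get the oscillation threshold $a_0^{\text{EF}}$, identify the real part as half the trace, and bound the modulus to obtain $a_{\max}^{\text{EF}}$. Your use of $|\lambda_{1,2}|^2=\det(\eulerA)$ is a slight shortcut over the paper's explicit $\sqrt{\Re^2+\Im^2}$ computation, and your verification that $a_0^{\text{EF}}<a_{\max}^{\text{EF}}$ is a welcome addition the paper omits.
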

\begin{proof}
The characteristic polynomial $\chi(\eulerA)$ is given by
\begin{equation}
    \chi(\eulerA) = \lambda^2 + \lambda\left(- \alpha - \beta\right) + \beta\alpha + a\Bar{\alpha}\Bar{\beta},
\end{equation}
with discriminant
\begin{align}
    \Delta_{\lambda} & = (\alpha - \beta)^2 - 4a\Bar{\alpha}\Bar{\beta}.
\end{align}

$\chi(\eulerA)$ admits complex solutions for $\Delta_\lambda < 0$ depending on $a$, with
\begin{align}
     a &> \frac{(\alpha - \beta)^2 }{4\Bar{\alpha}\Bar{\beta}} = a_0^\text{EF},
\end{align}
Complex roots of $\chi(\eulerA)$ yield the complex-conjugate eigenvalues
\begin{equation}
\label{eq:ef_eigenvalues}
    \lambda_{1,2} = \frac{\alpha + \beta}{2} \pm i \frac{\sqrt{-\Delta_\lambda}}{2}.
\end{equation}
This proves that the complex eigenvalues have a strictly positive real part given by $\Re(\lambda_{1,2}) = \frac{\alpha + \beta}{2} > 0$.
In that case the spectral radius $\rho$ is defined as $\rho(\eulerA) = |\lambda_1| = |\lambda_2|$, where $|z| = \sqrt{\mathrm{Re}(z)^2 + \mathrm{Im}(z)^2}$ is the modulus of the complex number with
\begin{align}
    \rho(\eulerA) &= \frac{1}{2}\sqrt{\left(\beta + \alpha\right)^2 - \Delta_\lambda} \\
    &= \frac{1}{2}\sqrt{\left(\beta + \alpha\right)^2 - \left(\beta -\alpha\right)^2 + 4\revision{a}\Bar{\alpha}}\Bar{\beta}\\
    &= \sqrt{\alpha\beta + a\Bar{\alpha}\Bar{\beta}}.
\end{align}
In the complex regime, the system is stable when $\rho(\eulerA) < 1$, and thus, 
\begin{equation}\label{eq:ef_complex_upper_bound}
    a < \frac{1 - \alpha\beta}{\Bar{\alpha}\Bar{\beta}} = a_\mathrm{max}^\text{EF},
\end{equation}
where $a_\mathrm{max}^\text{EF}$ gives the upper stability bound.

The system is asymptotically stable in the sub-threshold regime for $a \in~(\frac{(\alpha - \beta)^2 }{4\Bar{\alpha}\Bar{\beta}}, \frac{1 - \alpha\beta}{\Bar{\alpha}\Bar{\beta}})$. In this bound, the system has complex eigenvalues and thus admits oscillations. 
\end{proof}
\begin{lemma}
\label{lem:ef_real_stability_range} 
EF-adLIF is asymptotically stable and not oscillating in the sub-threshold regime for $a \in (-1, a_0^\text{EF}]$ with $a_0^\text{EF} = \frac{(\alpha - \beta)^2 }{4\Bar{\alpha}\Bar{\beta}}$.
\end{lemma}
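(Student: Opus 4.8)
The plan is to split the claim into its two ingredients: that the eigenvalues of $\eulerA$ are real (no oscillation) precisely on the stated interval, and that the spectral radius remains below $1$ (asymptotic stability) there. Both follow from the characteristic polynomial $\chi(\eulerA) = \lambda^2 - (\alpha+\beta)\lambda + \alpha\beta + a\Bar{\alpha}\Bar{\beta}$ already obtained in Lemma~\ref{lem:ef_complexe_stability_range}, whose discriminant is $\Delta_\lambda = (\alpha-\beta)^2 - 4a\Bar{\alpha}\Bar{\beta}$.

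First I would settle the non-oscillating regime. The eigenvalues are real exactly when $\Delta_\lambda \ge 0$, and since $\Bar{\alpha}\Bar{\beta} > 0$ for $\alpha,\beta \in (0,1)$ this is equivalent to $a \le \tfrac{(\alpha-\beta)^2}{4\Bar{\alpha}\Bar{\beta}} = a_0^\text{EF}$, with equality producing the critically damped case (a repeated root). This fixes the upper endpoint and yields the ``not oscillating'' part directly.

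Next I would establish stability on $(-1, a_0^\text{EF}]$. In the real case the eigenvalues are $\lambda_{1,2} = \tfrac{1}{2}((\alpha+\beta) \pm \sqrt{\Delta_\lambda})$ with $\lambda_2 \le \lambda_1$, so $\rho(\eulerA) < 1$ is equivalent to $-1 < \lambda_2$ and $\lambda_1 < 1$ (the bounds $\lambda_1 > -1$ and $\lambda_2 < 1$ are automatic from the ordering). The inequality $\lambda_1 < 1$ rearranges to $\sqrt{\Delta_\lambda} < 2-(\alpha+\beta)$; as $\alpha+\beta < 2$ both sides are nonnegative, so squaring is legitimate, and after substituting $\Delta_\lambda$ the $(\alpha+\beta)^2$ terms cancel and the inequality collapses to $-a\,\Bar{\alpha}\Bar{\beta} < \Bar{\alpha}\Bar{\beta}$, i.e. $a > -1$. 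Symmetrically, $\lambda_2 > -1$ rearranges to $\sqrt{\Delta_\lambda} < 2+(\alpha+\beta)$; squaring (again sign-legitimate) and simplifying gives $(1+\alpha)(1+\beta) + a\Bar{\alpha}\Bar{\beta} > 0$, which for $a > -1$ is bounded below by $(1+\alpha)(1+\beta) - (1-\alpha)(1-\beta) = 2(\alpha+\beta) > 0$. Hence both binding inequalities hold exactly when $a > -1$, giving stability throughout $(-1, a_0^\text{EF}]$.

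The main obstacle I anticipate is bookkeeping rather than anything conceptual: one must identify which of the four root-location inequalities against $\pm 1$ are actually binding and confirm that each squaring step is sign-legitimate, keeping in mind that $\sqrt{\Delta_\lambda}$ is real only on this very range. A cleaner alternative that avoids the case analysis is the Jury/Schur--Cohn criterion for the monic quadratic $\lambda^2 + c_1\lambda + c_0$ with $c_1 = -(\alpha+\beta)$ and $c_0 = \alpha\beta + a\Bar{\alpha}\Bar{\beta}$: the conditions $|c_0| < 1$ and $|c_1| < 1 + c_0$ reduce to $a < a_\mathrm{max}^\text{EF}$ and $a > -1$ respectively, recovering the full stability interval $(-1, a_\mathrm{max}^\text{EF})$ whose real-eigenvalue portion is precisely $(-1, a_0^\text{EF}]$. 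I would present the direct eigenvalue computation as the primary argument, since it mirrors the modulus computation already used for the complex case in Lemma~\ref{lem:ef_complexe_stability_range}, and note the Jury criterion as an independent consistency check.
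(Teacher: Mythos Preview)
Your proposal is correct and follows the same overall strategy as the paper: compute the discriminant of $\chi(\eulerA)$ to locate the real-eigenvalue regime, then bound the real roots against $\pm 1$. The execution differs in a useful way. The paper argues via continuity: it notes the repeated root $\tfrac{\alpha+\beta}{2}<1$ at $a=a_0^\text{EF}$, observes $\lambda_1\to\infty$ as $a\to-\infty$, and solves $\lambda_1=1$ to extract $a_\mathrm{min}^\text{EF}=-1$; it then asserts without further justification that $\lambda_1$ dominates the spectral radius, leaving the bound $\lambda_2>-1$ implicit. Your direct algebraic reduction handles both inequalities explicitly and shows cleanly that $\lambda_1<1$ is the binding constraint while $\lambda_2>-1$ is slack for $a>-1$, which is a small but genuine improvement in rigor. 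The Jury--Schur--Cohn check you add is not in the paper and provides a nice cross-validation that also unifies the real and complex cases in one stroke.
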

\begin{proof}
The characteristic polynomial $\chi(\eulerA)$ is given by
\begin{equation}
    \chi(\eulerA) = \lambda^2 + \lambda\left(- \alpha - \beta\right) + \beta\alpha + a\Bar{\alpha}\Bar{\beta},
\end{equation}
with discriminant
\begin{align}
    \Delta_{\lambda} & = (\alpha - \beta)^2 - 4a\Bar{\alpha}\Bar{\beta}
\end{align}
$\chi(\eulerA)$ admits real solutions for $\Delta_\lambda \geq 0$ depending on $a$, with
\begin{align}
    \Delta_\lambda & \geq 0\\
    \label{eq:ef_real_criterium}
     (\alpha - \beta)^2  & \geq 4a\Bar{\alpha}\Bar{\beta}\\
     a & \leq \frac{(\alpha - \beta)^2 }{4\Bar{\alpha}\Bar{\beta}} = a_0^\text{EF},
\end{align}
 $\chi(\eulerA)$ thus admits real solutions in the interval $\left(\infty, a_0^\text{EF}\right]$ with real eigenvalues
 \begin{equation}
    \lambda_{1,2} = \frac{\alpha + \beta \pm \sqrt{\Delta_\lambda}}{2}.
\end{equation}
For $a = a_0^\text{EF}$, the singular real root of $\chi(\eulerA)$ is
\begin{equation}\label{eq:ef_sing_root_real}
    \lambda_0 = \frac{\alpha + \beta}{2} < 1.
\end{equation}
For lower values of $a$ we have the following asymptotic behavior,
\begin{align}
    \lim_{a \to -\infty} \lambda_1 &= \frac{\alpha + \beta + \sqrt{\Delta_\lambda}}{2} =\infty\\
    \lim_{a \to -\infty} \lambda_2 &= \frac{\alpha + \beta - \sqrt{\Delta_\lambda}}{2} = -\infty,
\end{align}
where $\lambda_1$ is the term with the higher absolute value and hence determines the spectral radius $\rho(\eulerA)$. It exists a value $a_\mathrm{min}^\text{EF}$ such that $\lambda_1(a_\mathrm{min}^\text{EF}) = 1$,
\begin{align}\label{eq:ef_real_lower_bound}
    \lambda_1 &= \frac{\alpha + \beta + \sqrt{(\alpha - \beta)^2 - 4a\Bar{\alpha}\Bar{\beta}}}{2} = 1,\\
    a_\mathrm{min}^\text{EF} &= \frac{1 - \alpha -\beta + \alpha\beta}{-\Bar{\alpha}\Bar{\beta}} = \frac{\Bar{\alpha}\Bar{\beta}}{-\Bar{\alpha}\Bar{\beta}} = -1,
\end{align}
and the system is unstable for $a < a_\mathrm{min}^\text{EF} = -1$.
The system is thus asymptotically stable in the sub-threshold regime for $a \in (-1, a_0^\text{EF}]$. In this bound, the system has real eigenvalues and thus doesn't admit oscillations. 
\end{proof}

\begin{corollary}\label{thm:stability_ef}
    In the sub-threshold regime (i.e. $\vartheta = \infty$), for $\tau_u,\tau_w, \Delta t > 0$, $\alpha = e^{-\frac{\Delta t}{\tau_u}}$ and $\beta = e^{-\frac{\Delta t}{\tau_w}}$, EF-adLIF is asymptotically stable for $a \in (-1, a_{\mathrm{max}}^\text{EF})$ with $a_{\mathrm{max}}^\text{EF} = \frac{1 - \alpha\beta}{(1 - \beta)(1 - \alpha)}$.
\end{corollary}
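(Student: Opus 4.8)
The plan is to obtain the corollary by simply merging the two preceding lemmas, which between them exhaust the claimed interval. The key observation is that the sign of the discriminant $\Delta_\lambda = (\alpha-\beta)^2 - 4a(1-\alpha)(1-\beta)$ partitions the parameter axis: since $\tau_u,\tau_w,\Delta t > 0$ force $\alpha,\beta \in (0,1)$ and hence $(1-\alpha)(1-\beta) > 0$, the discriminant is non-negative exactly when $a \le a_0^\text{EF}$ and negative when $a > a_0^\text{EF}$, with $a_0^\text{EF} = \frac{(\alpha-\beta)^2}{4(1-\alpha)(1-\beta)}$. Thus the real-eigenvalue branch and the complex-conjugate branch abut precisely at $a = a_0^\text{EF}$, with no gap and no overlap, so stability on each branch separately already covers everything.

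Concretely, I would first invoke Lemma~\ref{lem:ef_real_stability_range} for the real branch $a \in (-1, a_0^\text{EF}]$, where the dominant root reaches modulus $1$ only at $a_\mathrm{min}^\text{EF} = -1$, so $\rho(\eulerA) < 1$ throughout. Then I would invoke Lemma~\ref{lem:ef_complexe_stability_range} for the complex branch $a \in (a_0^\text{EF}, a_\mathrm{max}^\text{EF})$, where $\rho(\eulerA) = \sqrt{\alpha\beta + a(1-\alpha)(1-\beta)} < 1$ is equivalent to $a < a_\mathrm{max}^\text{EF}$. Since asymptotic stability of the discrete linear recurrence $\bm{s}[k] = \eulerA\,\bm{s}[k-1]$ is equivalent to $\rho(\eulerA) < 1$, taking the union of the two intervals
\begin{equation}
    (-1,\, a_0^\text{EF}] \,\cup\, (a_0^\text{EF},\, a_\mathrm{max}^\text{EF}) = (-1,\, a_\mathrm{max}^\text{EF})
\end{equation}
delivers asymptotic stability on the full range, which is the assertion.

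The only point requiring slight care is the junction $a = a_0^\text{EF}$, where $\eulerA$ has the repeated eigenvalue $\lambda_0 = \frac{\alpha+\beta}{2}$ and may be defective (non-diagonalizable). Even in that case, asymptotic stability of the discrete system persists, because a Jordan block whose eigenvalue has modulus strictly below one still satisfies $\eulerA^k \to 0$ (the polynomial-in-$k$ prefactor is overwhelmed by the geometric decay). As $\alpha,\beta \in (0,1)$ yield $0 < \lambda_0 < 1$, this boundary case is included, and it is moreover already absorbed into the closed endpoint of Lemma~\ref{lem:ef_real_stability_range}. I expect no substantial obstacle here: the corollary is essentially a bookkeeping step, since the genuine work --- deriving the branch-wise spectral-radius conditions and pinning down the value $a_\mathrm{max}^\text{EF}$ --- was already carried out in the two lemmas.
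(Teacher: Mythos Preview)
Your proposal is correct and follows exactly the paper's own approach: the paper's proof of this corollary is the single sentence ``This is a consequence of Lemma~\ref{lem:ef_complexe_stability_range} and Lemma~\ref{lem:ef_real_stability_range},'' and you have simply spelled out the union $(-1, a_0^\text{EF}] \cup (a_0^\text{EF}, a_\mathrm{max}^\text{EF}) = (-1, a_\mathrm{max}^\text{EF})$ with some extra care at the junction point. Your discussion of the defective case at $a = a_0^\text{EF}$ is a nice addition that the paper leaves implicit (it is absorbed into the closed endpoint of Lemma~\ref{lem:ef_real_stability_range} via the explicit computation $\lambda_0 = \frac{\alpha+\beta}{2} < 1$).
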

    This is a consequence of Lemma \ref{lem:ef_complexe_stability_range} and Lemma \ref{lem:ef_real_stability_range}.
\subsection*{Derivation of the stability bounds of SE-adLIF}
\label{meth:se_stab}
Analogously to Section \enameref{meth:euler_forward_stab}, we can compute the stability bounds for the SE-discretized adLIF neuron (SE-adLIF) with respect to parameters $\tau_u$, $\tau_w$, and $a$. Again, we assume time constants $\tau_u, \tau_w > 0$ as given and calculate the stability bounds with respect to parameter $a$. Recall state transition matrix $\seA$ from Eq.~\eqref{eq:se_lti}:
\begin{equation}
\seA =  \begin{pmatrix}
     \alpha & -(1-\alpha)\\
    a(1-\beta)\alpha & \quad \beta - a(1 - \beta)(1 - \alpha)\\
    \end{pmatrix}
\end{equation}
with $\alpha = e^{-\frac{\Delta t}{\tau_u}}$ and $\beta = e^{-\frac{\Delta t}{\tau_w}}$.
To simplify notation, we again introduce $\Bar{\alpha} = (1 - \alpha)$ and $\Bar{\beta} = (1 - \beta)$.
\begin{lemma}
\label{lem:se_complex_stability_range} 
SE-adLIF is stable and oscillating in the sub-threshold regime for $a \in (a_{1}^\text{SE},a_{2}^\text{SE})$, with $a_{1}^\text{SE} = \frac{\left(\sqrt{\beta} - \sqrt{\alpha}\right)^2}{\Bar{\beta}\Bar{\alpha}}$ and $a_{2}^\text{SE} = \frac{\left(\sqrt{\beta} + \sqrt{\alpha}\right)^2}{\Bar{\beta}\Bar{\alpha}}$. The spectral radius is independent of $a$ and given by $\rho = \sqrt{\beta\alpha} < 1$.
\end{lemma}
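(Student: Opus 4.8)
The plan is to mirror the structure of the Euler-Forward analysis in Lemma~\ref{lem:ef_complexe_stability_range}, but to exploit a crucial algebraic cancellation specific to the SE update. First I would write down the characteristic polynomial $\chi(\seA) = \lambda^2 - \mathrm{tr}(\seA)\,\lambda + \det(\seA)$. The trace is immediate, $\mathrm{tr}(\seA) = \alpha + \beta - a\Bar{\alpha}\Bar{\beta}$, and depends on $a$. The decisive step is the determinant: expanding $\det(\seA) = \alpha\bigl(\beta - a\Bar{\alpha}\Bar{\beta}\bigr) - (-\Bar{\alpha})\bigl(a\Bar{\beta}\alpha\bigr)$, the two $a$-dependent terms $-a\alpha\Bar{\alpha}\Bar{\beta}$ and $+a\alpha\Bar{\alpha}\Bar{\beta}$ cancel exactly, leaving $\det(\seA) = \alpha\beta$ independent of $a$. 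This cancellation is the heart of the lemma and the single feature that distinguishes SE from EF; I expect it to be the step that carries all the weight, while everything after it is routine.

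Second, I would determine the oscillatory (complex-eigenvalue) regime from the discriminant $\Delta_\lambda = \mathrm{tr}(\seA)^2 - 4\det(\seA) = (\alpha + \beta - a\Bar{\alpha}\Bar{\beta})^2 - 4\alpha\beta$. The eigenvalues are complex precisely when $\Delta_\lambda < 0$, i.e.\ when $|\alpha + \beta - a\Bar{\alpha}\Bar{\beta}| < 2\sqrt{\alpha\beta}$. Using that $\Bar{\alpha}\Bar{\beta} > 0$ for $\tau_u,\tau_w,\Delta t > 0$ (so $\alpha,\beta \in (0,1)$), I would solve the resulting double inequality for $a$ and recognise the perfect squares $\alpha + \beta \mp 2\sqrt{\alpha\beta} = (\sqrt{\alpha}\mp\sqrt{\beta})^2$. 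This yields the two bounds $a_1^\text{SE} = \frac{(\sqrt{\beta}-\sqrt{\alpha})^2}{\Bar{\beta}\Bar{\alpha}}$ and $a_2^\text{SE} = \frac{(\sqrt{\beta}+\sqrt{\alpha})^2}{\Bar{\beta}\Bar{\alpha}}$, so that the neuron has complex-conjugate eigenvalues, and hence oscillates, exactly for $a \in (a_1^\text{SE}, a_2^\text{SE})$.

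Finally, for stability I would note that in this complex regime the eigenvalues are a conjugate pair $\lambda_{1,2} = r e^{\pm i\phi}$, so the spectral radius satisfies $\rho^2 = |\lambda_1|\,|\lambda_2| = \lambda_1\lambda_2 = \det(\seA) = \alpha\beta$. Hence $\rho = \sqrt{\alpha\beta}$, independent of $a$ as claimed. Since $\alpha = e^{-\Delta t/\tau_u} \in (0,1)$ and $\beta = e^{-\Delta t/\tau_w} \in (0,1)$ for all $\tau_u,\tau_w,\Delta t > 0$, we have $\alpha\beta < 1$ and therefore $\rho = \sqrt{\alpha\beta} < 1$, which establishes asymptotic stability throughout the oscillatory interval. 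No case distinction on the sign of the real part is needed here, precisely because the constancy of $\det(\seA)$ pins the modulus down directly; this is exactly the structural advantage over the EF case, where the real part $\tfrac{\alpha+\beta}{2}$ was fixed but the modulus $\sqrt{\alpha\beta + a\Bar{\alpha}\Bar{\beta}}$ grew with $a$ and could cross $1$.
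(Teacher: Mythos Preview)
Your proposal is correct and follows essentially the same route as the paper: write the characteristic polynomial, determine the complex-eigenvalue range from the discriminant, and compute the spectral radius. Your presentation is in fact slightly cleaner---you make the cancellation $\det(\seA)=\alpha\beta$ explicit and then read off $\rho^2=\lambda_1\lambda_2=\det(\seA)$ directly, whereas the paper reaches the same $\rho=\sqrt{\alpha\beta}$ by expanding $\sqrt{\Re(\lambda)^2+\Im(\lambda)^2}$ and treats the discriminant as a separate quadratic in $a$ rather than recognising the perfect squares immediately.
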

\begin{proof}
The characteristic polynomial of $\chi(\seA)$ is
\begin{equation}
    \chi(\seA) = \lambda^2 + \lambda\left(\Bar{\beta}\Bar{\alpha}a - \beta - \alpha\right) + \beta\alpha,
\end{equation}
with discriminant $\Delta_{\lambda}$ given by
\begin{align}
    \Delta_{\lambda} & = \left(\Bar{\beta}\Bar{\alpha}a - \beta -\alpha\right)^2 - 4\beta\alpha \nonumber\\
    & = \Bar{\beta}\Bar{\alpha}\left[\Bar{\beta}\Bar{\alpha}a^2 - 2(\beta +\alpha)a + \frac{(\beta - \alpha)^2}{\Bar{\beta}\Bar{\alpha}}\right].
    \label{eq:disc_lambda_se}
\end{align}
 $\chi(\seA)$ gives complex solutions for $\Delta_\lambda < 0$, which depends on the range of negative values of the polynomial part $p(a)$ of $\Delta_{\lambda}$, given by
\begin{equation}
    p(a) = \Bar{\beta}\Bar{\alpha}a^2 - 2(\beta +\alpha)a + \frac{(\beta - \alpha)^2}{\Bar{\beta}\Bar{\alpha}}.
\end{equation}
From this polynomial, we can again compute a discriminant $\Delta_a$ as
\begin{align}
    \Delta_a = 16\beta\alpha.
\end{align}
Since $\Delta_a>0$ the roots of this polynomial are given by two values $a_1^\text{SE}$ and $a_2^\text{SE}$ according to
\begin{equation}
\label{eq:a1_a2_se}
    a_{1}^\text{SE} = \frac{\left(\sqrt{\beta} - \sqrt{\alpha}\right)^2}{\Bar{\beta}\Bar{\alpha}},\quad a_{2}^\text{SE} = \frac{\left(\sqrt{\beta} + \sqrt{\alpha}\right)^2}{\Bar{\beta}\Bar{\alpha}}.
\end{equation}
As the coefficient $\Bar{\beta}\Bar{\alpha}$ in Eq.~\eqref{eq:disc_lambda_se} is always positive, $\Delta_\lambda$ is negative when $a_1^\text{SE} < a < a_2^\text{SE}$ resulting in complex-conjugate eigenvalues $\lambda_{1,2}$, given by the roots of $\chi(\seA)$, implying oscillatory behavior of the membrane potential.
These eigenvalues are given by
\begin{equation}
\label{eq:eigvals_se}
    \lambda_{1,2} = -\frac{\left(\Bar{\beta}\Bar{\alpha}a - \beta - \alpha\right)}{2} \pm i \frac{\sqrt{-\Delta_\lambda}}{2}.
\end{equation}
In that case the spectral radius is defined as $\rho(A) = |\lambda_1| = |\lambda_2|$, where $|z| = \sqrt{\mathrm{Re}(z)^2 + \mathrm{Im}(z)^2}$ is the modulus of the complex eigenvalues, such that
\begin{align}
    r = \rho(A) &= \frac{1}{2}\sqrt{\left(\Bar{\beta}\Bar{\alpha}a - \beta - \alpha\right)^2 - \Delta_\lambda} \\
    & =  \frac{1}{2}\sqrt{\left(\Bar{\beta}\Bar{\alpha}a - \beta - \alpha\right)^2  - \left(\Bar{\beta}\Bar{\alpha}a - \beta - \alpha\right)^2 + 4\beta\alpha}\\
    &= \sqrt{\beta\alpha}.
    \label{eq:se_r_bound_complex}
\end{align}
Hence, the spectral radius, which we also refer to as the decay rate $r$ in the main text, is $\sqrt{\beta\alpha}$ which is always strictly less than $1$ due to $\beta, \alpha \in \left(0, 1\right)$. Therefore, the SE-adLIF neuron is stable over the entire range of parameters (provided $\tau_u, \tau_w > 0$) where the matrix $\seA$ exhibits complex eigenvalues.
\end{proof}
\begin{lemma}
\label{lem:se_real_stability_range} 
SE-adLIF is stable and not oscillating in the sub-threshold regime for $a \in (-1, a_{1}^\text{SE}] \cup [a_{2}^\text{SE}, a_{\mathrm{max}}^\text{SE})$ with $a_{\mathrm{max}}^\text{SE} = \frac{(1 + \beta)(1 + \alpha)}{(1 - \beta)(1 - \alpha)}$ and $a_{1}^\text{SE}$, $a_{2}^\text{SE}$ as defined in Lemma \ref{lem:se_complex_stability_range}.
\end{lemma}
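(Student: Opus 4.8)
The plan is to restrict attention to the real-eigenvalue (non-oscillating) regime already isolated in Lemma~\ref{lem:se_complex_stability_range} and then determine on which part of it both eigenvalues lie strictly inside the unit circle. From that lemma the characteristic polynomial is $\chi(\seA)=\lambda^2+c_1\lambda+c_0$ with $c_1=\Bar{\alpha}\Bar{\beta}a-\alpha-\beta$ and $c_0=\alpha\beta$, and its discriminant $\Delta_\lambda$ is nonnegative exactly for $a\le a_1^\text{SE}$ or $a\ge a_2^\text{SE}$. On these two intervals the eigenvalues are real (hence no oscillations), so the remaining task is purely to intersect them with the asymptotic-stability region $\rho(\seA)<1$.

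The cleanest route to that region is the Schur--Cohn/Jury criterion for a monic real quadratic: both roots of $\chi(\seA)$ have modulus strictly less than one if and only if $|c_0|<1$, $\chi(1)>0$, and $\chi(-1)>0$. The first condition is automatic since $c_0=\alpha\beta\in(0,1)$. For the other two I would simply evaluate
\begin{align*}
\chi(1)&=1+c_1+c_0=(1-\alpha)(1-\beta)+\Bar{\alpha}\Bar{\beta}a=\Bar{\alpha}\Bar{\beta}(1+a),\\
\chi(-1)&=1-c_1+c_0=(1+\alpha)(1+\beta)-\Bar{\alpha}\Bar{\beta}a.
\end{align*}
Because $\Bar{\alpha}\Bar{\beta}>0$, the condition $\chi(1)>0$ is equivalent to $a>-1$ and $\chi(-1)>0$ is equivalent to $a<\frac{(1+\alpha)(1+\beta)}{(1-\alpha)(1-\beta)}=a_{\mathrm{max}}^\text{SE}$. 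Hence the full asymptotic-stability interval (real or complex) is the open interval $(-1,a_{\mathrm{max}}^\text{SE})$; as a sanity check this contains $(a_1^\text{SE},a_2^\text{SE})$, consistent with the constant modulus $\sqrt{\alpha\beta}<1$ found in the complex case.

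Intersecting $(-1,a_{\mathrm{max}}^\text{SE})$ with the real regime $(-\infty,a_1^\text{SE}]\cup[a_2^\text{SE},\infty)$ then gives the claimed set $(-1,a_1^\text{SE}]\cup[a_2^\text{SE},a_{\mathrm{max}}^\text{SE})$, provided the endpoints nest as $-1<a_1^\text{SE}$ and $a_2^\text{SE}<a_{\mathrm{max}}^\text{SE}$. The first is immediate since $a_1^\text{SE}=\frac{(\sqrt{\beta}-\sqrt{\alpha})^2}{\Bar{\alpha}\Bar{\beta}}\ge0$. The second, after clearing the common positive denominator $\Bar{\alpha}\Bar{\beta}$, reduces to $(\sqrt{\alpha}+\sqrt{\beta})^2<(1+\alpha)(1+\beta)$, i.e.\ to $(1-\sqrt{\alpha\beta})^2>0$, which holds because $\sqrt{\alpha\beta}<1$.

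The main bookkeeping obstacle is justifying the open versus closed endpoints. At $a=a_1^\text{SE}$ and $a=a_2^\text{SE}$ the discriminant vanishes, producing the repeated real root $\lambda_0=\frac{\alpha+\beta-\Bar{\alpha}\Bar{\beta}a}{2}$, which evaluates to $+\sqrt{\alpha\beta}$ and $-\sqrt{\alpha\beta}$ respectively; both have modulus $\sqrt{\alpha\beta}<1$, so these points are (critically damped and) stable and are included, matching the closed brackets. Conversely $\chi(1)=0$ at $a=-1$ and $\chi(-1)=0$ at $a=a_{\mathrm{max}}^\text{SE}$ force an eigenvalue exactly onto the unit circle, giving only marginal rather than asymptotic stability, so these points are excluded, matching the open brackets. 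The chief advantage of the Schur--Cohn route over mimicking the explicit-eigenvalue argument of Lemma~\ref{lem:ef_real_stability_range} is that it sidesteps a case split on which of the two real roots attains the spectral radius on each branch; that case split is exactly where a direct computation would be most error-prone.
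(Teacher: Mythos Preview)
Your proof is correct but takes a genuinely different route from the paper. The paper argues by explicit eigenvalue tracking: on each of the two real branches it first determines which eigenvalue dominates by sending $a\to-\infty$ (left branch, where $\lambda_1\to\infty$, $\lambda_2\to 0$) and $a\to+\infty$ (right branch, where $\lambda_1\to 0$, $\lambda_2\to-\infty$), and then solves $\lambda_1(a)=1$ and $\lambda_2(a)=-1$ directly to obtain the boundary values $a_{\min}^\text{SE}=-1$ and $a_{\max}^\text{SE}=\frac{(1+\alpha)(1+\beta)}{(1-\alpha)(1-\beta)}$. You instead apply the Schur--Cohn/Jury test to the characteristic polynomial, reducing stability to the three inequalities $|c_0|<1$, $\chi(1)>0$, $\chi(-1)>0$, and then intersect with the real-discriminant set. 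Your approach is shorter and, as you point out, avoids the case split on which root attains the spectral radius; it also delivers the full stability interval $(-1,a_{\max}^\text{SE})$ in one stroke, from which Corollary~\ref{thm:se_proof_stability} follows immediately. The paper's approach is more self-contained (no external criterion cited) and makes the eigenvalue trajectories explicit, which connects naturally to the geometric picture in Fig.~\ref{fig:stability}; yours is algebraically cleaner. Your endpoint analysis and the verification $a_2^\text{SE}<a_{\max}^\text{SE}$ via $(1-\sqrt{\alpha\beta})^2>0$ are also correct and are details the paper leaves implicit.
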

\begin{proof}
The characteristic polynomial of $\chi(\seA)$ is
\begin{equation}
    \chi(\seA) = \lambda^2 + \lambda\left(\Bar{\beta}\Bar{\alpha}a - \beta - \alpha\right) + \beta\alpha,
\end{equation}
with discriminant $\Delta_{\lambda}$ given by
\begin{align}
    \Delta_{\lambda} & = \left(\Bar{\beta}\Bar{\alpha}a - \beta -\alpha\right)^2 - 4\beta\alpha \nonumber\\
    & = \Bar{\beta}\Bar{\alpha}\left[\Bar{\beta}\Bar{\alpha}a^2 - 2(\beta +\alpha)a + \frac{(\beta - \alpha)^2}{\Bar{\beta}\Bar{\alpha}}\right].
\end{align}

The derivations from Lemma \ref{lem:se_complex_stability_range} imply that $\chi(\seA)$ admits real solutions in the intervals $\left(-\infty, a_1^\text{SE}\right]$ and $\left[a_2^\text{SE}, \infty\right)$, yielding real-valued eigenvalues,
\begin{equation}
    \lambda_{1,2} = \frac{-\left(\Bar{\beta}\Bar{\alpha}a - \beta - \alpha\right) \pm \sqrt{\Delta_\lambda}}{2}.
\end{equation}
For $a = a_1^\text{SE}$ and $a = a_2^\text{SE}$, $\Delta_\lambda = 0$ and we have two singular solutions,
\begin{align}
    \lambda^0_1 &= -\frac{\left(\Bar{\beta}\Bar{\alpha}a_{1}^\text{SE} - \beta - \alpha\right)}{2} = \sqrt{\beta\alpha} \\
    \lambda^0_2 &= -\frac{\left(\Bar{\beta}\Bar{\alpha}a_{2}^\text{SE} - \beta - \alpha\right)}{2} = -\sqrt{\beta\alpha}.
\end{align}
For both singular solutions the spectral radius is $\rho(\seA) = |r^0_1| = |r^0_2| = \sqrt{\beta\alpha}$ which is less than one, resulting in asymptotic stability.

In order to study the stability for $a < a_1^\text{SE}$ and $a > a_2^\text{SE}$ we need to determine the asymptotic behavior of $\lambda_{1,2}$ for $a \to -\infty$ and $a \to \infty$, as it allows us to determine which eigenvalue will constrain the stability of the system.
In the following propositions, we prove the stability of the system in the intervals  $(-1, a_{1}^\text{SE}]$ and $[a_{2}^\text{SE}, a_{\mathrm{max}}^\text{SE})$ independently.

\begin{prop} SE-adLIF is asymptotically stable for $a \in (-1, a_{1}^\text{SE}]$.

For $a < a_1^\text{SE}$, $\rho(\seA)$ is determined by $|\lambda_1|$ as for $a \to -\infty$, we have
\begin{align}
    \lim_{a \to -\infty} \lambda_1 &= \lim_{a \to -\infty} \frac{-\left(\Bar{\beta}\Bar{\alpha}a - \beta - \alpha\right) + \sqrt{\left(\Bar{\beta}\Bar{\alpha}a - \beta - \alpha\right)^2 - 4\beta\alpha}}{2} = \infty \\
    \lim_{a \to -\infty} \lambda_2 &= \lim_{a \to -\infty} \frac{-\left(\Bar{\beta}\Bar{\alpha}a - \beta - \alpha\right) - \sqrt{\left(\Bar{\beta}\Bar{\alpha}a - \beta - \alpha\right)^2 - 4\beta\alpha}}{2} = 0.
\end{align}
Thus $\lambda^0_1 < 1 < \lim\limits_{a \to -\infty} \lambda_1$. We can find a value $a_{\mathrm{min}}^\text{SE}$ such that $\lambda_1(a_{\mathrm{min}}^\text{SE}) = 1$, given by
\begin{align}
\lambda_1 &= \frac{ -\left(\Bar{\beta}\Bar{\alpha}a_{\mathrm{min}}^\text{SE} - \beta -\alpha\right) + \sqrt{\left(\Bar{\beta}\Bar{\alpha}a_{\mathrm{min}}^\text{SE} - \beta - \alpha\right)^2 - 4\beta\alpha}}{2} = 1\\
a_{\mathrm{min}}^\text{SE} &= \frac{-\beta\alpha + \beta + \alpha - 1}{\Bar{\beta}\Bar{\alpha}} = \frac{-\Bar{\beta}\Bar{\alpha}}{\Bar{\beta}\Bar{\alpha}} = -1.
\end{align}
\end{prop}
\begin{prop}  SE-adLIF is asymptotically stable for $a \in [a_{2}^\text{SE}, a_{\mathrm{max}}^\text{SE})$ with $a_{\mathrm{max}}^\text{SE} = \frac{(1 + \beta)(1 + \alpha)}{(1-\beta)(1-\alpha)}$.

For $a > a_2^\text{SE}$, $\rho(\seA)$ is determined by $|\lambda_2|$ as for $a \to \infty$, we have
\begin{align}
    \lim_{a \to \infty} \lambda_1 &= \lim_{a \to \infty} \frac{-\left(\Bar{\beta}\Bar{\alpha}a - \beta - \alpha\right) + \sqrt{\left(\Bar{\beta}\Bar{\alpha}a - \beta - \alpha\right)^2 - 4\beta\alpha}}{2} = 0\\
    \lim_{a \to \infty} \lambda_2 &= \lim_{a \to \infty} \frac{-\left(\Bar{\beta}\Bar{\alpha}a - \beta - \alpha\right) - \sqrt{\left(\Bar{\beta}\Bar{\alpha}a - \beta - \alpha\right)^2 - 4\beta\alpha}}{2} = -\infty.
\end{align}
Thus $\lim\limits_{a \to \infty} \lambda_2 < -1 < \lambda^0_2$.  We can find a value $a_{\mathrm{max}}^\text{SE}$ such that $\lambda_2(a_{\mathrm{max}}^\text{SE}) = -1$, given by
\begin{align}
\lambda_2 &= \frac{ -\left(\Bar{\beta}\Bar{\alpha}a_{\mathrm{max}}^\text{SE} - \beta -\alpha\right) - \sqrt{\left(\Bar{\beta}\Bar{\alpha}a_{\mathrm{max}}^\text{SE} - \beta - \alpha\right)^2 - 4\beta\alpha}}{2} = -1\\
a_{\mathrm{max}}^\text{SE} &= \frac{1 + \beta\alpha + \beta + \alpha}{\Bar{\beta}\Bar{\alpha}} = \frac{(1 + \alpha)(1 + \beta)}{\Bar{\beta}\Bar{\alpha}}
\label{eq:a_se_max}
\end{align}
 For all, $a > a_{\mathrm{max}}^\text{SE}$, it follows that $\rho(\seA) = |\lambda_2| > 1$ and the system is unstable.
\end{prop}
\end{proof}
\begin{corollary}\label{thm:se_proof_stability}
    In the sub-threshold regime (i.e. $\vartheta = \infty$), for $\tau_u,\tau_w, \Delta t > 0$ and $\alpha = e^{-\frac{\Delta t}{\tau_u}}$ and $\beta = e^{-\frac{\Delta t}{\tau_w}}$, the Symplectic-Euler discretized adLIF neuron (SE-adLIF) is asymptotically stable for all $a \in~(-1, a_{\mathrm{max}}^\text{SE})$ with $a_{\mathrm{max}}^\text{SE} = \frac{(1 + \beta)(1 + \alpha)}{(1-\beta)(1-\alpha)}$.
\end{corollary}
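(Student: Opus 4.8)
The plan is to read this corollary as a direct concatenation of the two preceding SE lemmas, which between them already certify asymptotic stability on complementary ranges of the adaptation parameter $a$. Lemma~\ref{lem:se_complex_stability_range} settles the oscillating (complex-eigenvalue) regime, giving stability with spectral radius $\sqrt{\alpha\beta}<1$ for $a\in(a_1^\text{SE},a_2^\text{SE})$, while Lemma~\ref{lem:se_real_stability_range} settles the non-oscillating (real-eigenvalue) regime, giving stability for $a\in(-1,a_1^\text{SE}]\cup[a_2^\text{SE},a_{\mathrm{max}}^\text{SE})$. Since every admissible $a$ lands in exactly one of these two cases, the entire content of the corollary reduces to verifying that the three subintervals tile $(-1,a_{\mathrm{max}}^\text{SE})$ with no gap.

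First I would pin down the ordering of the breakpoints. From the closed forms in Eq.~\eqref{eq:a1_a2_se}, $a_1^\text{SE}=\frac{(\sqrt{\beta}-\sqrt{\alpha})^2}{(1-\alpha)(1-\beta)}\ge 0>-1$, and $a_1^\text{SE}<a_2^\text{SE}$ follows immediately from $(\sqrt{\beta}-\sqrt{\alpha})^2<(\sqrt{\beta}+\sqrt{\alpha})^2$, valid whenever $\alpha,\beta\in(0,1)$, i.e.\ for all $\tau_u,\tau_w,\Delta t>0$. The one genuinely new check is the top inequality $a_2^\text{SE}<a_{\mathrm{max}}^\text{SE}$, which I would dispatch by the short computation
\begin{equation}
a_{\mathrm{max}}^\text{SE}-a_2^\text{SE}=\frac{(1+\alpha)(1+\beta)-(\sqrt{\alpha}+\sqrt{\beta})^2}{(1-\alpha)(1-\beta)}=\frac{(1-\sqrt{\alpha\beta})^2}{(1-\alpha)(1-\beta)}>0 .
\end{equation}
Together these yield the full chain $-1<a_1^\text{SE}<a_2^\text{SE}<a_{\mathrm{max}}^\text{SE}$.

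With the ordering in hand, the decomposition $(-1,a_{\mathrm{max}}^\text{SE})=(-1,a_1^\text{SE}]\cup(a_1^\text{SE},a_2^\text{SE})\cup[a_2^\text{SE},a_{\mathrm{max}}^\text{SE})$ is exact, and both boundary points $a_1^\text{SE},a_2^\text{SE}$ are absorbed into the closed endpoints handled by Lemma~\ref{lem:se_real_stability_range}. I would then simply invoke Lemma~\ref{lem:se_real_stability_range} on the outer two pieces and Lemma~\ref{lem:se_complex_stability_range} on the middle piece to conclude asymptotic stability throughout $(-1,a_{\mathrm{max}}^\text{SE})$. The main obstacle is essentially nonexistent: the argument is pure bookkeeping, and the sole point demanding care is confirming $a_2^\text{SE}<a_{\mathrm{max}}^\text{SE}$ so that the real-eigenvalue stability window from Lemma~\ref{lem:se_real_stability_range} extends precisely up to the divergence boundary without leaving an uncovered interval.
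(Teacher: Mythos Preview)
Your proposal is correct and follows exactly the paper's approach: the paper's proof is the single line ``This is a consequence of Lemma~\ref{lem:se_complex_stability_range} and Lemma~\ref{lem:se_real_stability_range}.'' Your version is in fact more thorough than the paper's, since you explicitly verify the ordering $-1<a_1^\text{SE}<a_2^\text{SE}<a_{\mathrm{max}}^\text{SE}$ (in particular the computation $a_{\mathrm{max}}^\text{SE}-a_2^\text{SE}=\frac{(1-\sqrt{\alpha\beta})^2}{(1-\alpha)(1-\beta)}>0$) needed to guarantee the three subintervals tile $(-1,a_{\mathrm{max}}^\text{SE})$ without a gap, a point the paper leaves implicit.
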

This is a consequence of Lemma \ref{lem:se_complex_stability_range} and Lemma \ref{lem:se_real_stability_range}.
\subsection*{Proof of Theorem \ref{theo:se_stab}}
\label{meth:proof_theo_se_stab}

Theorem \ref{theo:se_stab} states that for each choice of $\tau_u > 0$, $\tau_w > 0$, and each intrinsic frequency $f \in [0, f_N]$, there is a unique parameter value $[a^{\text{SE}}_1, a^{\text{SE}}_2]$ such that the SE-adLIF neuron with these parameters has intrinsic frequency $f$ and vice versa, while being asymptotically stable in the sub-threshold regime. In other words, the neuron model can exhibit the full range of intrinsic frequencies for any setting of $\tau_u > 0$, $\tau_w > 0$ in a stable manner. In the following we proof Theorem \ref{theo:se_stab}.

\begin{proof}
    We first show that for an arbitrary intrinsic frequency $f \in (0, f_N]$, there exists a value $a$ such that an SE-adLIF neuron with arbitrary $\tau_u>0$ and $\tau_w>0$ oscillates with $f$. To show this, we consider an SE-adLIF neuron in the sub-threshold regime with an arbitrary choice of $\tau_u > 0$ and $\tau_w > 0$. Let $g_{\tau_u,\tau_w}: [a^{\text{SE}}_1, a^{\text{SE}}_2] \rightarrow [0, f_N]$ be the function that maps parameter $a$ of the neuron to the neuron's intrinsic frequency $f$. We prove in the following that $g_{\tau_u,\tau_w}$ is a bijection.

   As proven in the Lemma \ref{lem:se_complex_stability_range}, in the range $a \in (a_1^\text{SE}, a_{2}^\text{SE})$, with $a_{1}^\text{SE} = \frac{\left(\sqrt{\beta} - \sqrt{\alpha}\right)^2}{\Bar{\beta}\Bar{\alpha}}$ and $a_{2}^\text{SE} = \frac{\left(\sqrt{\beta} + \sqrt{\alpha}\right)^2}{\Bar{\beta}\Bar{\alpha}}$, $\seA$ has complex eigenvalues $\lambda_{1,2}$ given by Eq.~\eqref{eq:eigvals_se}.

Let $a \in [a_1^\text{SE}, a_2^\text{SE}]$ for an arbitrary choice of $\tau_u > 0$ and $\tau_w > 0$, and $\lambda_1(a)$ be the complex eigenvalue of $\seA$ as function of $a$ with positive argument $arg(\lambda_1) \geq 0$.

We claim there is a natural bijection $\phi(a)$ between $[a_1^\text{SE}, a_2^\text{SE}]$ and $[0, \pi]$. We first show that $\cos(\phi(a))$ is a bijection between $[a_1^\text{SE}, a_2^\text{SE}]$ and $[-1, 1]$. The bijection to  $[0, \pi]$ follows directly from $\arccos(x): [-1, 1] \to [0, \pi]$ defined as a bijective function on its principal values $[0, \pi]$.

We have the trigonometric relation $\Re(\lambda_1(a)) = r \cos(\phi(a))$ with $r = |\lambda_1(a)| = \sqrt{\beta\alpha}$. $\cos(\phi(a))) = \frac{\Re(\lambda_1(a))}{\sqrt{\beta\alpha}}$ is a bijection between $[a_1^\text{SE}, a_2^\text{SE}]$ and $[-1, 1]$.
$\cos(\phi(a)))$ is surjective, since it is continuous in $a$, $\cos\left(\phi(a_1^\text{SE}))\right) = 1$, and $\cos\left(\phi(a_2^\text{SE}))\right) = -1$.
$\cos(\phi(a))$ is injective, as for $a \in (a_1^\text{SE}, a_2^\text{SE})$, $\Re(\lambda_1(a))$ is a strictly decreasing continuous function, which follows from the fact that its derivative $\Re^{\prime}(\lambda_1(a)) = - \frac{\Bar{\beta}\Bar{\alpha}}{2} < 0$ for all $a \in (a_1^\text{SE}, a_2^\text{SE})$, and $r = \sqrt{\beta\alpha}$ is a positive constant. 

It follows that $\phi(a) = \arccos{\frac{\Re\left(\lambda_1(a)\right)}{\sqrt{\beta\alpha}}}$ defines a bijective function between $[a_1^\text{SE}, a_2^\text{SE}]$ and $[0, \pi]$, with $\phi(a_1^\text{SE}) = \arccos(1) =  0$ and $\phi(a_2^\text{SE}) = \arccos(-1) = \pi$.

Since the frequency in Hertz is defined by $f = \frac{\phi}{2\pi\Delta t}$, 
we have shown that
\begin{align}
    g_{\tau_u,\tau_w}: [a^{\text{SE}}_1, a^{\text{SE}}_2] &\to [0, f_N] \\ \nonumber
    a &\mapsto \frac{\phi(a)}{2\pi\Delta t}
\end{align}
is a bijective function with  $g_{\tau_u,\tau_w}(a_1^\text{SE}) = 0$ and $g_{\tau_u,\tau_w}(a_2^\text{SE}) = f_N$, the Nyquist frequency. 

Hence, we have shown that for an arbitrary intrinsic frequency $f \in (0, f_N]$, there exists a value $a$ such that an SE-adLIF neuron with arbitrary $\tau_u>0$ and $\tau_w>0$ oscillates with $f$.

Second, we have to show that an SE-adLIF neuron with arbitrary parameters $(a, \tau_u>0, \tau_w>0)$ and intrinsic oscillation frequency $f \in (0, f_N]$ is asymptotically stable with decay rate $r = \sqrt{\alpha\beta} < 1$ where $\alpha = e^{-\frac{\Delta t}{\tau_u}}$ and $\beta = e^{-\frac{\Delta t}{\tau_w}}$. Above we have shown that for parameter $a \in [a_1^{\text{SE}}, a_2^{\text{SE}}]$ a bijective mapping to each frequency $f \in (0, f_N]$ exists. The asymptotic stability follows from this fact in combination with Lemma \ref{lem:se_complex_stability_range}.

\end{proof}

\subsection*{Stable ranges for intrinsic frequencies of EF-adLIF}
\label{subsubsec:meth_frequency_ranges}
We consider the parameterizations of EF-adLIF and SE-adLIF neurons where the discretized transition matrix $\seA$ has complex eigenvalues. In that case, the neuron exhibits oscillations of intrinsic frequency $f$ determined by the angle $\phi=\text{arg}\left(\lambda_{1}\right)$ of the complex eigenvalue $\lambda_{1}$, see also Sections \enameref{meth:euler_forward_stab} and \enameref{meth:se_stab}. This angle determines the angle of 'rotation' of the state for each time step, from which we can infer the intrinsic frequency by $f = \frac{\phi}{2\pi}f_S$ where $f_S=\frac{1}{\Delta t}$ is the sampling frequency. We define the Nyquist frequency as half the sampling frequency $f_N = \frac{f_S}{2}$.

\begin{lemma}\label{lem:ef_bandwidth}
EF-adLIF neurons can oscillate with intrinsic frequencies $f$ bounded by $f \in [0, \frac{f_N}{2})$
\end{lemma}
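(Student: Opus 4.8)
The plan is to work directly from the eigenvalue formula established in Lemma \ref{lem:ef_complexe_stability_range} and to translate the constraint on the eigenvalue's argument into a bound on the intrinsic frequency. Recall that in the oscillating regime the complex-conjugate eigenvalues of $\eulerA$ are $\lambda_{1,2} = \frac{\alpha+\beta}{2} \pm i\frac{\sqrt{-\Delta_\lambda}}{2}$, and that the intrinsic frequency is read off from the argument $\phi = \arg(\lambda_1)$ of the eigenvalue with positive imaginary part via $f = \frac{\phi}{2\pi} f_S$, with $f_S = \frac{1}{\Delta t}$ and $f_N = \frac{f_S}{2}$. Since $\frac{f_N}{2} = \frac{f_S}{4}$ corresponds exactly to $\phi = \frac{\pi}{2}$, the lemma is equivalent to the claim that $\phi \in [0, \frac{\pi}{2})$ for every parameterization in which $\eulerA$ has complex eigenvalues.

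First I would record the key geometric fact already contained in the eigenvalue formula: the real part $\Re(\lambda_{1,2}) = \frac{\alpha+\beta}{2}$ is independent of $a$ and, because $\alpha = e^{-\Delta t/\tau_u}$ and $\beta = e^{-\Delta t/\tau_w}$ both lie in $(0,1)$, it is strictly positive. As $a$ ranges over all values producing complex eigenvalues (that is, $a > a_0^{\text{EF}}$, where the discriminant $\Delta_\lambda$ turns negative) the eigenvalues trace a vertical ray in the open right half-plane at fixed real part $\frac{\alpha+\beta}{2} > 0$, with imaginary part increasing from $0$. A complex number with strictly positive real part necessarily has argument in $(-\frac{\pi}{2}, \frac{\pi}{2})$; selecting the conjugate with nonnegative imaginary part yields $\phi \in [0, \frac{\pi}{2})$.

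From $\phi < \frac{\pi}{2}$ the upper bound follows immediately: $f = \frac{\phi}{2\pi} f_S < \frac{1}{4} f_S = \frac{f_N}{2}$. For the lower bound and attainment of $f = 0$, I would note that at $a = a_0^{\text{EF}}$ the discriminant $\Delta_\lambda$ vanishes (the transition to real eigenvalues identified in Lemmas \ref{lem:ef_complexe_stability_range} and \ref{lem:ef_real_stability_range}), so the imaginary part is zero and $\phi = 0$, giving $f = 0$. Together these establish $f \in [0, \frac{f_N}{2})$.

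I expect no serious obstacle, since the heavy lifting — the explicit eigenvalue expression and, crucially, the $a$-independence and positivity of the real part — is already supplied by Lemma \ref{lem:ef_complexe_stability_range}. The only point demanding care is bookkeeping: one must keep clear that it is the argument $\phi$, not the modulus, that controls the frequency, and that positivity of $\Re(\lambda)$ caps $\phi$ strictly below $\frac{\pi}{2}$ no matter how large the imaginary part (hence $a$) grows. A minor subtlety worth a sentence is that the stated bound $\frac{f_N}{2}$ is a supremum that is never attained: the angle $\phi$ approaches $\frac{\pi}{2}$ only in the limit $\Im(\lambda_1)\to\infty$, i.e. $a\to\infty$, so the half-open interval in the statement is exactly right. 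Notably, this upper bound holds for every oscillating EF-adLIF neuron irrespective of stability, precisely because the eigenvalues are confined to the vertical line $\Re(\lambda)=\frac{\alpha+\beta}{2}$.
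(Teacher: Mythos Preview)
Your proposal is correct and follows essentially the same argument as the paper: both invoke the eigenvalue formula from Lemma~\ref{lem:ef_complexe_stability_range}, observe that $\Re(\lambda_{1,2})=\frac{\alpha+\beta}{2}>0$ forces $\phi\in[0,\frac{\pi}{2})$, and convert this to $f<\frac{f_N}{2}$. Your treatment is in fact slightly more thorough than the paper's, as you explicitly address attainment of $f=0$ at $a=a_0^{\text{EF}}$ and note that the bound holds irrespective of stability; the only minor difference is that the paper remarks the supremum is approached as $\tau_u,\tau_w\to 0$ (driving $\Re(\lambda)\to 0$), whereas you take $a\to\infty$ (driving $\Im(\lambda)\to\infty$) --- both limits are valid.
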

\begin{proof}
We proved in Lemma \ref{lem:ef_complexe_stability_range} that in the range $a \in (a_0^\text{EF}, a_{\max}^\text{EF})$, with $a_0^\text{EF} = \frac{(\alpha - \beta)^2 }{4\Bar{\alpha}\Bar{\beta}}$ and $a_{\max}^\text{EF} = \frac{1 - \alpha\beta}{\Bar{\alpha}\Bar{\beta}}$, the EF-adLIF neuron is asymptotically stable and has complex eigenvalues $\lambda_{1,2}$ given by Eq.~\eqref{eq:ef_eigenvalues}. 

$\lambda_{1}$ is restricted to the right half-plane of the complex plane, which follows directly from the fact that, by definition $\Im(\lambda_1) > 0$ and $\Re(\lambda_1) = \frac{\beta + \alpha}{2} > 0$, for all $\tau_u > 0$ and $\tau_w > 0$. Hence, the argument is restricted to $0 \leq \phi < \frac{\pi}{2}$.
Since the frequency in Hertz is defined by $f = \frac{\phi}{2\pi\Delta t}$, this results in an upper bound on the oscillation frequency of $f<\frac{f_N}{2}$ with  $f_N$ the Nyquist frequency.
%The proof is analogous for $\lambda_{1}$ with $-\frac{\pi}{2} < \phi \leq 0$.

Note, that this upper bound is approached in the limit $\tau_u, \tau_w \rightarrow 0$, but the maximum frequency is much lower for realistic values of $\tau_u$ and $\tau_w$, as shown in Fig.~\ref{fig:stability}f.
\end{proof}

\begin{lemma}\label{lem:ef_bandwidth_decay}
    For EF-adLIF, as $\tau_u$ and $\tau_w$ increase, the stable frequency bandwidth of EF-adLIF asymptotically converges towards $0$.
\end{lemma}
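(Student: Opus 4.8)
The plan is to derive an explicit closed-form expression for the maximal stable oscillation frequency $f_{\max}$ of the EF-adLIF neuron as a function of $\tau_u$ and $\tau_w$, and then show that this expression tends to $0$ as $\tau_u,\tau_w \to \infty$. Since the oscillating regime begins at zero frequency --- at $a = a_0^\text{EF}$ the discriminant $\Delta_\lambda$ vanishes (Lemma \ref{lem:ef_complexe_stability_range}), so the eigenvalues are real-equal and $\Im(\lambda_1) = 0$, giving $\phi = 0$ --- the stable oscillating bandwidth is exactly the interval $[0, f_{\max})$. Hence it suffices to control $f_{\max}$.

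First I would recall from Lemma \ref{lem:ef_complexe_stability_range} that throughout the oscillating range $a \in (a_0^\text{EF}, a_{\max}^\text{EF})$ the real part of the eigenvalue is pinned at $\Re(\lambda_1) = \frac{\alpha+\beta}{2}$, independent of $a$, while the modulus $\rho(\eulerA) = \sqrt{\alpha\beta + a\Bar{\alpha}\Bar{\beta}}$ increases monotonically with $a$, from $\frac{\alpha+\beta}{2}$ at $a = a_0^\text{EF}$ up to $1$ at $a = a_{\max}^\text{EF}$. Writing $\lambda_1 = r\,e^{i\phi}$, the angle obeys $\cos\phi = \Re(\lambda_1)/r = \frac{(\alpha+\beta)/2}{r}$, which is strictly decreasing in $r$ and hence in $a$. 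Therefore $\phi$ --- and with it $f = \frac{\phi}{2\pi\Delta t}$ --- is maximized as $a \to a_{\max}^\text{EF}$, where $r \to 1$, so that
\[
\phi_{\max} = \arccos\!\left(\tfrac{\alpha+\beta}{2}\right), \qquad
f_{\max} = \frac{1}{2\pi\Delta t}\,\arccos\!\left(\tfrac{\alpha+\beta}{2}\right).
\]
This is consistent with the upper bound $f < \frac{f_N}{2}$ from Lemma \ref{lem:ef_bandwidth}, but now sharpened into an explicit function of the time constants.

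With this closed form the conclusion follows by elementary limits. As $\tau_u \to \infty$ and $\tau_w \to \infty$ we have $\alpha = e^{-\Delta t/\tau_u} \to 1$ and $\beta = e^{-\Delta t/\tau_w} \to 1$, so $\frac{\alpha+\beta}{2} \to 1$; by continuity of $\arccos$ at $1$ with $\arccos(1) = 0$, this yields $f_{\max} \to 0$. Moreover, since $\arccos$ is strictly decreasing and $\frac{\alpha+\beta}{2}$ increases in each of $\tau_u$ and $\tau_w$, the bound $f_{\max}$ decreases monotonically as the time constants grow, establishing the asymptotic convergence of the bandwidth towards $0$.

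The only delicate point --- which I would state carefully rather than the limits themselves --- is justifying that the supremum of the oscillation frequency is approached as $a \to a_{\max}^\text{EF}$ and not at some interior value of $a$. This reduces precisely to the monotonicity of $\phi$ in $a$, which in turn follows from the two structural facts above: $\Re(\lambda_1)$ is constant across the oscillating range while $r = \rho(\eulerA)$ strictly increases with $a$. Once that monotonicity is secured, the remaining steps are routine, so I do not anticipate a genuine obstacle beyond bookkeeping.
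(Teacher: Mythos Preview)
Your proof is correct and follows essentially the same line as the paper's: both evaluate the eigenvalue at the stability boundary $a \to a_{\max}^\text{EF}$ (where $r=1$) to obtain $\phi_{\max}$ and then take the limit $\alpha,\beta \to 1$. The paper writes the equivalent form $\phi_{\max} = \arcsin\tfrac{\sqrt{4-(\alpha+\beta)^2}}{2}$ via the imaginary part rather than your $\arccos\tfrac{\alpha+\beta}{2}$, and it simply asserts that the maximum occurs at the boundary, whereas you supply the monotonicity argument explicitly.
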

\begin{proof}
Recall that $\alpha = e^{-\frac{\Delta t}{\tau_u}}$ and $\beta = e^{-\frac{\Delta t}{\tau_w}}$ for $\tau_u > 0$ and $\tau_w > 0$.
We evaluate $\Im(\lambda_1)$ at the stability boundary $a_{\max}^\text{EF}$ and obtain $\Im\left(\lambda_1(a_{\max}^\text{EF})\right) = \frac{\sqrt{-(\alpha + \beta)^2 + 4}}{2}$. At this point the modulus is $r = 1$ and the maximum radial frequency is thus,
\begin{equation}
    \phi_{\max} = \arcsin{\frac{\sqrt{-(\alpha + \beta)^2 + 4}}{2}}.
\end{equation}
As $\lim\limits_{\tau \to \infty} e^{-\frac{\Delta t}{\tau}} = 1$, it is clear that $\lim\limits_{\tau_u,\tau_w \to \infty} \phi_{\max} = 0$.
\end{proof}
\subsection*{Benchmark Datasets and Preprocessing for Tables \ref{tab:lif_results} and \ref{tab:results}}
\label{subsec:datasets_and_preprocessing}
The SHD dataset was preprocessed by sum-pooling spikes temporally using bins of \qty{4}{\milli\second} (as in \cite{higuchi2024balanced}) and spatially using a bin size of $5$ channels, such that its input dimension was reduced from $700$ to $140$ channels, as in \cite{hammouamri2023learning}. Note, that any resulting preprocessed sample $X\in \mathbb{N_+}^{T\times140}$ of length $T$ thereby has integer-valued entries, where each entry $x_{kj}$ denotes the number of spikes occurring during the $k$-th $4$ ms time window within the $j$-th group of $5$ channels in the raw data. We padded samples that were shorter than $250$ time steps to a minimum length of $250$ with zeros to ensure that the network has enough 'time' for a decision, but kept longer sequences as they were. We applied the same temporal and spatial pooling to the SSC dataset, but zero-padded the samples to a minimum length of $300$ time steps, since the relevant part of the data usually appears later in the sequence in SSC. 
For the ECG dataset, we used the preprocessed files from \cite{yinAccurateEfficientTimedomain2021}, where the two-channel ECG signals from the QT database \cite{ecg-qtdb} were preprocessed using a level-crossing encoding. For details refer to \cite{yinAccurateEfficientTimedomain2021}, Methods section. We considered two cases of the SHD dataset, one where we validated on the test set and chose the best epoch based on this validation (=test) accuracy, and one where we used $20\%$ of the training set as held-out validation set and performed testing on the test set, using the weights of the epoch with the highest validation accuracy. For SSC, a distinct validation set was provided. For ECG, we used a fraction of $5\%$ of samples from the training set for validation.

\subsection*{Training and Hyperparameter Search Details for all Tasks}
\label{subsec:training_and_hyerparams}
\paragraph{Optimizer and surrogate gradient} We trained the SNNs using the SLAYER surrogate gradient \cite{slayer} defined by $\frac{\partial S}{\partial v} = \frac{c \alpha}{2\exp\left(\alpha|v|\right)}$ with $\alpha$ and $c$ according to Table \ref{tab:hyperparameters} for all experiments. We found that careful choice of the scale parameter $c$ is crucial to achieve good performance for all networks in which the SLAYER gradient is used. A too high $c$ results in an exploding gradient, whereas a too small $c$ can result in vanishing gradients. We trained all networks with back-propagation through time with minibatches using PyTorch \cite{anselPyTorchFasterMachine2024}. We used the ADAM \cite{kingma2017adam} optimization algorithm for all experiments with $\beta_1=0.9$, $\beta_2=0.999$ and $\epsilon=10^{-8}$. For the LIF and adLIF models we detached the spike from the gradient during the reset, such that $u[k] = \hat{u}[k] \cdot (1-\text{sg}\left(S[k]\right))$ where sg is the stop-gradient function with $\text{sg}(x) = x$ and $\frac{\partial}{\partial x}\text{sg}(x) = 0$. \revision{We applied gradient clipping and rescaled the gradient if it exceeded a norm of $10$ (audio compression task) or $1.5$ (all other tasks).}
\revision{\paragraph{Network Ouput Layer} The output layer of the network consisted of leaky integrator (LI) neurons of the same number as classes in the task at hand. The LI membrane potential at time step $k$ is given by
\begin{equation}
    u[k] = \gamma u[k-1] + \left(1- \gamma\right) I[k]
\end{equation}
with $\gamma = \exp\left(-\frac{\Delta t}{\tau_\mathrm{out}}\right)$, time constant $\tau_\mathrm{out}$ and input current $I[k]$. LI neurons don't emit spikes, hence they lack a threshold and reset mechanism, instead their output is their membrane potential $u[k]$.}
\paragraph{Loss functions} In all tasks, the last layer of the network consisted of leaky integrator neurons that match the number of classes of the corresponding task, or the number of masses in case of the oscillatory dynamical system trajectory prediction task. \revision{For the audio reconstruction task we quantized the waveform into discrete bins and used a multi-component loss (please refer to Section \enameref{subsec:audio_compression_details})}. For the SHD and BSD tasks, the loss function was given by $L=\text{CrossEntropy}\left(\sum_t \text{softmax}(\bm{y}_t), \bm{c} \right)$ for one-hot encoded target class $\bm{c}$ and network output $\bm{y}^t$ at time step $t$. We discarded the network output of the first $10$ time steps for the calculation of the loss for SHD, whereas for the BSD task we discarded the output of the first $80\%$ of time steps. For SSC, we used the loss function $L=\text{CrossEntropy}\left(\text{softmax}(\sum_t \bm{y}_t), \bm{c} \right)$. Again, we discarded the network output of the first $10$ time steps.
For the ECG dataset, the loss was computed on a per-time-step level as $L=\sum_t \text{CrossEntropy}\left(\text{softmax}(\bm{y}_t), \bm{c}_t \right)$ where $\bm{c}_t$ is the label per time step $t$. 
For the trajectory prediction task, we used the mean-squared-error (MSE) loss over the temporal sequence, $L=\frac{1}{n\times T}\sum^{T}_{t=1}\sum^{n}_{j=1}\left(\overset{*}{y}^{t}_j - y_j^t\right)^{2}$ where $T$ is the number of time steps in the sequence, $n$ the number of masses and $\overset{\ast}{\bm{y}}^t$ is the ground truth of the masses' displacements.
\paragraph{Hyperparameter tuning} The hyperparameters for the adLIF network were tuned using a mixture of manual tuning and the Hyperband algorithm \cite{hyperband}, mainly on the SHD classification task. Since the search space of hyperparameters of the adLIF model is quite large, we neither ran exhaustive searches on the ranges for time constants $\tau_u$ and $\tau_w$, nor the ranges of parameters $a$ and $b$ (for details on how we train these parameters see next section). We selected the ranges based on our stability analyses to ensure stable neurons, and on previous empirical results on similar models \cite{bittar2022surrogate}. For SSC
% , TIMIT 
and ECG we manually tuned only the number of neurons, the learning rate and the SLAYER gradient scale $c$ and kept the values for other hyperparameters same as for the SHD task. We found, that the out-of-the-box performance for these hyperparameters was already very competitive, providing a solid choice as a starting point.
For the trajectory prediction task, we performed the hyperparameter search using the Hyperband algorithm \cite{hyperband}.
Only the LIF model was highly sensitive to its hyperparameters \--- we found that the time constants $\tau_u$ and the hyperparameters of the SLAYER gradient function were critical for learning.
For the LSTM network, we found that the learning rate should be lower than for LIF and adLIF, and similar to \cite{jozefowicz2015empirical}, we found that for the model to converge, the forget gate of LSTM should be initially biased to one.
For EF-adLIF and SE-adLIF, the performances were mostly invariant to hyperparameter changes, so we conserved the hyperparameters found for SHD and only restricted the parameters $a$ to correspond to the frequency bandwidths described in \enameref{sec:spring-mass}.
\paragraph{Reparameterization and initialization} For all models, we trained the time constants $\tau_u$ and $\tau_w$ (except for LIF, where no $\tau_w$ occurs) in addition to the synaptic weights. \revision{For all tasks except the audio compression task} we reparameterized the time constants via
\begin{equation}
\label{eq:tau_reparam}
    \tau_x = \tau_x^\text{min} + \theta_x \left(\tau_x^\text{max} - \tau_x^\text{min}\right),
\end{equation}
with $x \in \{u,w\}$, trained parameters $\theta_x$ clipped to the interval $\theta_x \in [0,1]$ during training and $\tau_x^\text{min}$ and $\tau_x^\text{max}$ as hyperparameters according to Table \ref{tab:hyperparameters}. We found this reparameterization useful if the ADAM optimizer is used, since its dynamic adjustment of the learning rates expects all parameters to be roughly in the same order of magnitude, which is usually not the case for joint training of neuron time constants (order of $10^1$ to $10^2$) and synaptic weights (order of $10^0$). \revision{In the audio compression task, we found it beneficial to use a different reparameterization scheme, where we directly parameterized $\alpha$ and $\beta$ according to:
\begin{align}
    \alpha &= \sigma\left(\theta_\alpha\right) \exp\left(-\frac{\Delta t}{\tau_u^\text{min}}\right) + (1-\sigma\left(\theta_\alpha\right)) \exp\left(-\frac{\Delta t}{\tau_u^\text{max}}\right), \\
    \beta &= \sigma\left(\theta_\beta\right) \exp\left(-\frac{\Delta t}{\tau_w^\text{min}}\right) + (1-\sigma\left(\theta_\beta\right)) \exp\left(-\frac{\Delta t}{\tau_w^\text{max}}\right),
\end{align}
with parameters $\theta_\alpha$ and $\theta_\beta$ and logistic sigmoid function $\sigma$. }

For the parameters $a$ and $b$ of the adLIF model we applied a \revision{reparameterization similar to Eq.~\eqref{eq:tau_reparam}}, where
\begin{align}
    a &= q \hat{a} \label{eq:reparam_a} \\
    b &= q \hat{b}, \label{eq:reparam_b}
\end{align}
with hyperparameter $q$ and trained parameters $\hat{a}$ and $\hat{b}$, clipped to $\hat{a} \in [0,1]$ and $\hat{b} \in [0, 2]$ during training\revision{, except for the audio compression task, where we clipped them according to Table \ref{tab:hyperparameters_audio_comp}}. We restrict the range of parameters to $a,b>0$ to avoid instabilities caused by a positive feed-back loop between adaptation variable $w$ and membrane potential $u$. This constraint has also been discussed in \cite{deckers2024co}. We initialized the feed-forward weights in all models uniformly in the interval $[-\sqrt{\frac{1}{\text{fan}_\text{in}}},\sqrt{\frac{1}{\text{fan}_\text{in}}}]$ with $\text{fan}_\text{in}$ as the number of inbound synaptic feed-forward connections, and all recurrent weights according to the orthogonal method described in \cite{saxe2014a} with a gain factor of $1$. $\hat{a}$, $\hat{b}$, $\theta_u$ and $\theta_w$ are initialized uniformly over their respective range, as stated above. Note, that these are neuron-level parameters, i.e. each neuron has individual values of these parameters.

\subsection*{Details for the dynamical system trajectory prediction task}
As illustrated in Fig.~\ref{fig:oscillator_task}a, we consider a system of $n$ masses connected with $n+1$ springs where each mass (except the two outermost) is connected to two other masses by a spring. The rightmost and leftmost masses are connected to one mass and the fixed support each (see schematic in Fig.~\ref{fig:oscillator_task}a).
The temporal evolution of the displacements of the masses $\bm{x}(t) \in \mathbb{R}^{n}$ can be written using the equation of motion
\begin{equation}\label{eq:motion_spring_mass}
    M\Ddot{\bm{x}}(t) + S\bm{x}(t) = 0,
\end{equation}
where $M \in \mathbb{R}^{n\times n}$ is a diagonal matrix with diagonal entries corresponding to the masses in kg, while $S \in \mathbb{R}^{n\times n}$ corresponds to the matrix of interaction between the masses determined by the spring coefficients. $S$ is given by 
\begin{equation}
    S_{ij} = 
    \begin{cases} 
    s_i + s_{i+1} & \text{if } i = j \\
    -s_i & \text{if } j = i-1 \\
    -s_{i+1} & \text{if } j = i+1 \\
    0 & \text{otherwise}
    \end{cases}
\end{equation}
with $\bm{s} \in \mathbb{R}^{n+1}$ as the spring coefficients in \qty{}{\newton\per\meter}.
We solve this system by considering velocity vector $\bm{v}(t)= \dot{\bm{x}}(t)$, which results in an equivalent system of $2n$ equations of the form
\begin{equation}
\underbrace{\begin{pmatrix}
     I & \bm{0} \\
    \bm{0} & M
    \end{pmatrix}}_{A}
    \begin{pmatrix}
    \dot{\bm{x}}(t) \\
    \dot{\bm{v}}(t)
    \end{pmatrix} = \underbrace{\begin{pmatrix}
     \bm{0} & I\\
    -S & \bm{0}
    \end{pmatrix}}_{B}
    \begin{pmatrix}
    \bm{x}(t) \\
    \bm{v}(t)
    \end{pmatrix},
\end{equation}
with $\bm{0}$ and $I \in R^{n \times n}$ representing the zero-valued and the identity matrices respectively.
The system has a homogeneous solution of the form
\begin{equation}
    \begin{pmatrix}
    \bm{x}(t) \\
    \bm{v}(t)
    \end{pmatrix} = \exp{(A^{-1} Bt)} \begin{pmatrix}
    \bm{x}(0) \\
    \bm{v}(0)
    \end{pmatrix},
\end{equation}
where $\bm{x}(0), \bm{v}(0) \in \mathbb{R}^n$ correspond to the initial conditions of displacements and velocities of the masses respectively.

For each independent trial in Fig.~\ref{fig:oscillator_task}e, we constructed an individual spring-mass system with $n=4$ masses and $5$ springs by generating random spring coefficients $s_i$ for each spring $i$ from a uniform distribution over the interval $[500, 10000]$ \qty{}{\newton\per\meter}, whereas the masses' magnitudes were fixed to $1$ kg. The system's parameters were then held fixed throughout the trial. We sampled $4{,}096$ trajectories from this system with random initial conditions to construct the dataset for a trial.
Each network was then trained on this per-trial dataset for $200$ epochs. We repeated the experiment for $5$ independent trials.

The chosen range of spring coefficients resulted in eigenfrequencies of the systems between approximately $2$ to $32$Hz.

For Fig.~\ref{fig:oscillator_task}g, we trained our models on different systems with increasing minimal and maximal frequency.
Since the frequency of oscillations is determined by $S$ and the spring coefficients, we seek to associate a range of frequencies with a range of spring coefficients from which to sample system parameters.
The theoretical frequency bandwidth associated with Eq.~\eqref{eq:motion_spring_mass} can be approximated by calculating the eigenfrequency of the system when all springs are set to the same coefficient $s$, resulting in the matrix  $\hat{S}$ of spring coefficients
\begin{equation}
    \hat{S}_{ij} = 
    \begin{cases} 
    2s & \text{if } i = j \\
    -s & \text{if } j = i-1 \\
    -s & \text{if } j = i+1 \\
    0 & \text{otherwise}.
    \end{cases}
\end{equation}
In the case of unit masses, we can formulate an eigenvalue problem from the system defined by Eq.~\eqref{eq:motion_spring_mass} as
\begin{equation}
    \hat{S}\bm{v}_j = \omega^2_j \bm{v}_j,
\end{equation}
where $\bm{v}_j$ is the eigenvector associated to the eigenvalue $\omega^2_j$ and $\omega_j$ is the corresponding radial frequency.
$\hat{S}$ is a tridiagonal Toeplitz matrix, as such the $j$-th eigenvalue associated to mass $j$ has a closed form solution \cite{kulkarni1999eigenvalues}  
\begin{equation}
\omega^2_j = 4s\sin^{2}{\left(\frac{j\pi}{2\left(n + 1\right)}\right)},
\end{equation}
where $s$ is the spring coefficient and $n$ the number of masses.
The maximal eigenvalue is thus given by
\begin{equation}
    \omega^2_{\max} = 4s\sin^{2}{\left(\frac{n\pi}{2(n + 1)}\right)} \approx 4s,
\end{equation}
and the radial frequency by
\begin{equation}
    \omega_{\max} \approx 2\sqrt{s}.
\end{equation}
The minimal eigenvalue corresponds to 
\begin{equation}
    \omega^2_{\min} = 4s\sin^{2}{\left(\frac{\pi}{2(n + 1)}\right)} \approx 4s \left(\frac{\pi}{2(n+1)}\right)^{2},
\end{equation}
and the radial frequency is thus given by
\begin{equation}
    \omega_{\min} \approx \frac{\pi}{n+1} \sqrt{s}.
\end{equation}
The range of spring coefficients can be determined by setting $\omega_{\min}$ (resp. $\omega_{\max}$) to the desired minimum (resp. maximum) radial frequency and solving for $s$.

For each data sample, corresponding to the displacement trajectory $X \in \mathbb{R}^{n\times T}$, we randomly generated an initial condition consisting of an initial displacement $x_{i,0}$ sampled from a standard normal distribution $\mathcal{N}(0, 1)$ and zero-valued initial velocity, for each mass $i\in [1,n]$. We then simulated the temporal evolution of this system, such that the $k$-th column of $X$ was given by displacements of the masses at time $k\Delta t$ with simulation time step $\Delta t = \qty{2.5}{\milli\second}$ for \qty{500}{\milli\second} of simulation during training, totaling 200 time-steps. The vector $\bm{x}[k]$ in the main text was then given by the $k$-th column of $X$. The velocities were intentionally held out from the training data to increase the task difficulty and enforce the utilization of internal states by the neural network. For Fig.~\ref{fig:oscillator_task}e, the model with $958$ parameters corresponds to an adLIF network of $25$ neurons. We doubled the number of neurons until $3200$, neuron counts of other models (LIF and LSTM) were scaled such that these models match the number of trainable parameters in the corresponding adLIF network.

\subsection*{\revision{Details for the audio compression task}}
\label{subsec:audio_compression_details}
\revision{
For the audio compression task from Fig.~\ref{fig:audio_compression_task} we used the 'train-clean-100' dataset from the LibriTTS corpus \cite{zen2019libritts,panayotovLibrispeechASRCorpus2015}, comprising $53.78$ hours of raw recorded speech data from audiobooks, split into $33{,}200$ samples of varying length in the interval of $[0.16, 32]$ seconds. In this dataset, each sample is encoded with $16$ bit Pulse Code-Modulation (PCM) sampled at \qty{24}{\kHz}. We first rescaled each sample $x$ individually by dividing it by its peak amplitude $|\text{max}(x)|$, ensuring that the rescaled sample had a maximum absolute amplitude of $1$. This rescaling did not change the zero-amplitude level of the data. Next, we segmented each rescaled sample into non-overlapping blocks of $2{,}560$ time steps (equivalent to \qty{106}{\milli\second}) and treated each of these blocks as individual, independent sample $\bm{x}_\text{w} \in \mathbb{R}^{2560}$ for training. Since the task was the reconstruction of the original signal, akin to an autoencoder, the input time series $\bm{x}_\text{w}$ and the target time series $\overset{*}{\bm{y}}_\text{w} \in \mathbb{R}^{2560}$ were equivalent. We use the subscript $\text{w}$ to denote that both $\bm{x}_\text{w}$ and $\overset{*}{\bm{y}}_\text{w}$ represent amplitudes in the waveform domain.}

\revision{To train spiking neural networks for waveform reconstruction, we considered a spectral and a temporal loss.
For the temporal loss, our objective was to maximize the likelihood of the target amplitude $\overset{*}{y}_\text{w}[k+1]$ at the $k+1$-th time step, conditioned on the sequence of preceding amplitude values $(x_\text{w}[1], \dots, x_\text{w}[k])$. This optimization problem can be formalized to
\begin{equation}
    \arg\max_{\theta} P(\overset{*}{\bm{y}}_\text{w};\theta) = \prod^{T-1}_{k=1}  p(\overset{*}{y}_\text{w}[k+1] | x_\text{w}[1], \dots, x_\text{w}[k]; \theta),
\end{equation}
where $\theta$ are the model parameters. In line with recent studies (e.g. Wavenet \cite{van2016wavenet} and sampleRNN \cite{mehriSampleRNNUnconditionalEndtoEnd2017}), we quantize $\overset{*}{\bm{y}}_\text{w}$ into $256$ discrete levels, treating $p$ as a categorical distribution.
The output layer of all our simulated networks consisted of $256$ leaky-integrator (LI) neurons representing the logits associated with each of these discrete levels. 
The target class associated with each time step was defined by the quantization scheme:
\begin{equation}
    \overset{*}{y}_\text{q}[k] = \left\lfloor\frac{f_{A}( \overset{*}{y}_\text{w}[k] + 1)}{\Delta_\text{q}}\right\rceil,
\end{equation}
where $\left\lfloor x \right\rceil$ denotes the rounding operator, $\Delta_q = \frac{2}{2^{8} - 1}$ is the discretization interval, and subscript $q$ indicates the quantized, categorical representation. The transformation function $f_{A}(y_\text{w}[k])$
represents the A-law non-linearity \cite{a_law}, defined as:
\begin{equation}
\label{eq:a_law}
f_{A}(x) = \text{sign}(x)%\texttt{A\_transform}(x) =
\begin{cases}
\frac{A |x|}{1 + \ln(A)} & \text{if } |x| < \frac{1}{A}, \vspace{5pt}\\
\frac{1 + \ln(A |x|)}{1 + \ln(A)} & \text{if } |x| \geq \frac{1}{A},
\end{cases}
\end{equation}}
\revision{where $A=86.7$ is a hyperparameter, and $|x|$ denotes the absolute value of $x$. This type of invertible log-space mapping was previously used for audio generation in \cite{van2016wavenet} and is a common technique used in $8$ bit telephony to improve noise robustness \cite{a_law}, as it is less sensitive to low amplitude noise while maintaining high precision for significant amplitude magnitudes. 
The original audio data is encoded with $16$ bit, hence for perfect reconstruction with a categorical distribution this would require $65536$ classes. By using a companding algorithm like the A-law, we limit the effects of reducing our precision to $8$ bit.
The temporal loss is finally defined as
\begin{equation}
    \mathcal{L}_\text{temp}(\bm{y}_\text{q}, \overset{*}{\bm{y}}_\text{q}) = \text{CrossEntropy}(\bm{y}_\text{q}, \overset{*}{\bm{y}}_\text{q})
\end{equation}
where $\bm{y}_\text{q}[k] = \text{softmax}(\frac{\bm{o}[k]}{\tau})$ of model output $\bm{o}[k] \in \mathbb{R}^{256}$  at time $k$ and $\tau \geq 1$ is a temperature hyperparameter, which we initialized at $\tau=10$ and reduced by a factor of $0.95$ every $2000$ training batches, until a minimum value of $1$.}

\revision{From this network output $\bm{y}_\text{q}[k]$ in the quantized domain we obtained the waveform-amplitude $y_\text{w}[k]$ by computing a convex sum over the quantization levels via
\begin{equation}
    y_\text{w}[k] = f^{-1}_A(\sum\limits^{256}_{i=1}(i - 1)\cdot y^i_\text{q}[k]\cdot\Delta_{\text{q}} - 1).
\end{equation}
}
\revision{Here, $f^{-1}_A$ denotes the inverse of the A-law operation (Eq.~\eqref{eq:a_law}) and superscript $i$ denotes the $i$-th entry of the quantized vector.
While temporal objectives, such as the one described above, have been used successfully in important studies \cite{mehriSampleRNNUnconditionalEndtoEnd2017, van2016wavenet}, modern approaches to audio generation and compression generally rely on spectral objectives \cite{zeghidourSoundStreamEndtoEndNeural2021,defossez2022high}.
In this work we found that combining spectral and temporal objectives gives the best results.
From the reconstructed waveform $\bm{y}_\text{w}$ and the target waveform $\overset{*}{\bm{y}}_\text{w}$, we computed a spectral loss based on a multi-resolution Mel-spectral short-term Fourier transform (STFT) loss \cite{gritsenko2020spectral} defined as
\begin{equation}
\label{eq:l_w}
    \mathcal{L}_\text{w} = \frac{1}{6}\sum_{k \in \{2^6, 2^7, \ldots, 2^{11}\}} \left(\mathcal{L}^{k}_{lin} + \mathcal{L}^{k}_{log}\right),
\end{equation}
where
\begin{align}
\mathcal{L}^{k}_{lin}(\bm{y}_\text{v},  \overset{*}{\bm{y}}_\text{w}) &=
\frac{1}{T  N}\| |\text{STFT}^{k}_\text{Mel}(\bm{y}_\text{v})| - |\text{STFT}^{k}_\text{Mel}(\overset{*}{\bm{y}}_\text{w})| \|_1 \\
\mathcal{L}^{k}_{log}(\bm{y}_\text{v}, \overset{*}{\bm{y}}_\text{w}) &=
\frac{1}{T  N} \| \ln(|\text{STFT}^{k}_\text{Mel}(\bm{y}_\text{v})|) - \ln(|\text{STFT}^{k}_\text{Mel}(\overset{*}{\bm{y}}_\text{w})|) \|_1.
\end{align}
Here, $\text{STFT}^{k}_\text{Mel}$ is the unnormalized short-term Fourier transform operator over a $k$-length Hann window using the HTK-variant of the Mel frequency filters.  $||\cdot||_1 $ denotes the  $L1$ norm, $T$ corresponds to the number of STFT frames, and $N$ the number of Mel filters ($N=128$). 
We consider a hop length of $k/4$ and the number of frequencies for the STFT was set to $2048$. The overall loss was defined as  
\begin{equation}
\label{eq:total_loss}
   \mathcal{L}_\text{total}\left(\bm{y}_\text{q}, \overset{*}{\bm{y}}_\text{w}\right) = \mathcal{L}_\text{w}\left(\bm{y}_\text{w},\overset{*}{\bm{y}}_\text{w}\right) + \mathcal{L}_\text{temp}(\bm{y}_\text{q}, \overset{*}{\bm{y}}_\text{q}).
\end{equation}
For every sample, during training, we ignored the model output during the first $50$ time steps (i.e. $2$ ms) as burn-in time for the model.}

\revision{
For all simulations (LIF, SE-adLIF, and EF-adLIF), the networks consisted of two fully-connected encoder layers, where only the second layer included recurrent connections. From the second layer, the first %$n_b$ neurons, with a bottleneck size of $n_b$,
$16$ neurons were connected to the decoder network, enforcing the information bottleneck. The decoder network was also a two-layer network, with recurrent connections in both layers, followed by an additional non-recurrent output layer of $256$ leaky integrator (LI) neurons. All remaining hyperparameters are listed in Table \ref{tab:hyperparameters_audio_comp}. For each time step $k$, the network output $\bm{y}[k] \in \mathbb{R}^{256}$ was constructed by passing the output of the LI layer through a softmax. 
}

\revision{
In this task, we employed a small modification to the reset mechanism of the LIF and adLIF neuron models: Instead of resetting the membrane potential to $0$ after a spike, we reset it to a learnable reset potential $u_\text{reset}$. Each neuron thereby had a separate value for $u_\text{reset}$, clipped to the interval $[- \vartheta, \vartheta]$. In addition, we trained the spike threshold $\vartheta \in \mathbb{R}^{+}$ of each individual neuron, instead of treating it as fixed hyperparameter, as done in all our other tasks.
We found that these modifications substantially improved the performance of the SNN networks. Another modification we found beneficial was to delay the target time series by a delay of $20$ time steps ($\approx$\qty{0.8}{\milli\second}), such that the model had more 'time' to encode the waveform into spikes. In other words, at time step $k$, the prediction target for the model was $\overset{*}{\bm{y}}_\text{q}[k-20]$. During training, we applied spike regularization to each layer $l$ of the network according to:
\begin{equation}
    \mathcal{L}_\text{reg}(r^{n}_l) =
\begin{cases}
g_l^+ \left(r^{n}_l -  t_{l}^+\right)^2 & \text{if } r^n_l > t^+_l, \\
g_l^- \left(r^{n}_l - t_{l}^- \right)^2 & \text{if } r^n_l < t^-_l, \\
0 & \text{otherwise},
\end{cases}
\end{equation}
where $r^n_l$ is the average number of spikes of neuron $n$ in layer $l$ over time, $t_{l}^+$, $t_l^-$, $g_l^+$ and $g_l^-$ are layer-specific hyperparameters according to Table \ref{tab:audio_spike_reg}. $\mathcal{L}_\text{reg}$ was added to the total loss from Eq.~\eqref{eq:total_loss}.
}

\begin{table}[!ht]
    \centering
    \caption{\revision{\textbf{Spike regularization parameters for the audio compression task, LIF and AdLIF models} *) for the second encoder layer, the spike regularization was only applied to the first $16$ neurons, which constitute the bottleneck and are the only neurons connected to the decoder.}}
    \label{tab:audio_spike_reg}
    \begin{tabular}{ccccc}
    \hline
        Parameter & \rotatebox{90}{Encoder layer 1} & \rotatebox{90}{Encoder layer 2 *} & \rotatebox{90}{Decoder layer 1} & \rotatebox{90}{Decoder layer 2} \\ \hline
        $t_{l}^-$ & $0.05$ & $0.005$ & $0.05$ & $0.05$  \\
        $t_l^+$ & $0.1$ / $0.5$ (LIF) &  $0.012$ & $0.1$ / $0.5$ (LIF) & $0.6$  \\
        $g_l^-$ & $10$ & $10$ & $10$ & $10$\\
         $g_l^+$ & $10$ & $100$ & $10$ & $10$ \\ \hline         
    \end{tabular}
\end{table}

\subsection*{Details for simulations in Fig.~\ref{fig:gradient_magnitude}}
The neuron parameters for the experiments in Fig.~\ref{fig:gradient_magnitude} were $\tau_u=\qty{100}{ms}$ for LIF and $\tau_u=\qty{100}{ms}$, $\tau_w=\qty{300}{ms}$, $a=300$ for adLIF, resulting in an adLIF neuron with an intrinsic oscillation frequency $f\approx$ \qty{16}{Hz}. We used the first derivative of a Gaussian function as wavelet, which was scaled such that the central oscillation frequency was $\approx \qty{17}{Hz}$. The offsets for the wavelet in Fig.~\ref{fig:gradient_magnitude}e were $-32$ ms and $-48$ ms as $-\frac{1}{2} P$ and $-\frac{3}{4} P$ respectively. The loss was evaluated at time $T=\qty{330}{ms}$.

\subsection*{Details for the Burst Sequence Detection (BSD) task}
\label{subsec:meth_bsd_details}
The BSD task from Fig.~\ref{fig:optimization_based_sampling}a,b is a $20$-class classification task consisting of $8{,}000$ samples. Each binary-valued sample $X \in [0,1]^{T\times N}$ in this task consists of spike trains of $N=10$ input neurons and a time duration of T=\qty{200}{\milli\second} in discrete \qty{1}{ms} time steps. The objective in this task is to classify any sample $X$ based on the appearance of spike bursts of specific neurons at specific timings. The timing and neurons for these class-descriptive bursts were pre-assigned upfront and kept fixed for the generation of the entire dataset. To generate this dataset, we employed a two-step process where we first randomly assigned class-descriptive burst timings for each class, then stochastically sampled data samples based on these pre-assigned timings. In detail, this procedure was as follows:
For the pre-assignment of burst timings, we sampled a random subset $\mathcal{S}_c$ of $3$ input neurons for each class $c$ and sampled a random time point $t^n_c$ uniformly over $[20,170]$ for each neuron $n \in \mathcal{S}_c$ for class $c$. These time points served as the class-descriptive burst timings.

After generating the class-specific burst-timings for all classes, the data samples were generated. To generate a sample of a given class $c$, we first determined the burst timing for each neuron $n\in [1,N]$ as follows. If neuron $n$ is one of the $3$ neurons inside the pre-defined set $\mathcal{S}_c$ of class $c$, then its burst-timing $t^n$ is determined by the pre-assigned timing $t^n_c$. Otherwise, we randomly selected a burst timing $t^n$ uniformly over $[20,170]$ for neuron $n$. In the example shown in Fig.~\ref{fig:optimization_based_sampling}a, a sample of class $2$ is defined by a spike burst of neuron $4$ at \qty{159}{\milli\second}, a burst of neuron $5$ at \qty{143}{\milli\second}, and a burst of neuron $10$ at \qty{117}{\milli\second}. Hence, the set $\mathcal{S}_2$ of class-descriptive neurons for class $2$ is $\{4,5,10\}$ with burst timings $t^4_2=159$, $t_2^5=143$ and $t_2^{10}=117$. Only if all of the neurons assigned to $S_c$ emit a burst at their corresponding timings $t^c_n$, the sample should be classified as class $c$. All other input neurons show distraction bursts at random timings.

In an input sample $X \in [0,1]^{T\times N}$, $x_{t,n}=1$ indicates a spike of input neuron $n$ at time $t$ ($x_{t,n}=0$ if there is no spike).
We generated the spike trains $X$ for input neurons for a given class $c$ as follows. Each $x_{t,n}$ was drawn from a Bernoulli distribution with $p(x_{t,n}=1|c)$ that was obtained as follows.
Bursts were modeled as brief, smooth increases in spike probability at the corresponding times $t^n$. We achieved this by a Gaussian function $f(t,t^n) = \exp\left(\frac{-(t-t^n)^2}{4}\right)$, yielding high spike probability for time steps close to $t^n$, and lower spike probability further away. From this function, we computed the final spike probability $p(x_{t,n}=1|c) = \frac{f(t,t^n)}{\max_{k\in[0,T]} f(k,t^n)} \cdot 0.75 + 0.05$, interpolating between the minimum and maximum spike probabilities of $0.05$ ($50$ Hz) and $0.8$ ($800$ Hz) respectively. This way we ensured that the spike probability went up to $0.8$  during a burst and was approximately $0.05$ otherwise to mimic background noise. 
%Finally, we sampled the spike trains for each input neuron from this distribution by treating each time step as independent Bernoulli trial $x_{t,n} \sim \mathcal{B}\left(p(x_{t,n}=1|c)\right)$. 
This procedure was repeated for each individual data sample $X$ to obtain the dataset. We randomly chose a class for each new sample, such that the class cardinalities in the dataset were roughly but not exactly balanced. The whole data set consisted of $8000$ samples. We held out $10\%$ ($800$ samples) of this dataset for validation and $20\%$ ($1600$ samples) for testing. For the experiment with different numbers of classes (Fig.~\ref{fig:optimization_based_sampling}b), we only increased the number of classes, but did not increase the number of samples in the dataset. This resulted in increased difficulty for larger numbers of classes, since less samples per class were present in the training data.
\subsection*{Optimization-based feature visualization}
\label{subsec:meth_opt_based_feature_viz_details}
The visualization of important input features for the trained network shown in Fig.~\ref{fig:optimization_based_sampling} was obtained using optimization-based feature visualization as defined in \cite{erhan2009visualizing}.
We applied the same sampling method to four different networks, one adLIF and one LIF network trained on BSD and one adLIF and one LIF trained on SHD. For both cases we used the same algorithm, but with different parameters, which are listed in Table \ref{tab:opt_sampling_params}. For both tasks, we selected network instances that achieved high accuracy. The sampling procedure was performed as follows. First, we initialized a random input example $X^0\in\mathbb{R}^{T\times C}$ with $C$ as the input dimension and $T$ the input length (number of time steps), both of the same dimension as the data the network was trained on. Here, each $x^0_{t,c}$ was drawn from the uniform distribution $\mathcal{U}(0,1)$. At each iteration $k$, the sample was passed through the network and the loss was calculated akin to the loss function used during training on the specific dataset (SHD or BSD) to obtain the gradient $G_k = \nabla_{X} L(X,c)|_{X^k}$, with respect to a pre-defined target class $c$. This gradient was normalized to obtain $\Delta X^k = \eta\frac{ G_k}{\zeta_k}$ with normalization factor $\zeta_k=\max\left\{|G_k|_\text{max}, \epsilon\right\}$ using an $\epsilon = 10^{-6}$ for numerical stability, and a step size $\eta$. $|\cdot|_\text{max}$ denotes the maximum norm returning the highest absolute value. After the gradient update, we clipped the sample $X^{k+1}$ to the positive range, as the data from both datasets, SHD and BSD, is all-positive. We then applied Gaussian smoothing on the data sample $X^{k+1}$. In the BSD case we applied a $1$D smoothing along the time axis using a Gaussian kernel $g(s)=\frac{1}{\sqrt{2\pi}\sigma}\exp\left(-\frac{s^2}{2\sigma^2}\right)$ via $X^{k+1} \leftarrow \nu X^{k+1} + \gamma (g*X^{k+1})$.  Decay $\nu \in [0,1]$ and smoothing coefficient $\gamma \in [0,1]$ are hyperparameters, $*$ denotes a convolution operation. The $\sigma$ parameter of the Gaussian kernel was linearly decayed from $\sigma_\text{init}$ to $0$ over the $n_\text{iter}$ iterations to decrease the effective regularization applied by the kernel smoothing. For the SHD case we applied a $2$D smoothing to the data sample over both the time dimension and the spatial dimension. For the BSD case, we normalized the sample $X^{k+1}$ with $X^{k+1} \leftarrow X^{k+1} \frac{\mu_\text{all}}{\mu_{k+1}}$, where scalar $\mu_\text{all}$ is the mean spike rate over all data samples and input neurons in the data, and $\mu_{k+1}$ the mean of sample $X^{k+1}$ over both the time and the input neuron axes. For the SHD setup, we ignored the loss for the first $100$ sequence time steps (compare with grey shaded area in Fig.~\ref{fig:optimization_based_sampling}e, whereas for the BSD case we ignored the loss for the first $80\%$ of time steps, which was also done during training of the network on the dataset (see Section \enameref{subsec:training_and_hyerparams}).

\begin{table}[]
    \centering
    
    \caption{\textbf{Hyperparameters for optimization-based feature visualization}}
    \label{tab:opt_sampling_params}
    \begin{tabular}{ccc}
    \hline
        & BSD & SHD \\ \hline
         $n_\text{iter}$ & $400$ & $200$  \\
         $\eta$ & $0.1$ & $0.1$ \\
         $\nu$ & $0.891$& $0.891$ \\
         $\gamma$ & $0.1$ & $0.1$ \\
         $\sigma_\text{init}$ & $5$ & $5$ \\ \hline
         
    \end{tabular}
\end{table}

\subsection*{Hyperparameters}

We present hyperparameters that remained unchanged for all tasks and models in Table \ref{tab:glob_hyperparameters} and the task- and model-specific hyperparameters in Table \ref{tab:hyperparameters}.

\begin{table}[!t]
\centering
    \caption{\textbf{Global hyperparameters} FF Init: initialization of feed-forward synaptic weights. Rec Init: Initialization of recurrent synaptic weights.}
    \label{tab:glob_hyperparameters}
\begin{tabular}{cc}
\hline
$\vartheta$        & $1$                                                                          \\
$\Delta t$         & $1$                                                                          \\
FF Init.           & $\mathcal{U}\left(-\sqrt{\frac{1}{\text{fan}_\text{in}}},\sqrt{\frac{1}{\text{fan}_\text{in}}}\right)$ \\
Rec Init.          & orthogonal \cite{saxe2014a}                                                           \\
\hline
\end{tabular}
\end{table}

\begin{table}[!t]
\centering
    \caption{\textbf{Task- and model-specific hyperparameters} lr: learning rate for ADAM optimizer, Ep.: number of training epochs, $\tau_\text{out}$: Membrane time constants of output layer (leaky integrator), dropout: dropout rate, \revision{$q$: coefficient of the reparameterization of $a$ and $b$ (see Eq.~\eqref{eq:reparam_a} and \eqref{eq:reparam_b}).}
    \label{tab:hyperparameters}}
\adjustbox{max width=\textwidth}{%
\begin{tabular}{cccccccccccccc}
\hline 
                     &          & lr    & \rotatebox{90}{\# neurons} & \rotatebox{90}{\# layers} & q  & \rotatebox{90}{$\alpha$ (SLAYER)}  & \rotatebox{90}{c (SLAYER)} & Ep. & \rotatebox{90}{[$\tau_u^\text{min}, \tau_u^\text{max}$]} & \rotatebox{90}{[$\tau_w^\text{min}, \tau_w^\text{max}$]} &  \rotatebox{90}{dropout} & $\tau_\text{out}$ & \rotatebox{90}{batch size} \\

                     \hline
\multirow{3}{*}{\text{SHD}} & \text{SE} & $0.01$  & $128 / 360$ & $1 / 2$ & $120$ & $5$ & $0.4$ & $300$   & $[5,25]$ & $[60,300]$ &  $15\%$ & $15$   & $256$ \\
                     & \text{EF} & $0.01$  & $360$       & $2$         & $60$ & $5$      & $0.4$    &  $300$   & $[5,25]$   & $[60,300]$                       & $15\%$              & $15$         & $256$      \\
                     & \text{LIF}      & $0.01$  & $360$       & $2$         & -   & $5$    & $0.1$    &  $300$   & $[5,150]$  & -                              & $15\%$              & $15$         & $256$     \\ \hline
                     
\multirow{3}{*}{\text{SSC}} & \text{SE} & $0.006$ & $720$       & $2$         & $120$ & $5$    & $0.4$       & $40$    & $[5,25]$   & $[60,300]$                          & $15\%$              & $15$           & $256$    \\
                     & \text{EF} & $0.006$ & $720$       & $2$         & $60$ & $5$     & $0.4$    & $40$    & $[5,25]$   & $[60,300]$                       & $15\%$              & $15$          & $256$     \\
                     & \text{LIF}      & $0.006$ & $720$       & $2$         & - & $5$      & $0.1$    &  $40$    & $[5,150]$  & -                               & $15\%$              & $15$      & $256$         \\ \hline
                     
\multirow{3}{*}{\text{ECG}} & \text{SE} & $0.01$  & $36$        & $1 / 2$     & $120$ & $5$    & $0.2$      & $400$   & $[5,25]$   & $[60,300]$                          & $15\%$              & $3$          & $64$     \\
                     & \text{EF} & $0.01$  & $36$        & $1$         & $60$ & $5$     & $0.2$    &  $400$   & $[5,25]$   & $[60,300]$                         & $15\%$              & $3$       & $64$        \\
                     & \text{LIF}      & $0.01$  & $36$        & $1$         & -    & $5$    & $0.1$    &  $400$   & $[5,150]$  & -                                & $15\%$              & $3$          & $64$     \\ \hline
                                                 
\multirow{2}{*}{\text{BSD}} & \text{SE} & $0.01$  & $512$       & $1$         & $120$ & $5$    & $0.4$      & $400$   & $[5,25]$   & $[60,300]$                       & $0\%$              & $15$             & $128$         \\
                     & \text{LIF}      & $0.006$ & $510$       & $1$         & - & $5$      & $0.2$    & $400$   & $[5,50]$   & -                          & $0\%$              & $15$          & $128$     \\ \hline
                     
\multirow{3}{*}{\text{spring-mass}} & \text{SE/EF} &   $0.01$    &    $[25, 3200]$       &    $1$      & $228$/$65$ & $5$        &   $0.4$     &   $200$    & $[5,25]$               &     $[60, 300]$                  &              $0\%$           &        $[1,20]$       & $256$  \\
                     & \text{LIF} &   $0.01$               &     $[27, 3202]$      &   $1$     & - & $10$      &  $0.5$     &    $200$   &    $[1,25]$      &  -                      &                  $0\%$        &        $[1,20]$   & $256$      \\
                     & \text{LSTM} & $0.001$          &   $[13, 1600]$        & $1$ & -  & - &   -   &   $200$    &     -     &          -             &           $0\%$             &       $[1,20]$  & $256$        \\ \hline
\end{tabular}
}
\end{table}

\begin{table}[!t]
\centering
\centering
    \caption{\revision{\textbf{Hyperparameters for the audio reconstruction task.} lr: learning rate for ADAM optimizer, which was reduced by a factor of $0.1$ if the SI-SNR on the validation set did not improve after a training epoch, but not further than $10^{-7}$, Ep.: number of training epochs, $\tau_\text{out}$: Membrane time constants of output layer (leaky integrator),  $q, \hat{a}, \hat{b}$: coefficient and clipping ranges of the reparameterization of $a$ and $b$ (see Eq.~\eqref{eq:reparam_a} and \eqref{eq:reparam_b}) respectively. 
    }}
    \label{tab:hyperparameters_audio_comp}
\begin{tabular}{ccccccccccccccc}
\hline 
    model &  lr & \rotatebox{90}{\# neurons per layer } & q & $\hat{a}$ & $\hat{b}$ & \rotatebox{90}{$\alpha$ (SLAYER)}  & \rotatebox{90}{c (SLAYER)} & Ep. & \rotatebox{90}{[$\tau_u^\text{min}, \tau_u^\text{max}$]} & \rotatebox{90}{[$\tau_w^\text{min}, \tau_w^\text{max}$]} & $\tau_\text{out}$ & \rotatebox{90}{batch size} \\ \hline         
    \text{SE-adLIF} & $5\cdot 10^{-4}$ &  $300$ & $120$ & $[0,5]$ & $$[0, 2]$$ & $5$ & $0.4$ & $10$ & $[5, 25]$ & $[30, 300]$ & $[1, 10]$ & $128$\\
    \text{EF-adLIF} & $5\cdot 10^{-4}$ &  $300$ & $20$ & $[0, 1]$  & $[0, 2]$ & $5$ & $0.4$ & $10$ & $[5, 25]$ & $[30, 300]$ & $[1, 10]$ & $128$ \\
    \text{LIF} & $5\cdot 10^{-4}$ & $302$ & $-$ & $-$ & $-$ & $5$ & $0.1$ & $10$ & $[5, 100]$ & $-$ & $[1,10]$ & $128$ \\
\end{tabular}
\end{table}

\section*{Data Availability}
% mandatory by Nat. Comm.!!

The SHD and SSC datasets are publicly available under \href{https://zenkelab.org/resources/spiking-heidelberg-datasets-shd/}{https://zenkelab.org/resources/spiking-heidelberg-datasets-shd/}. The QTDB ECG dataset is publicly available under \href{https://physionet.org/content/qtdb/1.0.0/}{https://physionet.org/content/qtdb/1.0.0/}. However, we used the preprocessed files from \cite{yinAccurateEfficientTimedomain2021}.
\section*{Code Availability}

The code is available under the CC BY-SA 4.0 license at the repository \href{https://github.com/IGITUGraz/SE-adlif}{https://github.com/IGITUGraz/SE-adlif}.
% mandatory by Nat. Comm.!!

\printbibliography

\section*{Acknowledgements}

This work has been supported by the “University SAL Labs” initiative of Silicon Austria Labs (SAL) and its Austrian partner universities for applied fundamental research for electronic based systems (M.B., R.F., S.S.), by the European Community's Horizon 2020 FET-Open Programme, grant number 899265, ADOPD (R.L., M.B., R.F), and by NSF EFRI grant \#2318152 (R.L.). This research was funded in whole or in part by the Austrian Science Fund (FWF) [10.55776/COE12] (R.L., M.B.). For the purpose of open access, the author has applied a CC BY public copyright licence to any Author Accepted Manuscript version arising from this
submission.
We thank Felix Effenberger and Sebastian Otte for valuable feedback on the manuscript.

\section*{Author Contributions}

M.B., R.F., and R.L. conceived the idea and proposed the research, M.B., R.F. and S.S. carried out experiments, M.B. and R.F. performed theoretical analyses, M.B., R.F., and R.L. wrote the paper.

\section*{Competing Interests}

The authors declare no conflict of interest.

\pagebreak
\appendix

\section*{\revision{Supplementary Notes 1}}
\label{supnotes1}
The SE discretization can be applied to other neuron models with bi-directional feedback between neuron state variables. We demonstrate its benefits on a successful recent neuron model, the balanced harmonic resonate-and-fire (BHRF) neuron \cite{higuchi2024balanced}. The neuron is described by continuous-time equations of a damped harmonic oscillator as
\begin{align}
    \dot{u}&=-2bu-\omega^2v+I \label{eq:bhrfcontu} \\
    \dot{v}&=u \label{eq:bhrfcontv},
\end{align}
with neuron states $u$ and $v$, input current $I$, frequency parameter $\omega$, and damping coefficient $d$. In contrast to the adLIF neuron model discussed in the main text, where we can compute the oscillation frequency and decay from the state transition matrix, the BHRF neuron model is directly parameterized by the frequency parameter $\omega$ and damping parameter $d$. The parameter $\omega$ thereby directly defines the natural oscillation frequency (which is the oscillation frequency in the absence of damping). Similarly, the damping parameter $d$ directly defines the rate of decay of the neuron. In the original work by Higuchi et al.~\cite{higuchi2024balanced}, the neuron model is discretised using the Euler Forward method to obtain the following discrete state update equations for discrete time step $k$:
\begin{align}
    u[k] &= u[k-1] + \delta \left(-bu[k-1]-\omega^2v[k-1]+I[k]\right) \\
    v[k] &= v[k-1] + \delta u[k-1], \label{eq:bhrf_ef_v}
\end{align}
where $\delta$ is the discretization time step and $I[k]$ is the time-dependent neuron input. The equations above are simplified by omitting some of the mechanisms from the discrete model proposed by the authors to reduce complexity. By using the same approach as described in the main text (Section \emph{Stability analysis of discretized adLIF models}), we can calculate the effective oscillation frequency $\omega_\text{eff}$ and effective damping coefficient $b_\text{eff}$ of the discrete BHRF neuron, and investigate how these two characteristics relate to the neuron parameters $\omega$ and $b$. The effective frequency $\omega_\text{eff}$ can thereby be obtained via the same method as described in the main text, and $b_\text{eff}$ can be computed from the decay rate $r$, by $b_\text{eff} = -\frac{\ln r}{\delta}$. The decay rate $r$ is given by the spectral radius of the state transition matrix of the discretised BHRF neuron (analogous to the method applied to the adLIF model in the main text). The relationship between parameters and effective neuron dynamics is shown in Supplementary Fig.~\ref{fig:bhrf_se}a, where we show $b_\text{eff}$ (top) and $\omega_\text{eff}$ (bottom) with respect to the neuron parameters. 

The top panel of Supplementary Fig.~\ref{fig:bhrf_se}a reveals that the identity-relationship between $b_\text{eff}$ and parameter $b$, which is an inherent property of the continuous-time BHRF neuron from Equations \eqref{eq:bhrfcontu} and \eqref{eq:bhrfcontv}, is disrupted by the Euler-Forward discretization. The same disruption can be observed for $\omega_\text{eff}$ and $\omega$ in the bottom panel. As we show below, this disruption can be entirely eliminated, if instead of the Euler-Forward discretization the SE-discretization is applied to the BHRF. We can apply the SE discretization to the BHRF by simply replacing $u^{t-1}$ with $u^t$ in Eq.~\eqref{eq:bhrf_ef_v}, resulting in the update equation
\begin{equation}
    v[k] = v[k-1] + \delta u[k],
\end{equation}
which we refer to as SE-BHRF in the following.
This minor change in the discretised eqution rescues the identity-mapping between parameter $b$ and effective damping $b_\text{eff}$ (Supplementary Fig.~\ref{fig:bhrf_se}b, top), as well as the near-identity-mapping between parameter $\omega$ and effective oscillation frequency $\omega_\text{eff}$ (Supplementary Fig.~\ref{fig:bhrf_se}b, bottom) from the continuous model. Note, that for very low frequencies and sufficiently high damping, a damped harmonic oscillator is overdamped and hence does not oscillate, resulting in the triangle-shaped anomaly region in the top-left corner of the $\omega$-$b$ plot shown in the top row of Fig.~\ref{fig:bhrf_se}. Furthermore, it can be observed that the contour lines in the bottom panels of Supplementary Fig.~\ref{fig:bhrf_se} are not exactly vertical. This results from the slight influence of $b$ on $\omega_\text{eff}$, a naturally occurring phenomenon in a damped harmonic oscillator. Applying the Euler Forward discretization additionally introduces a region of unstable neuron dynamics, that can be related to our observations of the EF-adLIF model shown in Fig.~3 in the main text. The authors of \cite{higuchi2024balanced} account for this unstable region by computing $b$ as a relative offset from the divergence boundary (the boundary of the unstable region), but as shown in Supplementary Fig.~\ref{fig:bhrf_se}, such region does not exist for the SE-BHRF, rendering additional stability measures such as the relative offset obsolete. 

\section*{Supplementary Figure 1}
\label{supfig1}
\nopagebreak
\begin{figure}[H]
    \centering
    \includegraphics[width=\textwidth]{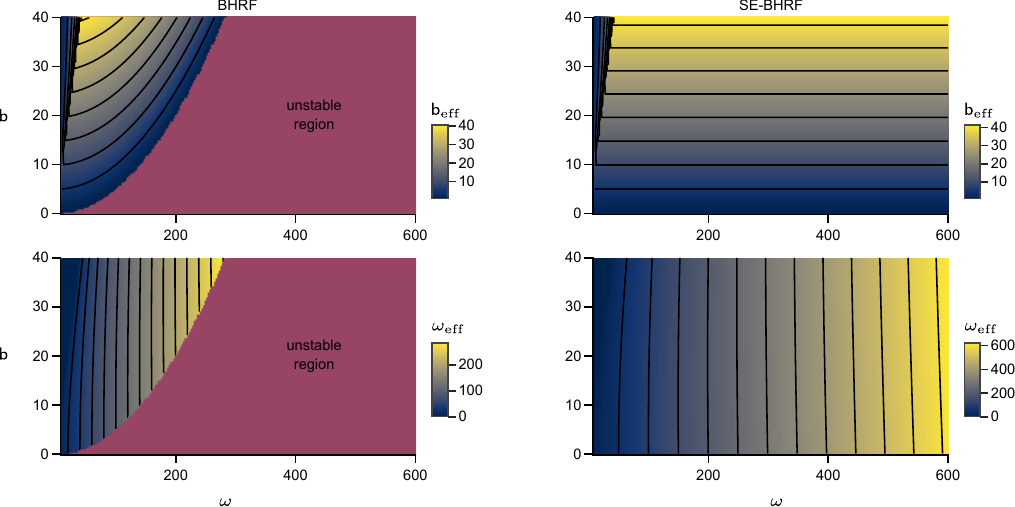}
    \caption{\textbf{Comparion between the Euler-Forward and Symplectic-Euler discretization method for the Balanced Harmonic Resonate-and-fire (BHRF) \cite{higuchi2024balanced} neuron model.} \textbf{(a)} Relationship between the BHRF-parameters $\omega$ and $b$ and the effective damping coefficient $b_\text{eff}$ (top panel) and the effective oscillation frequency $\omega_\mathrm{eff}$ (bottom panel) of the Euler-Forward-discretised BHRF model (details see text). The solid-colored area denotes the unstable region, where the neuron model is diverging. \textbf{(b)} Same as (a), but for the Symplectic-Euler-discretised BHRF model. Details see text. }
    \label{fig:bhrf_se}
\end{figure}

\section*{Supplementary Notes 2}
\label{supnotes2}
In the SHD, SSC, and ECG experiments with SE-adLIF neurons, we used the same parameter range as in Fig.~3c: $a \in [0,120]$, $\tau_u \in [5,25]$ ms and $\tau_w \in [60,300]$ ms. As shown in the same panel, this parameter range is unstable for EF-adLIF, hence we reduced the range of $a$ to $[0,60]$ for all experiments with EF-adLIF to ensure a decay rate $r<1$ for all parameter configurations. If the larger range of $a \in [0,120]$ is used for EF-adLIF, we can observe instabilities during training, resulting in significant degradation of performance. We report the performance drop on the ECG dataset in Supplementary Table \ref{tab:stab_ef_adlif}. Within these experiments, we observed exploding gradients during training in many of the runs with the extended range of $a$, which were not present for SE-adLIF with the same parameter range. Partially, these exploding gradients can be alleviated by using gradient clipping, resulting in slightly better performance. This experiment demonstrates the sensitivity of EF-adLIF stability to choices of parameter ranges. Note, that this sensitivity can directly be linked to the relation between neuron parameters and eigenvalues of $\eulerA$, as shown in Fig.~3.

\section*{Supplementary Table 1}
\label{suptab1}
\begin{table}[!h]
\centering
\begin{tabular}{clllll}
 \text{range of} $a$ & Grad. clip. & \text{Test Acc. [\%]} \\ \hline

 $[0,60]$ & \cmark & $87.10 \pm 0.46\%$ \\
 $[0,120]$ & \cmark & $70.66 \pm 24.90\%$ \\ 
 $[0,120]$ & \xmark & $55.62 \pm 24.84\%$ \\  \hline
\end{tabular}
\caption{\textbf{Instabilities of EF-adLIF for the ECG task.} Test accuracy for different ranges of parameter $a$ for EF-adLIF. Grad. clip. denotes rescaling the gradient to a norm of $1.5$ if exceeded. Table shows mean and std. dev. over $5$ runs. Chance level in this task is $1/6 = 16.67\%$. Note the high variance of accuracies in the less constrained cases.}
\label{tab:stab_ef_adlif}
\end{table}

% \clearpage

\section*{Supplementary Notes 3}
\label{supnotes3}
In \cite{gerstner2014neuronal}, the authors show that adLIF models can, depending on their parametrization, account for many experimentally observed neocortical neuron spiking behaviors. It remained an open question whether these different neuronal dynamics and firing patterns could be beneficial in sequence processing tasks. To that end, we analyzed adLIF networks trained on the SHD task and observed a vast heterogeneity in the evolved neuron parameterizations, see Supplementary Fig.~\ref{fig:neuron_analysis}. 

We categorized the observed neurons based on features derived from their parameters and properties. One first distinction that can be made is whether the neurons exhibit integratory (overdamped, type I) or oscillatory membrane potentials (underdamped, type II), shown in Supplementary Fig.~\ref{fig:neuron_analysis}a. We found that most of the neurons ($87\%$) showed underdamped behavior. Most of these underdamped neurons converged to rather slow frequencies of $f<50$~Hz, Supplementary Fig.~\ref{fig:neuron_analysis}b. As shown in Fig.~2d and e, these slow-frequency neurons can act as feature detectors for input variations much slower than the spike rates. Another categorization can be made based on the parameter $b$: independent of oscillatory properties, this parameter determines the strength of feed-back from neuron spikes to the adaptation variable $w$. Interestingly, neurons with diverse degrees of such spike-triggered adaptation occured across the entire frequency range (color in Supplementary Fig.~\ref{fig:neuron_analysis}b). The distribution of $b$ was rather uniform, with two peaks where we clipped the parameter at the minimum and maximum range, representing weak and strong spike-triggered adaptation respectively. Interestingly, overdamped neurons acted on much longer timescales (see Supplementary Fig.~\ref{fig:neuron_analysis}d) than underdamped ones. This figure shows the distribution of the effective time constant $\tau_\mathrm{eff}  = -\frac{\Delta t}{\ln r}$, which is analogous to the membrane time constant of LIF neurons.

The heterogeneous combinations of the spike-triggered adaptation governed by parameter $b$ and the oscillatory behavior governed by parameter $a$ result in diverse non-linear super- and sub-threshold neuron dynamics (Supplementary Fig.~\ref{fig:neuron_analysis}e). Based on the overdamped-underdamped split and the distinction into high and low $b$-values, we selected four neurons from the network for further analysis. These four classes are hence given by overdamped-low-$b$ (OL) neurons, that are equivalent to LIF neurons, overdamped-high-$b$ (OH), underdamped-low-$b$ (UL) and underdamped-high-$b$ (UH) neurons. For example, overdamped neurons with high $b$ exhibit a hyper-polarization, similar to a relative refractory period after emitting a spike (2\textsuperscript{nd} row, 1\textsuperscript{st} column). A second example of an interesting learned behavior is given by underdamped neurons with low spike-triggered adaptation (class UL), which respond with an immediate short burst of action potentials to a positive step in the input current, but with a delayed short burst to a negative step (3\textsuperscript{rd} row, 2\textsuperscript{nd} column). Another intriguing property is the interference of spike-triggered adaptation and oscillations, in which an output spike of an UH-neuron could cancel the membrane potential oscillation (4\textsuperscript{th} row, 3\textsuperscript{rd} column). Such properties don't exist in LIF neurons (class OL), which only spike in response to sufficient input current, without any complex dynamics. In summary, our analysis revealed that recurrent networks of adLIF neurons exhibit a diverse set of neuronal dynamics when trained for optimal performance on a temporal classification task. This diversity is reminiscent of the diversity of neuronal behaviors found in biological neuronal networks \cite{izhikevich2001resonate,gerstner2014neuronal}.

\section*{Supplementary Figure 2}
\label{supfig2}
\nopagebreak
\begin{figure}[H]
    \centering
    \includegraphics[width=\textwidth]{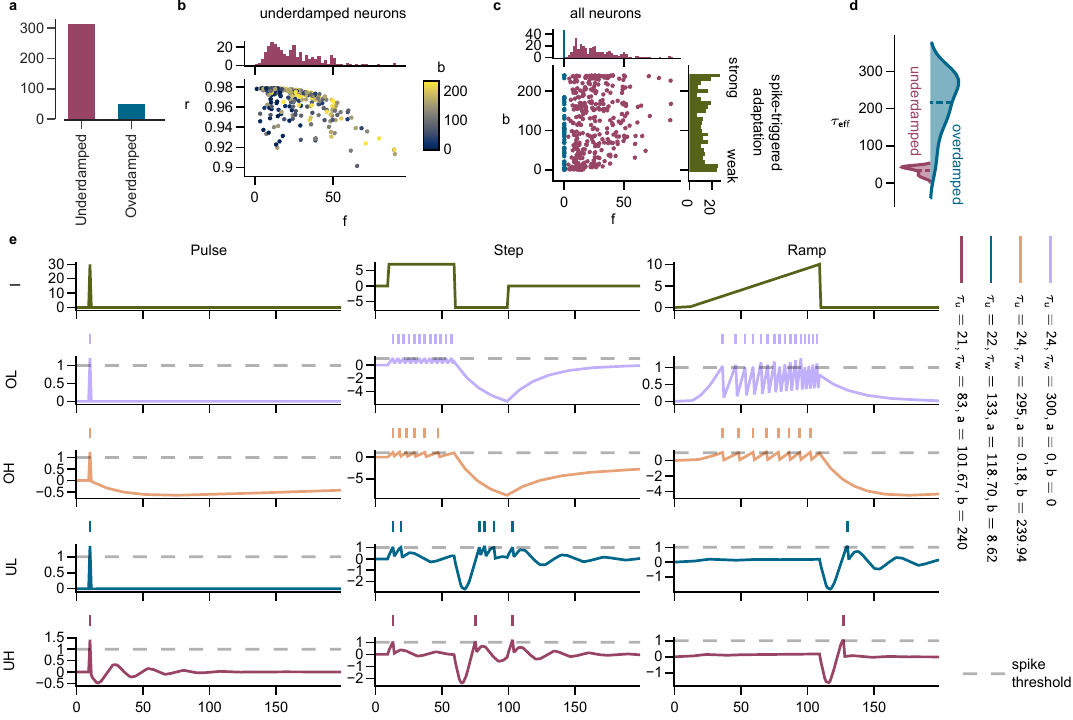}
    \caption{\textbf{Heterogeneity of adLIF neurons in a network trained on SHD} For the analysed network we used the same architecture as for the results reported in Table 2. The network achieved an accuracy of $97.08\%$ on the SHD data set. \textbf{a)} Comparison of the amount of underdamped and overdamped neurons in the network after training. \textbf{b)} Scatter plot of underdamped neurons with respect to their frequency $f$ (x-axis), decay rate $r$ (y-axis) and spike-triggered adaptation parameter $b$ (color). Histogram shows the marginal distribution, summed over the y-axis. \textbf{c)} Scatter plot of all neurons with respect to their intrinsic frequency $f$ and parameter $b$. Histograms show marginal distributions over the corresponding axes. \textbf{d)} Density plot of the distribution of effective time constants $\tau_\mathrm{eff} = -\frac{\Delta t}{\ln r}$ for underdamped (left) and overdamped (right) neurons. Horizontal lines denote the mean. \textbf{e)} Responses of different classes of adLIF neuron parameterizations to different input stimuli: pulse, step and ramp. Parameter $b$ controls the strength of spike-triggered adaptation. The four classes are OL: overdamped with low $b$, equivalent to a LIF neuron, OH: overdamped with high $b$, UL: underdamped with low $b$, and UH: underdamped with high $b$. The plots show the membrane potential over time in response to the input current in the first row. Vertical lines show output spikes of the neuron. All neuron parameterizations were taken from the same network as panels a-d. }
    \label{fig:neuron_analysis}
\end{figure}

\section*{Supplementary Figure 3}
\label{supfig3}
\nopagebreak
\begin{figure}[H]
    \centering
    \includegraphics[width=0.3\textwidth]{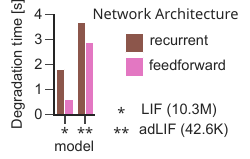}
    \caption{\textbf{Ablation study for "Accurate prediction of dynamical systems trajectories"} Comparison of LIF and adLIF networks trained with and without recurrent connections. Degradation time denotes the time when the model MSE reaches the MSE of the constant model.}
\end{figure}

\section*{Supplementary Notes 4}
\label{supnotes4}
We evaluated recurrent spiking networks with a modified variant of the ALIF neuron \cite{yinAccurateEfficientTimedomain2021} on the BSD task. For the case of $20$ classes this model achieved a test accuracy of $86.53 \pm 1.01\%$ over $10$ runs. For these experiments, we used the same model as in the paper from Yin et al. \cite{yinAccurateEfficientTimedomain2021} except for the following modifications: Neuron time constants $\tau_m$ and $\tau_\text{adp}$ were trained with the reparametrization described in Section "Training and Hyperparameter Search Details for all Tasks" in \meth~and the initialization was the same as for our other experiments (see Table 7). We applied the optimization-based feature visualization from Section "Networks of adLIF neurons tune to high-fidelity temporal features" to an ALIF network trained on the $20$-class BSD task and show the resulting feature maps in Supplementary Fig.~\ref{fig:supp_feature_vis_alif}. The hyperparameters for the ALIF model for the BSD task are shown in Supplementary Table \ref{tab:supp_hyperparams_alif_bsd}.

\section*{Supplementary Figure 4}
\label{supfig4}
\nopagebreak
\begin{figure}[H]
    \centering
    \includegraphics[width=355pt]{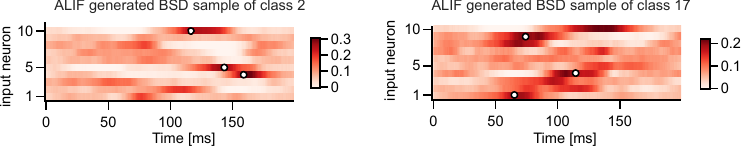}
    \caption{\textbf{Gradient-based feature visualization applied to ALIF models.} The feature maps of the ALIF network with highest accuracy on the $20$-class BSD task for a target class of $2$ (left) and $17$ (right). }
    \label{fig:supp_feature_vis_alif}
\end{figure}

\section*{Supplementary Table 2}
\label{suptab2}
\nopagebreak
\begin{table}[H]
\centering
\adjustbox{max width=\textwidth}{%
\begin{tabular}{ccccccccccccccc}
\hline 
                     & lr    & \rotatebox{90}{\# neurons} & \rotatebox{90}{\# layers} & q  & \rotatebox{90}{$\alpha$ (SLAYER)}  & \rotatebox{90}{c (SLAYER)} & Ep. & \rotatebox{90}{[$\tau_u^\text{min}, \tau_u^\text{max}$]} & \rotatebox{90}{[$\tau_\text{adp}^\text{min}, \tau_\text{adp}^\text{max}$]} &  \rotatebox{90}{dropout} & $\tau_\text{out}$ & \rotatebox{90}{batch size} \\

                     \hline

& $0.008$  & $511$       & $1$         & $120$ & $5$    & $0.4$      & $400$   & $[5,100]$   & $[60,200]$              &  $0\%$              & $15$             & $128$         \\\hline
\end{tabular}
}
    \caption{\textbf{Hyperparameters of the ALIF model for the BSD task.} lr: learning rate for ADAM optimizer, Ep.: number of training epochs, S-reg: Coefficient for spike regularization, $\tau_\text{out}$: Membrane time constants of output layer (leaky integrator), dropout: dropout rate}
    \label{tab:supp_hyperparams_alif_bsd}
\end{table}
\printbibliography
\end{document}